\def\tsc#1{\csdef{#1}{\textsc{\lowercase{#1}}\xspace}}
\newtheorem{proposition}{Proposition}[section]
\newtheorem{theorem}{Theorem}[section]
\newtheorem{remark}{Remark}[section]
\newtheorem{definition}{Definition}[section]
\newcommand\BibTeX{{\rmfamily B\kern-.05em \textsc{i\kern-.025em b}\kern-.08em
T\kern-.1667em\lower.7ex\hbox{E}\kern-.125emX}}
\DeclareMathOperator*{\argmin}{arg\,min}
\newcommand{\revise}[1]{\textcolor{black}{#1}}
\def\tsc#1{\csdef{#1}{\textsc{\lowercase{#1}}\xspace}}
\algnewcommand{\lst}{\texttt{lst}}
\algnewcommand{\slst}{\texttt{slst}}
\algnewcommand{\SEND}{\textbf{send}}
\newsavebox{\alglefta}
\newsavebox{\algrighta}
\newsavebox{\algleftb}
\newsavebox{\algrightb}
\begin{document}

\let\WriteBookmarks\relax
\def\floatpagepagefraction{1}
\def\textpagefraction{.001}

% Short title
\shorttitle{A Hyper-Transformer Model for Controllable Pareto Front Learning with Split Feasibility Constraints}

% Short author
\shortauthors{Tran Anh Tuan et~al.}

% Main title of the paper
\title [mode = title]{A Hyper-Transformer model for Controllable Pareto Front Learning with Split Feasibility Constraints}                      
% Title footnote mark
% eg: \tnotemark[1]
% \tnotemark[1,2]

% Title footnote 1.
% eg: \tnotetext[1]{Title footnote text}
% \tnotetext[<tnote number>]{<tnote text>} 
% \tnotetext[1]{This document is the results of the research
%    project funded by the National Science Foundation.}

% \tnotetext[2]{The second title footnote which is a longer text matter
%    to fill through the whole text width and overflow into
%    another line in the footnotes area of the first page.}

% First author
%
% Options: Use if required
% eg: \author[1,3]{Author Name}[type=editor,
%       style=chinese,
%       auid=000,
%       bioid=1,
%       prefix=Sir,
%       orcid=0000-0000-0000-0000,
%       facebook=<facebook id>,
%       twitter=<twitter id>,
%       linkedin=<linkedin id>,
%       gplus=<gplus id>]
\author[1]{Tran Anh Tuan}[type=editor,
                        auid=000,bioid=1,
                        % role=Researcher,
                        orcid=0000-0001-6287-0173]

% Corresponding author indication
% \cormark[1]

% Footnote of the first author
%\fnmark[1]

% Email id of the first author
\ead{tuan.ta222171m@sis.hust.edu.vn}

% URL of the first author
% \ead[url]{www.cvr.cc, cvr@sayahna.org}

%  Credit authorship
% \credit{Conceptualization of this study, Methodology, Software}

% Address/affiliation
\affiliation[1]{organization={Faculty of Mathematics and Informatics, Hanoi University of Science and Technology},
            city={Hanoi},
            country={Vietnam}}

% Second author
\author[1]{Nguyen Viet Dung}[style=vietnamese]
\ead{dung.nv232215m@sis.hust.edu.vn}
% Third author
\author[1]{Tran Ngoc Thang}[]
\cormark[1]
\ead{thang.tranngoc@hust.edu.vn}
% Corresponding author text
\cortext[1]{Corresponding Author}

% Address/affiliation
% \affiliation[2]{organization={Sayahna Foundation},
%     % addressline={}, 
%     city={Jagathy},
%     % citysep={}, % Uncomment if no comma needed between city and postcode
%     postcode={695014}, 
%     state={Trivandrum},
%     country={India}}

% Fourth author
% \author%
% [1,3]
% {Rishi T.}
% \cormark[2]
% \fnmark[1,3]
% \ead{rishi@stmdocs.in}
% \ead[URL]{www.stmdocs.in}

% \affiliation[3]{organization={STM Document Engineering Pvt Ltd.},
%     addressline={Mepukada}, 
%     city={Malayinkil},
%     % citysep={}, % Uncomment if no comma needed between city and postcode
%     postcode={695571}, 
%     state={Trivandrum},
%     country={India}}

% Corresponding author text
% \cortext[cor1]{Corresponding author}
% \cortext[cor2]{Principal corresponding author}

% Footnote text
% \fntext[fn1]{This is the first author footnote. but is common to third
%   author as well.}
% \fntext[fn2]{Another author footnote, this is a very long footnote and
%   it should be a really long footnote. But this footnote is not yet
%   sufficiently long enough to make two lines of footnote text.}

% % For a title note without a number/mark
% \nonumnote{This note has no numbers. In this work we demonstrate $a_b$
%   the formation Y\_1 of a new type of polariton on the interface
%   between a cuprous oxide slab and a polystyrene micro-sphere placed
%   on the slab.
%   }

% Here goes the abstract
\begin{abstract}
Controllable Pareto front learning (CPFL) approximates the Pareto solution set and then locates a Pareto optimal solution with respect to a given reference vector. However, decision-maker objectives were limited to a constraint region in practice, so instead of training on the entire decision space, we only trained on the constraint region. Controllable Pareto front learning with Split Feasibility Constraints (SFC) is a way to find the best Pareto solutions to a split multi-objective optimization problem that meets certain constraints. In the previous study, CPFL used a Hypernetwork model comprising multi-layer perceptron (Hyper-MLP) blocks. With the substantial advancement of transformer architecture in deep learning, transformers can outperform other architectures in various tasks. Therefore, we have developed a hyper-transformer (Hyper-Trans) model for CPFL with SFC. We use the theory of universal approximation for the sequence-to-sequence function to show that the Hyper-Trans model makes MED errors smaller in computational experiments than the Hyper-MLP model.
\end{abstract}
% Use if graphical abstract is present
\begin{graphicalabstract}
\centering
\includegraphics[width=16.5cm]{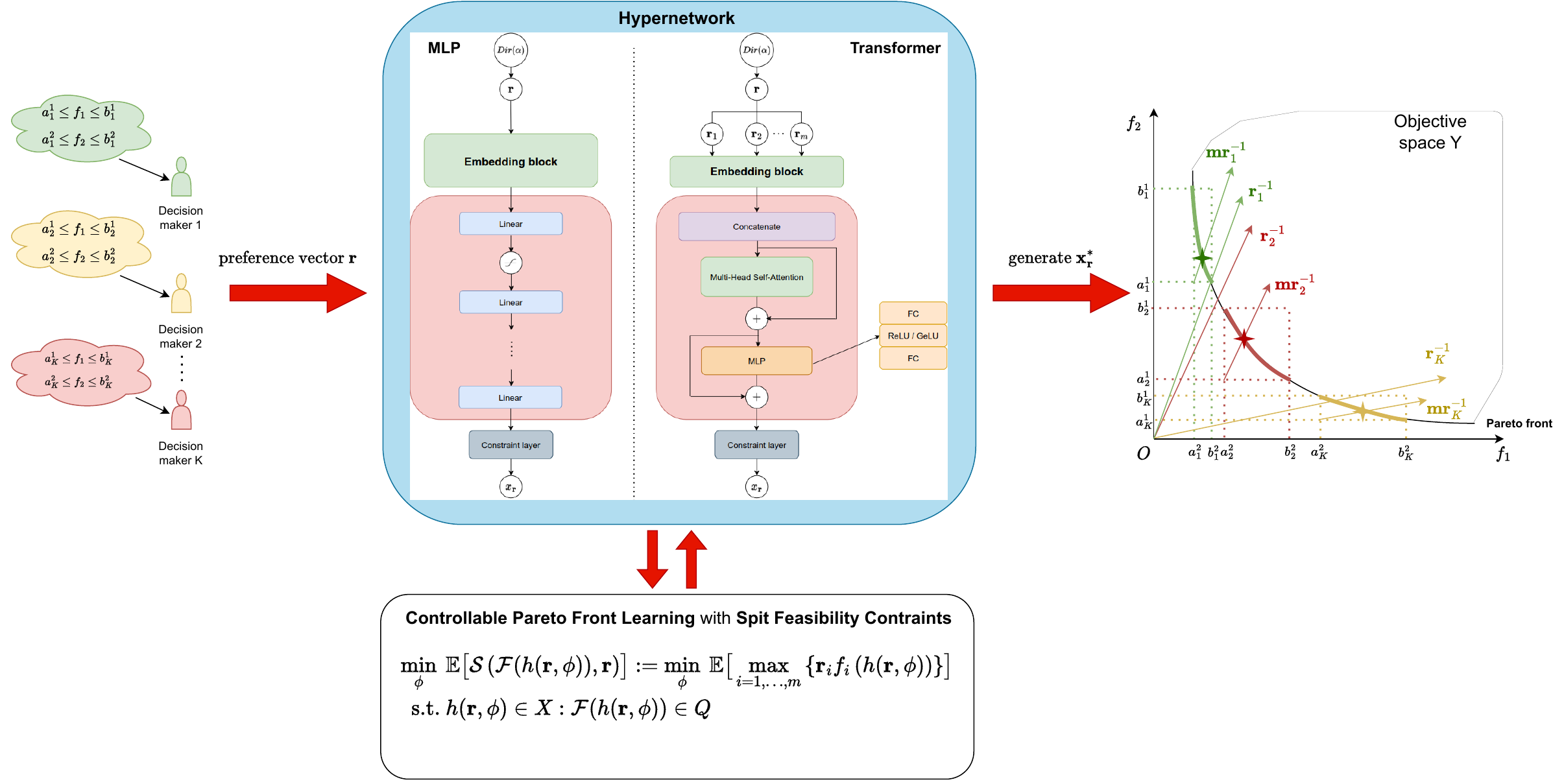}

\textbf{Graphical Abstract:} \textit{Controllable Pareto Front Learning} (CPFL) approximates the entire or part of the Pareto front and helps map a preference vector to a Pareto optimal solution, respectively. By using a deep neural network, CPFL adjusts the learning algorithm to prioritize specific objectives over others or to explore the Pareto front based on an extra criterion function. Add-in \textit{Split Feasibility Constraints} (SFC) in the training process by \textit{Hypernetwork} allows the model to precisely reflect the decision maker's preferences on the Pareto front topology.
\end{graphicalabstract}
% Use if graphical abstract is present
% \begin{graphicalabstract}
% \includegraphics{figs/grabs.pdf}
% \end{graphicalabstract}

% Research highlights
\begin{highlights}
\item A \textbf{Hyper-Transformer} model for \textbf{Controllable PFL}.
\item \textbf{Controllable PFL} with \textbf{Split Feasibility Constraints}.
\item Learning Disconnected PF with \textbf{Joint Input} and \textbf{MoEs}.
\item Considerable experiments with \textbf{MOO} and \textbf{MTL} problems.
\end{highlights}

% Keywords
% Each keyword is seperated by \sep
% Each keyword is seperated by \sep
\begin{keywords}
Multi-objective optimization \sep Controllable Pareto front learning \sep Transformer \sep Hypernetwork \sep Split feasibility problem
\end{keywords}

\maketitle

\section{Introduction}
\textbf{Multi-objective optimization (MOO)}, an advanced solution for modern optimization problems, is increasingly driven by the need to find optimal solutions in real-world situations with multiple criteria. Addressing the complex trade-offs inherent in decision-making problems resolves the challenges of simultaneously optimizing conflicting objectives on a shared optimization variable set. The advantages of MOO have been recognized in several scientific domains, including chemistry \citep{cao2019multi}, biology \citep{lambrinidis2021multi}, and finance, specifically investing \citep{vuong2023optimizing}. Specifically, its recent accomplishment in deep multitask learning \citep{sener2018multi} has attracted attention.

\textbf{Split Feasibility Problem (SFP)} is an idea that \cite{censor1994multiprojection} initially proposed. It requires locating a point in a nonempty closed convex subset in one space whose image is in another nonempty closed convex subset in the image space when subjected to a particular operator. While projection algorithms that are frequently employed have been utilized to solve SFP, they face challenges associated with computation, convergence on multiple sets, and strict conditions. The SFP is used in many real-world situations, such as signal processing, image reconstruction \citep{Stark1998vector, Byrne2003unified}, and intensity-modulated radiation therapy \citep{Censor2005multiple, Brooke2021dynamic}.

% Finding limited Pareto optimal solutions for the MOO problem using SFP constraints has been unheard of until now. 
Previous methods tackled the entire Pareto front; one must incur an impracticably high cost due to the exponential growth in the number of solutions required in proportion to the number of objectives. Several proposed algorithms, such as evolutionary and genetic algorithms, aim to approximate the Pareto front or partially \citep{jangir2021elitist}. Despite the potential these algorithms have shown, only small-scale tasks \citep{ murugan2009nsga} can use them in practice.  Moreover, these methods limit adaptability because the decision-maker cannot flexibly adjust priorities in real-time. After all, the corresponding solutions are only sometimes readily available and must be recalculated for optimal performance \citep{lin2019pareto, mahapatra2021exact, pmlr-v162-momma22a}.

Researchers have raised recent inquiries regarding the approximability of the solution to the priority vector. While prior research has suggested using a hypernetwork to approximate the entire Pareto front \citep{lin2020controllable, navon2020learning,hoang2022improving}, Pareto front learning (PFL) algorithms are incapable of generating solutions that precisely match the reference vectors input into the hypernetwork. The paper on controllable Pareto front learning with complete scalarization functions \citep{tuan2023framework} explains how hypernetworks create precise connections between reference vectors and the corresponding Pareto optimal solution. The term "controllable" refers to the adjustable trade-off between objectives with respect to the reference vector. In such a way, one can find an efficient solution that satisfies his or her desired trade-off. Before our research, \citep{Raychaudhuri2022} exploited hypernetworks to achieve a controllable trade-off between task performance and network capacity in multi-task learning. The network architecture, therefore, can dynamically adapt to the compute budget variation. \cite{Chen2023} suggests a controllable multi-objective re-ranking (CMR) method that uses a hypernetwork to create parameters for a re-ranking model based on different preference weights. In this way, CMR can adapt the preference weights according to the changes in the online environment without any retraining. These approaches, however, only apply to the multi-task learning scenario and require a complicated training paradigm. Moreover, they do not guarantee the exact mapping between the preference vector from user input and the optimal Pareto point.

 \begin{figure*}[ht]
     \centering
        \begin{subfigure}[b]{0.32\textwidth}
         \centering
        \includegraphics[width=\textwidth]{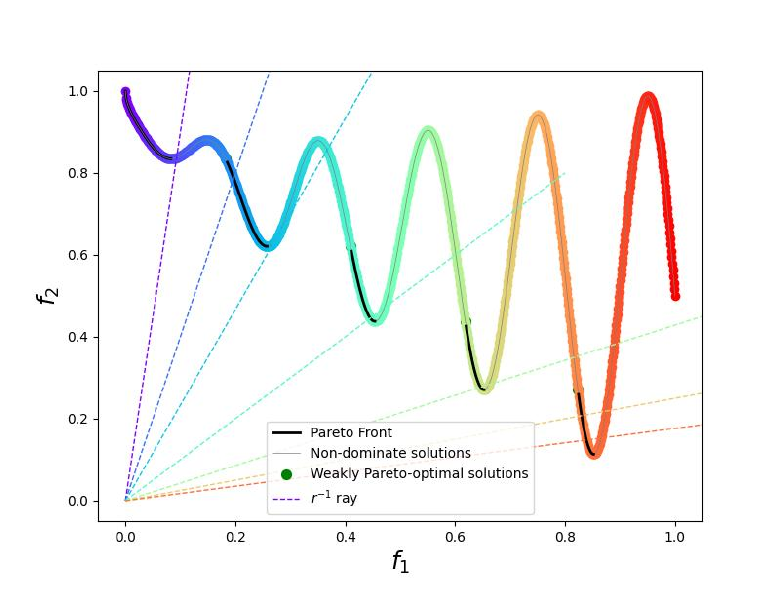}
         %\label{fig:five over x}
         \caption{Pareto Front Learning}
     \end{subfigure}
     \hfill
     \begin{subfigure}[b]{0.32\textwidth}
         \centering
        \includegraphics[width=\textwidth]{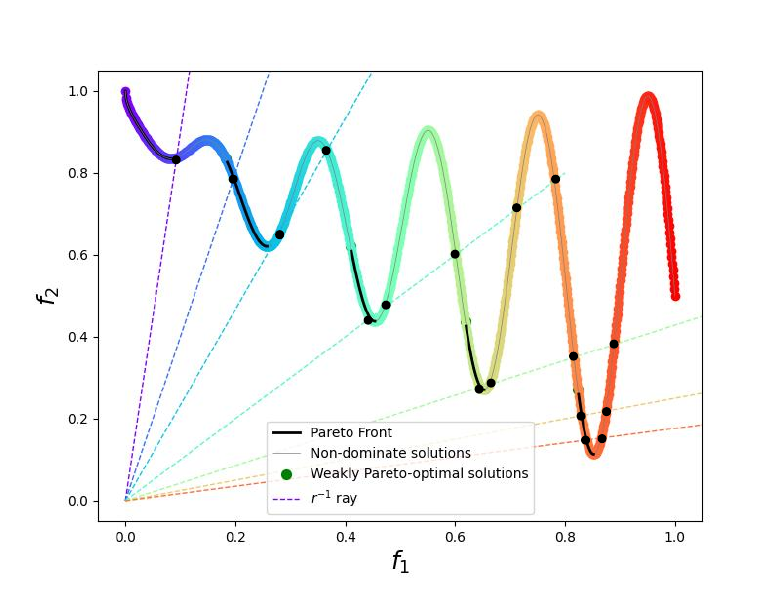}
         %\label{fig:y equals x}
         \caption{Controllable Pareto Front Learning}
     \end{subfigure}
     \hfill
     \begin{subfigure}[b]{0.32\textwidth}
         \centering
        \includegraphics[width=\textwidth]{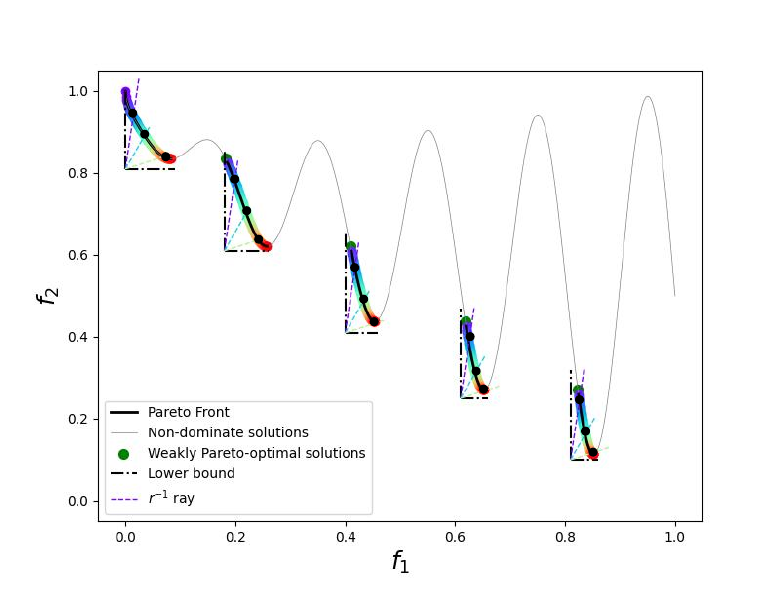}
         %\label{fig:three sin x}
         \caption{Controllable PFL with SFC}
     \end{subfigure}

     \caption{\textbf{\textit{Left:}} Pareto Front Learning by Hypernetwork, which is used to approximate the entire Pareto front, including non-dominated solutions. \textbf{\textit{Middle:}} Controllable Pareto Front Learning with Completed Scalarization Function uses a single Hypernetwork model, mapping any given preference vector to its corresponding solution on the Pareto front; these solutions may not be unique. \textbf{\textit{Right:}} Controllable Disconnected Pareto Front Learning with Split Feasibility Constraints by a Robust Hypernetwork helps avoid non-dominated solutions.}
\end{figure*}

Primarily, problems involving entirely connected Pareto fronts are the focus of the current research. Unfortunately, this is unrealistic in real-world optimization scenarios \citep{ishibuchi2019regular}, whereas the performance can significantly deteriorate when the PF consists of disconnected segments. If we use the most recent surrogate model's regularity information, we can see that the PFs of real-world applications are often shown as disconnected, incomplete, degenerated, and badly scaled. This is partly because the relationships between objectives are often complicated and not linear. \cite{chen2023data} proposed a data-driven EMO algorithm based on multiple-gradient descent to explore promising candidate solutions. It consists of two distinctive components: the MGD-based evolutionary search and the Infill criterion. While the D2EMO/MGD method demonstrated strong performance on specific benchmarking challenges involving unconnected PF segments, it needs more computational efficiency and flexibility to meet real-time system demands. In our research, we developed two different neural network architectures to help quickly learn about disconnected PF problems with split feasibility constraints.

The theories behind the hyper-transformers we made for controllable Pareto front learning with split feasibility constraints are well known \citep{tuan2023framework,yun2019transformers,jiang2023brief}. This is because deep learning models are still getting better. Transformers are a type of neural network architecture that has helped a lot with computer vision \citep{Dosovitskiy2020image}, time series forecasting \citep{DEIHIM2023549, SHEN2023953}, and finding models for few-shot learning research \citep{zhmoginov2022hypertransformer}. This progress is attributed mainly to their renowned self-attention mechanism \citep{vaswani2017attention}. A Transformer block has two layers: a self-attention layer and a token-wise feed-forward layer. Both tiers have skip connections. Inside the Recurrent Neural Networks (RNNs) framework, \citep{bahdanau2014neural} first introduced the attention mechanism. Later on, it was used in several actual network topologies. The attention mechanism, similar to the encoder-decoder mechanism, is a module that may be included in current models.

Our main contributions include:
\begin{itemize}
    \item In this study, we express a split multi-objective optimization problem. From there, we focus on solving the controllable Pareto front learning problem with split feasibility constraints based on scalarization theory and the split feasibility problem. In reality, when decision-makers want their goals to be within the area limited by bounding boxes, this allows them to control resources and provide more optimal criteria for the Pareto solution set.
    \item We propose a novel hypernetwork architecture based on a transformer encoder block for the controllable Pareto front learning problem with split feasibility constraints. Our proposed model shows superiority over MLP-based designs for multi-objective optimization and multi-task learning problems.
    \item We also integrate joint input and a mix of expert architectures to enhance the hyper-transformer network for learning disconnected Pareto front. This helps bring great significance to promoting other research on the controllable disconnected Pareto front of the hypernetwork.
\end{itemize}
Summarizing, the remaining sections of the paper are structured in the following manner: Section 2 will provide an overview of the foundational knowledge required for multi-objective optimization. Section 3 presents the optimization problem over the Pareto set with splitting feasibility box constraints. Section 4 describes the optimization problem over the Pareto set as a controllable Pareto front learning problem using Hypernetwork, and we also introduce a Hypernetwork based on the Transformer model (Hyper-Transformer). Section 5 explains the two fundamental models used in the Hyper-Transformer architecture within Disconnected Pareto Front Learning. Section 6 will detail the experimental synthesis, present the results, and analyze the performance of the proposed model. The last section addresses the findings and potential future endeavors. In addition, we provide the setting details and additional experiments in the appendix.
\section{Preliminaries}
Multi-objective optimization aims to find $\mathbf{x}\in X$ to optimize $m$ objective functions:
\begin{align*}\tag{MOP}\label{MOP}
\underset{\mathbf{x}\in X}{\min}\text{ }\mathcal{F}(\mathbf{x}),
\end{align*}
where $\mathcal{F}(\cdot): X\to Y\subset\mathbb{R}^m, \mathcal{F}(\mathbf{x}) = \{f_1\left(\mathbf{x}\right),\dots, f_m\left(\mathbf{x}\right) \}$, $X\subset\mathbb{R}^{n}$ is nonempty convex set, and objective functions $f_i(\cdot): \mathbb{R}^n \rightarrow \mathbb{R}$, $i= 1,\dots,m$ are convex functions and bounded below on $X$. We denote $Y:=\mathcal{F}(X)=\{\mathbf{y}\in \mathbb{R}^m |\exists \mathbf{x}\in \mathbb{R}^n,\mathcal{F}(\mathbf{x})=\mathbf{y}\}$ the outcome set or the value set of Problem \eqref{MOP}. 

\begin{definition}[Dominance] A solution $\mathbf{x}_1$ dominates $\mathbf{x}_2$ if $f_i\left(\mathbf{x}_1\right) \leq f_i\left(\mathbf{x}_2\right), \forall i$ and $f_i\left(\mathbf{x}_1\right) \neq f_i\left(\mathbf{x}_2\right)$. Denote $\mathcal{F}\left(\mathbf{x}_1\right) \prec \mathcal{F}\left(\mathbf{x}_2\right)$.
%\label{definition2.1}
\end{definition}

\begin{definition}[Pareto optimal solution] A solution $\mathbf{x}_1$ is called Pareto optimal solution (efficient solution) if $\nexists \mathbf{x}_2: \mathcal{F}\left(\mathbf{x}_2\right) \preceq \mathcal{F}\left(\mathbf{x}_1\right)$.
%\label{definition2.2}
\end{definition}

\begin{definition}[Weakly Pareto optimal solution] A solution $\mathbf{x}_1$ is called weakly Pareto optimal solution (weakly efficient solution) if $\nexists \mathbf{x}_2: \mathcal{F}\left(\mathbf{x}_2\right) \prec \mathcal{F}\left(\mathbf{x}_1\right)$.
%\label{def:weaklyparetooptimality}
\end{definition}

\begin{definition}[Pareto stationary] A point $\mathbf{x}^{*}$ is called Pareto stationary (Pareto critical point) if $\nexists d \in X:\left\langle J \mathcal{F}\left(\mathbf{x}^{*}\right), d\right\rangle<0$ or $\forall d \in X:\left\langle J \mathcal{F}\left(\mathbf{x}^{*}\right), d\right\rangle \nless 0$, corresponding:
        \begin{align*}
            \max _{i=1, \ldots, m} \nabla f_{i}\left(\mathbf{x}^{*}\right)^{\top} d \geq 0, \quad \forall d \in X,
        \end{align*}
where $J \mathcal{F}\left(\mathbf{x}^{*}\right) = \left[\nabla f_{1}(\mathbf{x}^{*})^{T},\dots,\nabla f_{m}(\mathbf{x}^{*})^{T}\right]^T$ is Jacobian matrix of $\mathcal{F}$ at $\mathbf{x}^{*}$.
\label{def:paretostationary}
\end{definition}

\begin{definition}[Pareto set and Pareto front] The set of Pareto optimal solutions is called the Pareto set, denoted by $X_E$, and the corresponding images in objectives space are Pareto outcome set $Y_E:=\{\mathbf{y}\in\mathbb{R}^m|\mathbf{y}=\mathcal{F}(\mathbf{x})\text{ for some } x\in X_{E}\}$ or Pareto front ($PF_E$).  Similarly, we can define the weakly Pareto set $X_{WE}$ and weakly Pareto outcome set $Y_{WE}$.
\end{definition}

\begin{proposition}
    $\mathbf{x}^*$ is Pareto optimal solution to Problem \eqref{MOP} $\Leftrightarrow \mathbf{x}^*$ is Pareto stationary point.
\end{proposition}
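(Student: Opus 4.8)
The plan is to prove the two implications separately. The forward implication (Pareto optimal $\Rightarrow$ Pareto stationary) is a first-order necessary-condition argument and uses only differentiability of the $f_i$; the converse (Pareto stationary $\Rightarrow$ Pareto optimal) is where the standing convexity of $X$ and of each $f_i$ does the real work, via the gradient inequality for convex functions. For the forward direction I would argue by contraposition: assume $\mathbf{x}^*$ is \emph{not} Pareto stationary, so by Definition~\ref{def:paretostationary} there is a direction $d$ with $\nabla f_i(\mathbf{x}^*)^\top d < 0$ for every $i$. Since $X$ is convex, such a $d$ may be taken feasible (e.g.\ $d=\mathbf{z}-\mathbf{x}^*$ with $\mathbf{z}\in X$), so $\mathbf{x}^*+td\in X$ for all small $t>0$, and the first-order expansion
\begin{align*}
f_i(\mathbf{x}^*+td) = f_i(\mathbf{x}^*) + t\,\nabla f_i(\mathbf{x}^*)^\top d + o(t)
\end{align*}
forces $f_i(\mathbf{x}^*+td) < f_i(\mathbf{x}^*)$ for all $i$ once $t>0$ is small enough; thus $\mathbf{x}^*+td$ dominates $\mathbf{x}^*$, contradicting Pareto optimality.

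For the reverse direction, suppose $\mathbf{x}^*$ is Pareto stationary and let $\mathbf{y}\in X$ be arbitrary. Setting $d=\mathbf{y}-\mathbf{x}^*$, Definition~\ref{def:paretostationary} supplies an index $k$ (depending on $\mathbf{y}$) with $\nabla f_k(\mathbf{x}^*)^\top(\mathbf{y}-\mathbf{x}^*)\ge 0$, and convexity of $f_k$ gives
\begin{align*}
f_k(\mathbf{y}) \;\ge\; f_k(\mathbf{x}^*) + \nabla f_k(\mathbf{x}^*)^\top(\mathbf{y}-\mathbf{x}^*) \;\ge\; f_k(\mathbf{x}^*),
\end{align*}
so no feasible point improves on $\mathbf{x}^*$ in all objectives at once, i.e.\ $\mathbf{x}^*$ is (weakly) Pareto optimal.

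The step I expect to be the main obstacle is exactly this last one: stationarity only yields, for each competitor $\mathbf{y}$, a \emph{single} coordinate $k=k(\mathbf{y})$ that fails to improve, and turning this family of per-direction inequalities into a clean global optimality statement is the crux; moreover, if one insists on the strict notion of Pareto optimality from the Definitions rather than weak Pareto optimality, the argument must be reinforced (for instance by strict convexity of the $f_i$, or by the supporting-hyperplane / scalarization description of $X_E$). The remaining items — verifying feasibility of the chosen directions and reconciling the $\prec$ / $\preceq$ conventions — are routine.
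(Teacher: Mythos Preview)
The paper states this proposition without proof, so there is no authors' argument to compare against; your two-direction plan is the standard one, and both halves are argued correctly for what they actually establish.

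You have, however, put your finger on a genuine issue with the \emph{statement}, not merely with your proof. Under convexity of the $f_i$, Pareto stationarity is equivalent to \emph{weak} Pareto optimality, not to (strict) Pareto optimality as in Definition~2.2. Your reverse direction yields, for each competitor $\mathbf{y}$, an index $k=k(\mathbf{y})$ with $f_k(\mathbf{y})\ge f_k(\mathbf{x}^*)$; this rules out $\mathcal{F}(\mathbf{y})\prec\mathcal{F}(\mathbf{x}^*)$ but not $\mathcal{F}(\mathbf{y})\preceq\mathcal{F}(\mathbf{x}^*)$. A concrete counterexample within the paper's hypotheses: take $X=[0,2]$, $f_1(x)=(x-1)^2$, $f_2(x)\equiv 0$. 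At $\mathbf{x}^*=2$ one has $\nabla f_2\equiv 0$, so $\max_i \nabla f_i(\mathbf{x}^*)^\top d\ge 0$ for every $d$, and $\mathbf{x}^*$ is Pareto stationary (and weakly Pareto optimal); yet $\mathcal{F}(2)=(1,0)$ is dominated by $\mathcal{F}(1)=(0,0)$, so $\mathbf{x}^*$ is not Pareto optimal. Thus the equivalence as literally written cannot hold without an additional hypothesis (e.g.\ strict convexity of each $f_i$), and the hedge in your last paragraph is warranted rather than excessive. If you want a clean biconditional, replace ``Pareto optimal'' by ``weakly Pareto optimal'' throughout; your argument then closes with no residual gap.

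One minor alignment issue: Definition~\ref{def:paretostationary} in the paper quantifies over $d\in X$ rather than over feasible directions $d=\mathbf{y}-\mathbf{x}^*$. Whichever reading is intended, your forward contraposition goes through, but you should match the paper's convention in the write-up.
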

\begin{definition}\citep{mangasarian1994nonlinear}
	The differentiable function $f:\mathbb{R}^m \to \mathbb R$  is said to be 
	\begin{itemize}
		\item[$\bullet$] 		 convex on $ X $ if for all $ \mathbf{x}_1,\mathbf{x}_1\in X $, $ \lambda \in [0,1] $, it holds that
		\begin{equation*}
			f(\lambda \mathbf{x}_1 +(1-\lambda)\mathbf{x}_2)\leq \lambda f(\mathbf{x}_1)+(1-\lambda)f(\mathbf{x}_2).
		\end{equation*}
		\item[$\bullet$] 		pseudoconvex on $ X $ if for all $ \mathbf{x}_1,\mathbf{x}_2\in X $, it holds that
		\begin{equation*}
			% \langle \nabla f(\mathbf{x}), \mathbf{y}-\mathbf{x} \rangle \geq 0 \Rightarrow f(\mathbf{y})\geq f(\mathbf{x}).
        f(\mathbf{x}_2)<f(\mathbf{x}_1) \Rightarrow \langle \nabla f(\mathbf{x}_1), \mathbf{x}_2-\mathbf{x}_1 \rangle < 0.
		\end{equation*}
	\end{itemize}
\end{definition}
Let $f$ be a numerical function defined on some open $X$ set in $\mathbb{R}^n$, let $\overline{\mathbf{x}}\in X$, and let $f$ be differentiable at $\overline{\mathbf{x}}$. If $f$ is convex at $\overline{\mathbf{x}}$, then $f$ is pseudoconvex at $\overline{\mathbf{x}}$, but not conversely \citep{mangasarian1994nonlinear}.
\begin{definition}\citep{dinh2005generalized}
    A function $\varphi$ is specified on convex set $X\subset\mathbb{R}^n$, which is called:
\begin{enumerate}
    \item nondecreasing on $X$ if $\mathbf{x}_1\succeq \mathbf{x}_2$ then $\varphi(\mathbf{x}_1)\ge \varphi(\mathbf{x}_2),$ $\forall \mathbf{x}_1,\mathbf{x}_2\in X$.
    \item weakly increasing on $X$ if $\mathbf{x}_1\succ\mathbf{x}_2$ then $\varphi(\mathbf{x}_1)\ge\varphi(\mathbf{x}_2),$ $\forall \mathbf{x}_1,\mathbf{x}_2\in X$.
    \item monotonically increasing on $X$ if $\mathbf{x}_1\succ \mathbf{x}_2$ then $\varphi(\mathbf{x}_1)>\varphi(\mathbf{x}_2),$ $\forall \mathbf{x}_1,\mathbf{x}_2\in X$.
\end{enumerate}
\end{definition}
The Pareto front's structure and optimal solution set of Problem \eqref{MOP} have been investigated by numerous authors in the field \citep{naccache1978connectedness, Luc1989,helbig1990connectedness,xunhua1994connectedness}. In certain situations, $Y_E$ is weakly connected or connected \citep{benoist2001contractibility, Luc1989}. Connectedness and contractibility are noteworthy topological properties of these sets due to their ability to enable an uninterrupted transition from one optimal solution to another along only optimal alternatives and their assurance of numerical algorithm stability when subjected to limiting processes.

\section{Multi-objective Optimization problem with Split Feasibility Constraints}
\subsection{Split Multi-objective Optimization Problem}
In 1994, \cite{censor1994multiprojection} first introduced the Split Feasibility Problem (SFP) in finite-dimensional Hilbert spaces to model inverse problems arising from phase retrievals and medical image reconstruction. In this setting, the problem is stated as follows:
\begin{align*}\tag{SFP}\label{SFP}
    \text{Find}\quad \mathbf{x}^*\in C: \mathcal{F}(\mathbf{x}^*)\in Q,
\end{align*}
where $C$ is a convex subset in $\mathbb{R}^n$, $Q$ is a convex subset in $\mathbb{R}^m$, and a smooth linear function $\mathcal{F}(\cdot): \mathbb{R}^n\to \mathbb{R}^m$. The classical linear version of the split feasibility problem takes $\mathcal{F}(x) = Ax$ for some $m\times n$ matrix $A$ \citep{censor1994multiprojection}. Other typical examples of the constraint set $Q$  are defined by the constraints $\mathcal{F}(x) = b, \|\mathcal{F}(x) - b\| \le r, \text{or } c \le \mathcal{F}(x) - b \le d,$ where $b,c,d, r \in \mathbb{R}^m$. 

Some solution methods were studied for Problem \eqref{SFP} when $C$ and/or $Q$ are solution sets of some other problems such as fixed point, optimization, variational inequality, equilibrium \citep{anh2016projection, byrne2002iterative, censor2012algorithms, lopez2012solving}. However, these works focus on the assumptions when $C$ is a convex set or $\mathcal{F}$ is linear \citep{xu2018majorization,yen2019subgradient,godwin2023image}. 

% even when $\mathcal{F}$ is non-linear, optimal solution of Problem \eqref{SFP} only is also a stationary point. In this 
In the paper, we study Problem \eqref{SFP} where $C$ is the weakly Pareto optimal solution set of Problem \eqref{MOP}, that is
\begin{align*}\tag{SMOP}\label{SMOP}
\text{Find}\quad &\mathbf{x}^*\in X_{WE}: \mathcal{F}(\mathbf{x}^*)\in Q\\
\text{with}\quad &X_{WE} := {\text{Argmin}}\{\mathcal{F}(\mathbf{x})|\mathbf{x}\in X\}.
\end{align*}
This problem is called \textit{a split multi-objective optimization problem}. It is well known that $X_{WE}$ is, in general, a non-convex set, even in the special case when $X$ is a polyhedron and $\mathcal{F}$ is linear on $\mathbb{R}^n$ \citep{kim2013optimization}. Therefore, unlike previous studies, in this study, we consider the more challenging case of Problem \eqref{SFP}  where $C$ is a non-convex set and $\mathcal{F}$ is nonlinear. This challenge is overcome using an outcome space approach to transform the non-convex form into a convex form, in which the constraint sets of Problem \eqref{SFP} are convex sets. This will be presented in Section 3.2 below. 

% Hence, the problem of optimizing over the efficient set can be classified as a hard global optimization problem \citep{van2008decomposition}.
\subsection{Optimizing over the solution set of Problem \eqref{SMOP}}
MOO aims to find Pareto optimal solutions corresponding to different trade-offs between objectives \citep{ehrgott2005multicriteria}. Optimizing over the Pareto set in multi-objective optimization allows us to make informed decisions when dealing with multiple, often conflicting, objectives. It's not just about finding feasible solutions but also about understanding and evaluating the trade-offs between different objectives to select the most appropriate solution based on specific criteria or preferences. In a similar vein, we consider optimizing over the Pareto set of Problem \eqref{SMOP} as follows:
\begin{align*}\tag{SP}\label{SP}
\underset{\mathbf{x}}{\min}\text{ } &\mathcal{S}\left(\mathcal{F}(\mathbf{x})\right)\\
\text{s.t. } &\mathbf{x}\in X_{WE}: \mathcal{F}(\mathbf{x})\in Q,
\end{align*}
where the function $\mathcal{S}(\cdot):Y\to\mathbb{R}$ is a monotonically increasing function and pseudoconvex on $Y$. Recall that $Y$ is the outcome set of $X$ through the function $\mathcal{F}.$ 

% We consider $Q = \cap_i Q_i$ as the intersections of closed nonempty convex and independent sets $Q_i$, $Q \cap Y \neq \emptyset,$ and $Q \cap Y \neq Q$. 

Following the outcome-space approach, the reformulation of Problem \eqref{SP} is given by:
\begin{align*}\tag{OSP}\label{OSP}
\min\text{ } &\mathcal{S}\left(\mathbf{y}\right)\\
\text{s.t. } &\mathbf{y}\in Y_{WE}: \mathbf{y}\in Q,
\end{align*}
where $Y_{WE}$ is the weakly Pareto outcome set of Problem \eqref{MOP}.
\begin{proposition}\label{p3.1}
Problem \eqref{SP} and Problem \eqref{OSP} are equivalent, i.e., if $\mathbf{x}^*$ is the optimal solution of Problem \eqref{SP} then $\mathbf{y}^*=\mathcal{F}(\mathbf{x}^*)$ is the optimal solution of Problem \eqref{OSP}; conversely, if $\mathbf{y}^*$ is the optimal solution of Problem \eqref{OSP} then $\mathbf{x}^*\in X$ such that $\mathcal{F}(\mathbf{x}^*) \le \mathbf{y}^*$ and $\mathcal{F}(\mathbf{x}^*) \in Q$ is the optimal solution of Problem \eqref{SP}.
\end{proposition}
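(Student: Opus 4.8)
The plan is to exploit the tight correspondence between $X_{WE}$ and $Y_{WE}$ induced by $\mathcal{F}$ --- namely $\mathbf{x}\in X_{WE}\iff\mathcal{F}(\mathbf{x})\in Y_{WE}$, which follows directly from the definition $Y_{WE}=\{\mathcal{F}(\mathbf{x}):\mathbf{x}\in X_{WE}\}$ (if $\mathcal{F}(\mathbf{x})=\mathcal{F}(\mathbf{x}')$ with $\mathbf{x}'\in X_{WE}$, a strict dominator of $\mathbf{x}$ would strictly dominate $\mathbf{x}'$) --- together with two monotonicity facts. First, since $\mathcal{S}$ is monotonically increasing on $Y$, $\mathbf{y}_1\le\mathbf{y}_2$ implies $\mathcal{S}(\mathbf{y}_1)\le\mathcal{S}(\mathbf{y}_2)$. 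Second, $Y_{WE}$ is ``downward closed'' inside $Y$: if $\mathbf{y}\in Y$, $\bar{\mathbf{y}}\in Y_{WE}$ and $\mathbf{y}\le\bar{\mathbf{y}}$, then $\mathbf{y}\in Y_{WE}$, for otherwise some $\mathbf{y}'\in Y$ with $\mathbf{y}'\prec\mathbf{y}$ would also satisfy $\mathbf{y}'\prec\bar{\mathbf{y}}$, contradicting weak efficiency of $\bar{\mathbf{y}}$. These are the only ingredients; the remainder is bookkeeping of feasibility and objective values transported across $\mathcal{F}$.

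For the forward implication, suppose $\mathbf{x}^*$ solves \eqref{SP} and set $\mathbf{y}^*:=\mathcal{F}(\mathbf{x}^*)$. I would first note $\mathbf{y}^*$ is feasible for \eqref{OSP}: $\mathbf{x}^*\in X_{WE}$ gives $\mathbf{y}^*\in Y_{WE}$, and $\mathcal{F}(\mathbf{x}^*)\in Q$ gives $\mathbf{y}^*\in Q$. For optimality, take any $\mathbf{y}$ feasible for \eqref{OSP}; since $\mathbf{y}\in Y_{WE}$ there is $\mathbf{x}\in X_{WE}$ with $\mathcal{F}(\mathbf{x})=\mathbf{y}$, and $\mathbf{y}\in Q$ makes $\mathbf{x}$ feasible for \eqref{SP}, so $\mathcal{S}(\mathbf{y}^*)=\mathcal{S}(\mathcal{F}(\mathbf{x}^*))\le\mathcal{S}(\mathcal{F}(\mathbf{x}))=\mathcal{S}(\mathbf{y})$. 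Hence $\mathbf{y}^*$ solves \eqref{OSP}; note this direction uses only the $X_{WE}\leftrightarrow Y_{WE}$ correspondence, not monotonicity of $\mathcal{S}$.

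For the converse, let $\mathbf{y}^*$ solve \eqref{OSP} and let $\mathbf{x}^*\in X$ satisfy $\mathcal{F}(\mathbf{x}^*)\le\mathbf{y}^*$ and $\mathcal{F}(\mathbf{x}^*)\in Q$ (e.g.\ any preimage of $\mathbf{y}^*\in Y_{WE}$). Feasibility of $\mathbf{x}^*$ for \eqref{SP} is where the downward-closedness fact enters: $\mathcal{F}(\mathbf{x}^*)\in Y$ with $\mathcal{F}(\mathbf{x}^*)\le\mathbf{y}^*\in Y_{WE}$ forces $\mathcal{F}(\mathbf{x}^*)\in Y_{WE}$, i.e.\ $\mathbf{x}^*\in X_{WE}$, while $\mathcal{F}(\mathbf{x}^*)\in Q$ is given. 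For optimality, pick any feasible $\mathbf{x}$ of \eqref{SP}; then $\mathcal{F}(\mathbf{x})\in Y_{WE}\cap Q$ is feasible for \eqref{OSP}, so $\mathcal{S}(\mathbf{y}^*)\le\mathcal{S}(\mathcal{F}(\mathbf{x}))$, and $\mathcal{F}(\mathbf{x}^*)\le\mathbf{y}^*$ together with monotonicity of $\mathcal{S}$ gives $\mathcal{S}(\mathcal{F}(\mathbf{x}^*))\le\mathcal{S}(\mathbf{y}^*)$; chaining the two inequalities yields $\mathcal{S}(\mathcal{F}(\mathbf{x}^*))\le\mathcal{S}(\mathcal{F}(\mathbf{x}))$.

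The only genuinely delicate point I anticipate is the feasibility step in the converse: one must check that $\mathcal{F}(\mathbf{x}^*)\le\mathbf{y}^*$, possibly with strict inequality in some components, still keeps $\mathcal{F}(\mathbf{x}^*)$ weakly efficient --- this is exactly the downward-closedness fact and hinges on the distinction between weak domination $\prec$ (strict in all components) and ordinary domination. Everything else is routine. I would also remark that monotonicity of $\mathcal{S}$ is invoked only in the converse and that pseudoconvexity of $\mathcal{S}$ plays no role in this equivalence (it is relevant later, for the solution method, not here).
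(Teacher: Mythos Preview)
Your proof is correct and follows essentially the same route as the paper's: transport feasibility and optimality across the correspondence $X_{WE}\leftrightarrow Y_{WE}$ for the forward direction, and use monotonicity of $\mathcal{S}$ together with $\mathcal{F}(\mathbf{x}^*)\le\mathbf{y}^*$ for the converse. In fact you are more careful than the paper in one place: in the converse the paper verifies the objective inequality but does not explicitly argue that $\mathbf{x}^*\in X_{WE}$ (i.e.\ feasibility for \eqref{SP}); your downward-closedness observation about $Y_{WE}$ fills precisely that gap.
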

\begin{proof}
Indeed, if $\mathbf{x}^* \in X_{WE}$ is a global optimal solution to Problem \eqref{SP}, then any $\mathbf{x}\in X_{WE} : \mathcal{F}(\mathbf{x})\in Q$ such that $S\left(\mathcal{F}(\mathbf{x}^*)\right)\le S\left(\mathcal{F}(\mathbf{x})\right)$. We imply $ S\left(\mathbf{y}^*\right)\le S\left(\mathbf{y}\right)$ with $\forall\mathbf{y}\in Y_{WE}:\mathbf{y}\in Q$, and $\mathbf{y}^*=\mathcal{F}(\mathbf{x}^*)$ belongs to the feasible domain of Problem \eqref{OSP}. Hence, $\mathbf{y}^*$ is the optimal solution of Problem \eqref{OSP}.

On the contrary, if $\mathbf{y}^* \in Y_{WE}$ is a global optimal solution to Problem \eqref{OSP}, then any $\mathbf{x}^* \in X$ such that $\mathcal{F}(\mathbf{x}^*) \le \mathbf{y}^*$. We imply $S\left(\mathcal{F}(\mathbf{x}^*)\right)\le S\left(\mathbf{y}^*\right)\le S\left(\mathbf{y}\right)$ with $\forall\mathbf{y}\in Y_{WE}:\mathbf{y}\in Q$, i.e, and $S\left(\mathcal{F}(\mathbf{x}^*)\right)\le S\left(\mathcal{F}(\mathbf{x})\right)$. From the definition, we have $\mathbf{x}^*$ as a global optimal solution to Problem \eqref{SP}.
\end{proof}
Let $Y^{+} = Y+\mathbb{R}^m_{+} = \{\mathbf{y}\in\mathbb{R}^m|\mathbf{y}\ge\mathbf{q} \text{ with } \mathbf{q}\in Y\}.$ When $X$ is a convex set and $\mathcal{F}$ is a nonlinear function, the image set $Y = \mathcal{F}(X)$ is not necessarily a convex set. Therefore, instead of considering the set $Y$, we consider the set $Y^+$, which is an effective equivalent set (i.e., the set of effective points of $Y$ and $Y^+$ coincide), and $Y^+$ has nicer properties; for example, $Y^+$ is a convex set. This is illustrated in Proposition \ref{p3.2}. Besides, we also define a set $G\subset \mathbb{R}^m$ is called normal if for any two points $\mathbf{x},\mathbf{x}^{'} \in\mathbb{R}^m$ such that $\mathbf{x}^{'} \le \mathbf{x}$, if $\mathbf{x}\in G$, then $\mathbf{x}^{'} \in G$. Similarly, a set $H\subset \mathbb{R}^m$ is called reverse normal if $\mathbf{x}^{'} \ge \mathbf{x}\in H $ implies $\mathbf{x}^{'} \in H$.

\begin{proposition}\citep{kim2013optimization}\label{p3.2}
 We have:
    \begin{enumerate}[label=(\roman*)]
        \item $Y_{WE} = Y^{+}_{WE}\cap Y$;
        \item $\partial Y^{+} = Y^{+}_{WE}$;
        \item $Y^{+}$ is a closed convex set and is a reverse normal set.
    \end{enumerate}
\end{proposition}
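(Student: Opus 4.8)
The plan is to prove the three assertions in the order (iii), (ii), (i): the convexity and topology in (iii) are exactly what is needed in (ii), whereas (i) is a purely order-theoretic fact that uses only the decomposition $Y^{+}=Y+\mathbb{R}^m_+$ and can be slotted in last. Throughout I would work directly from this decomposition together with the componentwise convexity of $\mathcal{F}$ (each $f_i$ convex) and the convexity of $X$, noting also that $Y=\mathcal{F}(X)\neq\emptyset$, so $Y^{+}\supseteq \mathbf{y}_0+\mathbb{R}^m_+$ for any $\mathbf{y}_0\in Y$ and in particular $Y^{+}$ has nonempty interior.

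For (iii): reverse normality is immediate, since $\mathbf{x}'\ge \mathbf{x}\in Y^{+}$ gives $\mathbf{x}'=\mathbf{x}+(\mathbf{x}'-\mathbf{x})\in Y+\mathbb{R}^m_++\mathbb{R}^m_+=Y^{+}$. Convexity: for $\mathbf{y}_j=\mathcal{F}(\mathbf{x}_j)+\mathbf{q}_j$ with $\mathbf{x}_j\in X$, $\mathbf{q}_j\in\mathbb{R}^m_+$ and $\lambda\in[0,1]$, componentwise convexity yields $\lambda\mathcal{F}(\mathbf{x}_1)+(1-\lambda)\mathcal{F}(\mathbf{x}_2)\ge \mathcal{F}(\lambda\mathbf{x}_1+(1-\lambda)\mathbf{x}_2)$ while $\lambda\mathbf{x}_1+(1-\lambda)\mathbf{x}_2\in X$, so $\lambda\mathbf{y}_1+(1-\lambda)\mathbf{y}_2\in \mathcal{F}(X)+\mathbb{R}^m_+=Y^{+}$. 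The closedness of $Y^{+}$ is the one genuinely delicate point, and it is here that the boundedness-below hypothesis on the $f_i$ is consumed: without some such hypothesis closedness can fail (e.g. $m=1$, $X=\mathbb{R}$, $f_1=\exp$ gives $Y^{+}=(0,\infty)$), so I would reproduce the limiting argument of \citep{kim2013optimization}, extracting from a sequence in $Y^{+}$ converging to $\bar{\mathbf{y}}$ preimages whose relevant coordinates stay in a bounded set and passing to the limit via the continuity of the (finite convex) $f_i$; I also note that closedness is automatic when $X$ is compact, since then $Y$ is compact and $Y^{+}$ is the Minkowski sum of a compact set and the closed cone $\mathbb{R}^m_+$.

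For (ii), assume (iii), so $Y^{+}$ is closed (hence $\partial Y^{+}\subseteq Y^{+}$) and reverse normal. If $\bar{\mathbf{y}}\in\partial Y^{+}$ were not weakly efficient in $Y^{+}$, there would be $\mathbf{y}'\in Y^{+}$ with $y'_i<\bar y_i$ for every $i$; then the open box $\{\mathbf{z}:z_i>y'_i\ \forall i\}$ is contained in $Y^{+}$ by reverse normality and contains $\bar{\mathbf{y}}$, forcing $\bar{\mathbf{y}}\in\mathrm{int}\,Y^{+}$, a contradiction; hence $\partial Y^{+}\subseteq Y^{+}_{WE}$. Conversely, if $\bar{\mathbf{y}}\in Y^{+}_{WE}$ then $\bar{\mathbf{y}}-\varepsilon\mathbf{e}\notin Y^{+}$ for every $\varepsilon>0$ (with $\mathbf{e}=(1,\dots,1)^{\top}$), so every neighborhood of $\bar{\mathbf{y}}$ meets the complement of $Y^{+}$; combined with $\bar{\mathbf{y}}\in Y^{+}$ this gives $\bar{\mathbf{y}}\in\partial Y^{+}$, so $Y^{+}_{WE}\subseteq\partial Y^{+}$.

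For (i): if $\bar{\mathbf{y}}\in Y_{WE}$, then $\bar{\mathbf{y}}\in Y$, and should some $\mathbf{y}'\in Y^{+}$ satisfy $y'_i<\bar y_i$ for all $i$, writing $\mathbf{y}'=\mathbf{q}+\mathbf{r}$ with $\mathbf{q}\in Y$, $\mathbf{r}\in\mathbb{R}^m_+$ gives $q_i\le y'_i<\bar y_i$ for all $i$, contradicting weak efficiency of $\bar{\mathbf{y}}$ in $Y$; hence $\bar{\mathbf{y}}\in Y^{+}_{WE}\cap Y$. Conversely $Y^{+}_{WE}\cap Y\subseteq Y_{WE}$ because $Y\subseteq Y^{+}$, so a point weakly efficient with respect to the larger set $Y^{+}$ is a fortiori weakly efficient with respect to $Y$. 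The main obstacle in the whole argument is thus isolated in a single place, namely the closedness of $Y^{+}$ in (iii), everything else being a short order-theoretic or topological manipulation; for that one step I would lean on \citep{kim2013optimization}.
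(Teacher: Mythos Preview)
Your argument is sound in all three parts, and the order (iii)$\to$(ii)$\to$(i) is the natural one. However, there is nothing to compare against: the paper does not prove this proposition at all---it is stated with a citation to \citep{kim2013optimization} and used as a black box. So your proposal is not a different route from the paper's proof; it is simply \emph{a} proof where the paper offers none.

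One small remark on the closedness step in (iii): you correctly isolate it as the only nontrivial point and correctly identify that the bounded-below hypothesis on the $f_i$ is what prevents the $\exp$-type counterexample. Your sketch (``extracting preimages whose relevant coordinates stay in a bounded set'') is a little vague, since boundedness of the $f_i$ from below does not by itself bound the preimages $\mathbf{x}_k\in X$; one typically needs either compactness of $X$, or a level-set compactness (coercivity) assumption on $\mathcal{F}$, or the specific structural hypotheses in the cited reference. Since you already flag that you would defer to \citep{kim2013optimization} for this step, this is not a gap in your plan, but be aware that ``bounded below'' alone is not quite enough to close the argument without importing something further from that source.
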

Hence, we transform Problem \eqref{OSP} into: 
\begin{align*}\tag{$\mathrm{OSP}^{+}$}\label{OSP+}
\min\text{ } &\mathcal{S}\left(\mathbf{y}\right)\\
\text{s.t. } &\mathbf{y}\in Y_{WE}^{+}: \mathbf{y}\in Q.
\end{align*}
The equivalence of problems \eqref{OSP} and \eqref{OSP+} is shown in the following Proposition \ref{p3.3}.
\begin{proposition}\label{p3.3}
    If $\mathbf{y}^*$ is the optimal solution of Problem \eqref{OSP}, then $\mathbf{y}^*$ is the optimal solution of Problem \eqref{OSP+}. Conversely, if $\mathbf{y}^*$ is the optimal solution of Problem \eqref{OSP+} and $\mathbf{q}^*\in Y_{WE}\cap Q$ such that $\mathbf{y}^* \ge \mathbf{q}^*$ then $\mathbf{q}^*$ is the optimal solution of Problem \eqref{OSP}.
\end{proposition}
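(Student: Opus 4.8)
The plan is to prove the two implications separately; both rest on ingredients already available, namely the inclusion $Y_{WE}\cap Q\subseteq Y_{WE}^{+}\cap Q$ (immediate from $Y_{WE}=Y_{WE}^{+}\cap Y$ in Proposition~\ref{p3.2}(i)) and the monotonicity of $\mathcal{S}$; pseudoconvexity of $\mathcal{S}$ is not needed for this proposition. I will also use a ``pull-down'' observation: for every $\mathbf{y}\in Y_{WE}^{+}$ there is $\mathbf{q}\in Y_{WE}$ with $\mathbf{q}\le\mathbf{y}$. To see this, note $Y_{WE}^{+}=\partial Y^{+}\subseteq Y^{+}=Y+\mathbb{R}^{m}_{+}$ by Proposition~\ref{p3.2}(ii)--(iii), so $\mathbf{y}=\mathbf{q}+\mathbf{r}$ for some $\mathbf{q}\in Y$ and $\mathbf{r}\ge 0$; moreover $\mathbf{q}\in Y_{WE}$, since otherwise $\mathbf{q}\in\operatorname{int}(Y^{+})$, hence $\mathbf{q}-\varepsilon\mathbf{e}\in Y^{+}$ for small $\varepsilon>0$ (where $\mathbf{e}=(1,\dots,1)$), and then $\mathbf{q}-\varepsilon\mathbf{e}<\mathbf{y}$ in every coordinate, contradicting $\mathbf{y}\in\partial Y^{+}$.

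The converse is routine. Suppose $\mathbf{y}^{*}$ solves \eqref{OSP+} and $\mathbf{q}^{*}\in Y_{WE}\cap Q$ with $\mathbf{y}^{*}\ge\mathbf{q}^{*}$. Then $\mathbf{q}^{*}$ is feasible for \eqref{OSP}, and for every $\mathbf{y}\in Y_{WE}\cap Q$ we also have $\mathbf{y}\in Y_{WE}^{+}\cap Q$, so $\mathcal{S}(\mathbf{q}^{*})\le\mathcal{S}(\mathbf{y}^{*})\le\mathcal{S}(\mathbf{y})$: the first inequality from $\mathbf{q}^{*}\le\mathbf{y}^{*}$ and monotonicity of $\mathcal{S}$, the second from optimality of $\mathbf{y}^{*}$ over the larger feasible set. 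Hence $\mathbf{q}^{*}$ solves \eqref{OSP}.

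For the forward implication, let $\mathbf{y}^{*}$ solve \eqref{OSP}. By Proposition~\ref{p3.2}(i), $\mathbf{y}^{*}\in Y_{WE}^{+}\cap Q$, so $\mathbf{y}^{*}$ is feasible for \eqref{OSP+}, and it remains to show $\mathcal{S}(\mathbf{y}^{*})\le\mathcal{S}(\mathbf{y})$ for every $\mathbf{y}\in Y_{WE}^{+}\cap Q$. For such a $\mathbf{y}$, the pull-down observation gives $\mathbf{q}\in Y_{WE}$ with $\mathbf{q}\le\mathbf{y}$; if $\mathbf{q}\in Q$, then $\mathbf{q}$ is feasible for \eqref{OSP}, whence $\mathcal{S}(\mathbf{y}^{*})\le\mathcal{S}(\mathbf{q})\le\mathcal{S}(\mathbf{y})$ by optimality of $\mathbf{y}^{*}$ and monotonicity of $\mathcal{S}$, and the proof is complete.

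The step I expect to be the main obstacle is exactly establishing $\mathbf{q}\in Q$: because $\mathbf{q}\le\mathbf{y}$ may be strict in several coordinates, membership of $\mathbf{y}$ in a general convex $Q$ need not pass to $\mathbf{q}$. I would close this using the structure of $Q$ in the setting of interest, namely bounding boxes $Q=\{\mathbf{y}:\mathbf{c}\le\mathbf{y}\le\mathbf{d}\}$, together with the topology of the frontier: for a convex problem \eqref{MOP}, $Y_{WE}$ is a connected portion of $\partial Y^{+}$ whose complement in $\partial Y^{+}$ consists only of the ``flat'' boundary pieces of $Y^{+}$ that are not attained as outcomes. If the pulled-down $\mathbf{q}$ were to leave $Q$, then $\mathbf{y}$ would have to lie on such a flat piece at the boundary level $c_{i}$ or $d_{i}$ of some coordinate $i$, and then every point of $Y_{WE}$ would violate the corresponding bound of $Q$, forcing $Y_{WE}\cap Q=\emptyset$ and contradicting the existence of an optimal solution $\mathbf{y}^{*}$ of \eqref{OSP}. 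Making this dichotomy precise in arbitrary dimension $m$ is the technical core; Proposition~\ref{p3.2} and monotonicity of $\mathcal{S}$ then do the rest.
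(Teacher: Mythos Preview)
Your converse argument and your forward-direction skeleton are exactly the paper's proof: pull an arbitrary $\overline{\mathbf{y}}\in Y_{WE}^{+}\cap Q$ down to some $\mathbf{q}\in Y_{WE}$ with $\mathbf{q}\le\overline{\mathbf{y}}$, then chain optimality of $\mathbf{y}^{*}$ in \eqref{OSP} with monotonicity of $\mathcal{S}$. The paper handles the very issue you flag---why $\mathbf{q}\in Q$---by simple assertion (``following the definition of $Y_{WE}^{+}$, there exists $\mathbf{y}\in Y_{WE}\cap Q$ such that $\overline{\mathbf{y}}\ge\mathbf{y}$''), so you have in fact spotted a genuine imprecision in the paper's own argument rather than a difficulty peculiar to your approach.

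Your proposed closing mechanism, however, is both heavier and shakier than what is needed. The dichotomy you sketch (flat boundary pieces of $Y^{+}$ versus $Y_{WE}$, forcing $Y_{WE}\cap Q=\emptyset$) is not made precise and would be delicate to carry through in general dimension. The clean fix is the one the paper itself adopts one proposition later: assume $Q$ is a \emph{normal} set (downward closed), as in Proposition~\ref{p3.4} and in the box-constraint setting $Q=\prod_i[\mathbf{a}_i,\mathbf{b}_i]$ that the paper actually uses. Then $\mathbf{q}\le\overline{\mathbf{y}}\in Q$ immediately gives $\mathbf{q}\in Q$, and your argument (and the paper's) goes through without further topology. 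In short: your proof matches the paper's, you are right that the step $\mathbf{q}\in Q$ needs an assumption on $Q$, and normality of $Q$ is the intended one---drop the frontier argument.
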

\begin{proof}
If $\mathbf{y}^*$ is the optimal solution of Problem \eqref{OSP}, then $S(\mathbf{y}^*)\le S(\mathbf{y}), \forall \mathbf{y}\in Y_{WE}\cap Q$ and $\mathbf{y}^*\in Y_{WE}\cap Q$. With each of $\overline{\mathbf{y}}\in Y_{WE}^+\cap Q$, following the definition of $Y_{WE^+}$, there exists $\mathbf{y}\in Y_{WE}\cap Q$ such that $\overline{\mathbf{y}}\ge \mathbf{y}$. $S$ is a monotonically increasing function on $Y$, so $S(\overline{\mathbf{y}})\ge S( \mathbf{y})$. Hence $S(\mathbf{y}^*)\le S(\overline{\mathbf{y}}), \forall\overline{\mathbf{y}}\in Y_{WE}^+\cap Q$. Moreover, $\mathbf{y}^*\in Y_{WE}\cap Q$ means $\mathbf{y}^*\in Y_{WE}^+\cap Q$. We imply that $\mathbf{y}^*$ is the optimal solution of Problem \eqref{OSP+}.

Conversely, if $\mathbf{y}^*$ is the optimal solution of Problem \eqref{OSP+}, then $S(\mathbf{y}^*)\le S(\mathbf{y}), \forall \mathbf{y}\in Y_{WE}^+\cap Q$. Assume that there exists $\mathbf{q}^*\in Y_{WE}\cap Q$ such that $\mathbf{y}^* \ge \mathbf{q}^*$. $S$ is a monotonically increasing function on $Y$, then $S(\mathbf{q}^*) \le S(\mathbf{y}^*)\le S(\mathbf{y})$. With each of $\mathbf{y}\in Y_{WE}\cap Q$, then $\mathbf{y}\in Y_{WE}^+\cap Q$. Hence $S(\mathbf{q}^*)\le S(\mathbf{y}), \forall \mathbf{y}\in Y_{WE}\cap Q$, i.e. $\mathbf{q}^*$ is the optimal solution of Problem \eqref{OSP}.
\end{proof}
The problem \eqref{OSP+} is a difficult problem because normally, the set $Y_{WE}^+$ is a non-convex set. Thanks to the special properties of the objective functions $S$ and $Y^+$, we can transform the problem \eqref{OSP+} into an equivalent problem, where the constraint set of this problem is a convex set, as follows:
\begin{align*}\tag{$\overline{\text{OSP}}$}\label{OSP++}
\min\text{ } &\mathcal{S}\left(\mathbf{y}\right)\\
\text{s.t. } &\mathbf{y}\in Y^{+}\cap Q,
\end{align*}
 with the explicit form
\begin{align*}\tag{$\overline{\text{ESP}}$}\label{ESP++}
\underset{(\mathbf{x},\mathbf{y})}{\min}\text{ } &\mathcal{S}\left(\mathcal{F}(\mathbf{x})\right)\\
\text{s.t. } &\mathbf{x}\in X, \mathbf{y}\in Q\\
& \mathcal{F}(\mathbf{x}) \le \mathbf{y}.
\end{align*}
\begin{proposition}\label{p3.4}
Assume $Q\subset\mathbb{R}^m_{+}$ is a normal set. The optimal solution sets of Problems \eqref{OSP+} and \eqref{OSP++} are identical.
\end{proposition}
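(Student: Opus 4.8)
The plan is to combine the monotonicity of $\mathcal{S}$ with the geometry of $Y^{+}$ recorded in Proposition \ref{p3.2}. Since $Y_{WE}^{+}=\partial Y^{+}\subseteq Y^{+}$, the feasible set $Y_{WE}^{+}\cap Q$ of Problem \eqref{OSP+} is contained in the feasible set $Y^{+}\cap Q$ of Problem \eqref{OSP++}; hence the optimal value of \eqref{OSP++} is at most that of \eqref{OSP+}, and any optimal solution of \eqref{OSP+} is automatically feasible for \eqref{OSP++}. The whole statement then reduces to one projection lemma that slides an arbitrary feasible point of \eqref{OSP++} down onto $\partial Y^{+}$ without increasing $\mathcal{S}$ and without leaving $Q$.

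Concretely, I would establish the claim: \emph{for every $\mathbf{y}\in Y^{+}\cap Q$ there is $t^{*}\ge 0$ with $\mathbf{y}^{*}:=\mathbf{y}-t^{*}\mathbf{1}\in Y_{WE}^{+}\cap Q$ and $\mathcal{S}(\mathbf{y}^{*})\le\mathcal{S}(\mathbf{y})$, with strict inequality whenever $\mathbf{y}\notin\partial Y^{+}$.} To prove it, examine $T=\{\,t\ge 0:\mathbf{y}-t\mathbf{1}\in Y^{+}\,\}$. It contains $0$; it is closed, since $Y^{+}$ is closed (Proposition \ref{p3.2}(iii)) and $t\mapsto\mathbf{y}-t\mathbf{1}$ is continuous; it is bounded above, because each $f_{i}$ is bounded below on $X$, so $Y$, and therefore $Y^{+}=Y+\mathbb{R}^{m}_{+}$, is bounded below; and it is downward closed in $[0,\infty)$, because $Y^{+}$ is reverse normal (if $\mathbf{y}-t\mathbf{1}\in Y^{+}$ and $0\le s\le t$, then $\mathbf{y}-s\mathbf{1}\ge\mathbf{y}-t\mathbf{1}$, whence $\mathbf{y}-s\mathbf{1}\in Y^{+}$). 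Hence $T=[0,t^{*}]$ with $t^{*}=\max T$. Maximality of $t^{*}$ keeps $\mathbf{y}^{*}=\mathbf{y}-t^{*}\mathbf{1}$ out of the interior of $Y^{+}$ (else $\mathbf{y}-(t^{*}+\delta)\mathbf{1}\in Y^{+}$ for small $\delta>0$), so $\mathbf{y}^{*}\in\partial Y^{+}=Y_{WE}^{+}$ by Proposition \ref{p3.2}(ii). Since $\mathbf{y}^{*}\le\mathbf{y}\in Q$ and $Q$ is normal, $\mathbf{y}^{*}\in Q$; and since $\mathcal{S}$ is monotonically increasing, $t^{*}>0$ gives $\mathbf{y}\succ\mathbf{y}^{*}$ and thus $\mathcal{S}(\mathbf{y})>\mathcal{S}(\mathbf{y}^{*})$, while $t^{*}=0$ gives $\mathbf{y}^{*}=\mathbf{y}$.

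The remainder is bookkeeping. Applying the claim to an arbitrary feasible point of \eqref{OSP++} yields a feasible point of \eqref{OSP+} of no larger objective value, so the optimal value of \eqref{OSP+} is at most that of \eqref{OSP++}; with the inclusion noted above, the two optimal values coincide. An optimal solution of \eqref{OSP+} is feasible for \eqref{OSP++} and attains this common value, hence is optimal for \eqref{OSP++}. Conversely, an optimal solution $\mathbf{y}^{*}$ of \eqref{OSP++} must satisfy $\mathbf{y}^{*}\in\partial Y^{+}$: otherwise the claim would produce a feasible point of \eqref{OSP++} with \emph{strictly} smaller value, contradicting optimality; hence $\mathbf{y}^{*}\in Y_{WE}^{+}\cap Q$ is feasible for \eqref{OSP+} and, attaining the common value, is optimal for it. Therefore the two optimal solution sets are identical.

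I expect the main obstacle to be the projection claim — in particular, certifying that $T=[0,t^{*}]$ with $t^{*}$ finite and attained, which is exactly where all three items of Proposition \ref{p3.2} (closedness, reverse normality, and $\partial Y^{+}=Y_{WE}^{+}$), the boundedness below of the $f_{i}$, and the normality of $Q$ are used simultaneously. A secondary technical point is that $\mathcal{S}$, defined a priori only on $Y$, should be read as a monotone function on all of $Y^{+}$, the natural domain of \eqref{OSP++}, and the argument should stay consistent with the standing assumption $Q\subset\mathbb{R}^{m}_{+}$ when normality is invoked.
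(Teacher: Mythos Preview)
Your proposal is correct and follows essentially the same route as the paper: both arguments reduce the statement to the fact that any minimizer of the increasing function $\mathcal{S}$ over $Y^{+}\cap Q$ must lie on $\partial Y^{+}$, and then invoke Proposition~\ref{p3.2}(ii) to identify $\partial Y^{+}=Y_{WE}^{+}$. The only difference is that the paper obtains this boundary-attainment fact by citing Proposition~11 of \cite{tuy2000monotonic}, whereas you supply a self-contained proof via the diagonal projection $\mathbf{y}\mapsto\mathbf{y}-t^{*}\mathbf{1}$, which is exactly the construction underlying Tuy's result and has the merit of making explicit where closedness, reverse normality of $Y^{+}$, boundedness below of the $f_i$, and normality of $Q$ each enter.
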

\begin{proof}
From Proposition 11 \citep{tuy2000monotonic}, the minimum of $S$ over $Y^{+}\cap Q$, if it exists, is attained on $\partial Y^+\cap Q$. Assume $\mathbf{y}^*$ is the optimal solution of Problem \eqref{OSP++}, then $\mathbf{y}^*\in\partial Y^+\cap Q$. Use Proposition \ref{p3.2}, which implies $\mathbf{y}^*\in Y_{WE}^+\cap Q$. Therefore, the optimal solution sets of Problems \eqref{OSP+} and \eqref{OSP++} are identical.
\end{proof}
\begin{proposition}\label{p3.5}
Problem \eqref{OSP++} is a pseudoconvex programming problem with respect to $\mathbf{y}$, and Problem \eqref{ESP++} is a pseudoconvex programming problem with respect to $(\mathbf{x},\mathbf{y})$.
\end{proposition}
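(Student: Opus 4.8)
The plan is to unpack the definition of a pseudoconvex programming problem: it is the minimization of a differentiable pseudoconvex objective over a convex feasible set. So for each of the two problems \eqref{OSP++} and \eqref{ESP++} I must verify two things — convexity of the constraint set and pseudoconvexity of the objective — and then assemble them.

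First I would handle the feasible sets. For \eqref{OSP++} the feasible set is $Y^{+}\cap Q$; by Proposition \ref{p3.2}(iii) the set $Y^{+}$ is a closed convex set, and $Q$ is convex by the standing hypothesis on the split-feasibility constraint, so the intersection is convex. For \eqref{ESP++} the feasible set is $\{(\mathbf{x},\mathbf{y}) : \mathbf{x}\in X,\ \mathbf{y}\in Q,\ \mathcal{F}(\mathbf{x})\le\mathbf{y}\}$. Here $X\times Q$ is convex, and each scalar coupling constraint $f_i(\mathbf{x})-y_i\le 0$ is the $0$-sublevel set of the map $(\mathbf{x},\mathbf{y})\mapsto f_i(\mathbf{x})-y_i$, which is convex because $f_i$ is convex and $-y_i$ is linear; intersecting these $m$ convex sets with $X\times Q$ leaves a convex set.

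Next I would treat the objectives. For \eqref{OSP++} the objective $\mathcal{S}$ is pseudoconvex by assumption, so nothing further is needed once the feasible set is known to be convex. For \eqref{ESP++} the objective is $\mathcal{S}\circ\mathcal{F}$, and the key step is a composition lemma: if $\mathcal{S}$ is differentiable, nondecreasing and pseudoconvex and each $f_i$ is convex and differentiable, then $\mathbf{x}\mapsto\mathcal{S}(\mathcal{F}(\mathbf{x}))$ is pseudoconvex on $X$. I would prove this directly: by the chain rule $\nabla(\mathcal{S}\circ\mathcal{F})(\mathbf{x}_1)=J\mathcal{F}(\mathbf{x}_1)^{\top}\nabla\mathcal{S}(\mathcal{F}(\mathbf{x}_1))$, so for any $\mathbf{x}_1,\mathbf{x}_2\in X$,
\[
\langle\nabla(\mathcal{S}\circ\mathcal{F})(\mathbf{x}_1),\mathbf{x}_2-\mathbf{x}_1\rangle=\sum_{i=1}^m \partial_i\mathcal{S}(\mathcal{F}(\mathbf{x}_1))\,\langle\nabla f_i(\mathbf{x}_1),\mathbf{x}_2-\mathbf{x}_1\rangle .
\]
Using the gradient inequality $\langle\nabla f_i(\mathbf{x}_1),\mathbf{x}_2-\mathbf{x}_1\rangle\le f_i(\mathbf{x}_2)-f_i(\mathbf{x}_1)$ for convex $f_i$ together with $\partial_i\mathcal{S}\ge 0$ (which follows from monotonicity of $\mathcal{S}$ via a one-sided directional-derivative argument along the coordinate axes), the right-hand side is bounded above by $\langle\nabla\mathcal{S}(\mathcal{F}(\mathbf{x}_1)),\mathcal{F}(\mathbf{x}_2)-\mathcal{F}(\mathbf{x}_1)\rangle$. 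If $\mathcal{S}(\mathcal{F}(\mathbf{x}_2))<\mathcal{S}(\mathcal{F}(\mathbf{x}_1))$, pseudoconvexity of $\mathcal{S}$ forces that last inner product to be strictly negative, hence $\langle\nabla(\mathcal{S}\circ\mathcal{F})(\mathbf{x}_1),\mathbf{x}_2-\mathbf{x}_1\rangle<0$, which is exactly pseudoconvexity of $\mathcal{S}\circ\mathcal{F}$. Finally, since the \eqref{ESP++} objective does not depend on $\mathbf{y}$, its gradient block in $\mathbf{y}$ vanishes, so pseudoconvexity in $\mathbf{x}$ immediately upgrades to pseudoconvexity in $(\mathbf{x},\mathbf{y})$; combined with the convex feasible region this gives the claim.

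The main obstacle I anticipate is the composition lemma — specifically, pinning down the regularity and monotonicity hypotheses precisely enough: one needs $\mathcal{S}$ differentiable (consistent with the paper's differentiable notion of pseudoconvexity), one must convert ``monotonically increasing'' into the componentwise sign condition $\nabla\mathcal{S}\ge 0$, and one should note that $\mathcal{S}$ must be pseudoconvex and monotone on a convex set containing $Y^{+}\cap Q$, not merely on $Y$, so that $\mathcal{S}(\mathbf{y})$ in \eqref{OSP++} is legitimate. Everything else — convexity of the two feasible regions and the $\mathbf{y}$-independence remark — is routine once the lemma is in hand, and one could alternatively cite the analogous composition result from the scalarization framework of \citep{tuan2023framework} rather than reprove it.
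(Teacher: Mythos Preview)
Your proposal is correct and follows the same two-step outline as the paper's own proof: first verify that the feasible sets $Y^{+}\cap Q$ and $\{(\mathbf{x},\mathbf{y}):\mathbf{x}\in X,\ \mathbf{y}\in Q,\ \mathcal{F}(\mathbf{x})\le\mathbf{y}\}$ are convex, then argue that the objective is pseudoconvex. The main difference is one of rigor on the second step for \eqref{ESP++}: the paper simply asserts that ``$\mathcal{S}(\mathcal{F}(\mathbf{x}))$ is a convex function,'' which would in fact require $\mathcal{S}$ to be convex rather than merely pseudoconvex as assumed; your explicit composition lemma --- chain rule plus the gradient inequality for convex $f_i$ together with $\partial_i\mathcal{S}\ge 0$ from monotonicity, then pseudoconvexity of $\mathcal{S}$ to force the inner product strictly negative --- is the correct way to justify pseudoconvexity of $\mathcal{S}\circ\mathcal{F}$ under the stated hypotheses. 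Your closing caveat that $\mathcal{S}$ must be pseudoconvex and monotone on a convex set containing $Y^{+}\cap Q$, not just on $Y$, is likewise a genuine point the paper glosses over.
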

\begin{proof}
Because each $f_i(\mathbf{x})$ is a convex function on a nonempty convex set $X$, and $Y^{+}$ is a full-dimension closed convex set. Moreover, $S$ is a monotonically increasing function and pseudoconvex on $Y$. Therefore, Problem \eqref{OSP++} is a pseudoconvex programming problem with respect to $\mathbf{y}$.

If $f_i(\mathbf{x})$ are convex functions, then $\mathcal{F}(\mathbf{x})-\mathbf{y}$ is convex constraint, and $\mathcal{S}\left(\mathcal{F}(\mathbf{x})\right)$ is a convex function on $X,Y$. Hence, $\mathcal{S}\left(\mathcal{F}(\mathbf{x})\right)$ is a convex function with respect to $(\mathbf{x},\mathbf{y})$. Furthermore, $X, Q$ are nonempty convex sets in $\mathbb{R}^n$ and $\mathbb{R}^m$, respectively. Problem \eqref{ESP++} is a pseudoconvex programming problem with respect to $(\mathbf{x},\mathbf{y})$.
\end{proof}
From Proposition \ref{p3.5}, Problem \eqref{ESP++} is a pseudoconvex programming problem. Therefore, each local minimization solution is also a global minimization solution \citep{mangasarian1994nonlinear}. So, we can solve it using gradient descent algorithms, such as \citep{adaptiveThang}, or neurodynamics methods, such as \citep{liu2022one, xu2020neurodynamic, bian2018neural}.

These methods solely assist in locating the Pareto solution associated with the provided reference vector. In numerous instances, however, we are concerned with whether the resulting solution is controllable and whether we are interested in more than one predefined direction because the trade-off is unknown before training or the decision-makers decisions vary. Designing a model that can be applied at inference time to any given preference direction, including those not observed during training, continues to be a challenge. Furthermore, the model should be capable of dynamically adapting to changes in decision-maker references. This issue is referred to as controllable Pareto front learning (CPFL) and will be elaborated upon in the following section.

\section{Controllable Pareto Front Learning with Split Feasibility Constraints}
\citep{tuan2023framework} was the first to introduce Controllable Pareto Front Learning. They train a single hypernetwork to produce a Pareto solution from a collection of input reference vectors using scalarization problem theory. Our study uses a weighted Chebyshev function based on the coordinate transfer method to find Pareto solutions that align with how DM's preferences change over time with $\mathcal{S}\left(\mathcal{F}\left(\mathbf{x}\right),\mathbf{r}\right):= \underset{i=1,\dots,m}{\max}\left\{r_i\left(f_i\left(\mathbf{x}\right)-\mathbf{a}_i\right)\right\}$. Moreover, we also consider $Q= Q_1 \times Q_2 \times\dots\times Q_m$ where $Q_i$ is a box constraint such that $f_i(\mathbf{x})\in [\mathbf{a}_i, \mathbf{b}_i], \mathbf{a}_i \ge 0, i=1,\dots, m$. From the definition of the normal set, then $Q$ is a normal set. Therefore, the controllable Pareto front learning problem is modeled in the following manner by combining the properties of split feasibility constraints:
\begin{align*}\label{LP}\tag{LP}
 \phi^* &= \argmin_{\phi} \mathbb{E}_{\mathbf{r} \sim Dir(\alpha)} \text{ }\big[\underset{i=1,\dots,m}{\max}\left\{\mathbf{r}_i\left(f_i\left(h\left(\mathbf{r}, \phi\right)\right)-\mathbf{a}_i\right)\right\}\big]\\
    \text{ s.t. } & h\left(\mathbf{r}, \phi\right) \in X \\
    &  \mathcal{F}(h\left(\mathbf{r}, \phi\right)) \le \mathbf{b},
\end{align*}
where $h: \mathcal{P} \times \mathbb{R}^{n} \rightarrow \mathbb{R}^{n}$ is a hypernetwork, and $Dir(\alpha)$ is Dirichlet distribution with concentration parameter $\alpha >0$.
\begin{theorem}\label{t4.1}
    If $\mathbf{x}^*$ is an optimal solution of Problem \eqref{SMOP}, then there exists a reference vector $\mathbf{r} \left(\mathbf{r}_i>0\right)$ such that $\mathbf{x}^*$ is also an optimal solution of Problem \eqref{LP}.
\end{theorem}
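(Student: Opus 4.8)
The plan is to realize $\mathbf{x}^*$ as the minimizer, for a suitable strictly positive reference vector $\mathbf{r}$, of the weighted Chebyshev scalarization appearing in \eqref{LP}, and then to use the fact that an optimal hypernetwork must return such a minimizer on each input preference. Write $g_{\mathbf{r}}(\mathbf{x}):=\max_{i=1,\dots,m}\mathbf{r}_i\big(f_i(\mathbf{x})-\mathbf{a}_i\big)$ and $\Omega:=\{\mathbf{x}\in X:\mathcal{F}(\mathbf{x})\le\mathbf{b}\}$. For the optimizing $\phi^*$, Problem \eqref{LP} reduces to solving $\min_{\mathbf{x}\in\Omega}g_{\mathbf{r}}(\mathbf{x})$ for ($Dir(\alpha)$-almost) every $\mathbf{r}$ in the simplex, so it suffices to exhibit one $\mathbf{r}$ with all $\mathbf{r}_i>0$ for which $\mathbf{x}^*\in\argmin_{\mathbf{x}\in\Omega}g_{\mathbf{r}}(\mathbf{x})$.

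First I would unpack the hypothesis: since $\mathbf{x}^*$ solves \eqref{SMOP}, we have $\mathbf{x}^*\in X_{WE}$ and $\mathcal{F}(\mathbf{x}^*)\in Q=Q_1\times\cdots\times Q_m$, hence $\mathbf{a}_i\le f_i(\mathbf{x}^*)\le\mathbf{b}_i$ for every $i$; in particular $\mathbf{x}^*\in X$ and $\mathcal{F}(\mathbf{x}^*)\le\mathbf{b}$, so $\mathbf{x}^*\in\Omega$ and feasibility of $\mathbf{x}^*$ for \eqref{LP} is automatic. Assuming, as is standard for the coordinate-transfer Chebyshev scalarization (in which $\mathbf{a}$ is a utopia/ideal lower bound below the Pareto front), that $f_i(\mathbf{x}^*)>\mathbf{a}_i$ for all $i$, set $\tilde r_i:=1/\big(f_i(\mathbf{x}^*)-\mathbf{a}_i\big)>0$ and normalize $\mathbf{r}_i:=\tilde r_i/\sum_{j=1}^m\tilde r_j$, so that $\mathbf{r}$ lies in the relative interior of the simplex (the support of $Dir(\alpha)$) with all $\mathbf{r}_i>0$. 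By construction $\mathbf{r}_i\big(f_i(\mathbf{x}^*)-\mathbf{a}_i\big)=c$ for every $i$, where $c:=1/\sum_j\tilde r_j$, and therefore $g_{\mathbf{r}}(\mathbf{x}^*)=c$.

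Next I would establish optimality of $\mathbf{x}^*$ for $g_{\mathbf{r}}$ over $\Omega$ by contradiction. If some $\mathbf{x}\in\Omega$ satisfied $g_{\mathbf{r}}(\mathbf{x})<c$, then $\mathbf{r}_i\big(f_i(\mathbf{x})-\mathbf{a}_i\big)<c=\mathbf{r}_i\big(f_i(\mathbf{x}^*)-\mathbf{a}_i\big)$ for all $i$; dividing by $\mathbf{r}_i>0$ gives $f_i(\mathbf{x})<f_i(\mathbf{x}^*)$ for all $i$. Since $\mathbf{x}\in\Omega\subset X$, this is a point of $X$ that strictly improves $\mathbf{x}^*$ in every objective, contradicting $\mathbf{x}^*\in X_{WE}$. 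Hence $g_{\mathbf{r}}(\mathbf{x})\ge c=g_{\mathbf{r}}(\mathbf{x}^*)$ for all $\mathbf{x}\in\Omega$, so $\mathbf{x}^*$ minimizes the integrand of \eqref{LP} at this $\mathbf{r}$; together with the feasibility noted above and the pointwise-minimization characterization of $\phi^*$, this makes $\mathbf{x}^*$ an optimal solution of \eqref{LP} for this $\mathbf{r}$. Note that no convexity or pseudoconvexity of the $f_i$ is needed in this step, which is precisely why the Chebyshev scalarization rather than a weighted sum is used.

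The main obstacle is twofold. The mild part is the boundary case $f_i(\mathbf{x}^*)=\mathbf{a}_i$ for some coordinate, where $\tilde r_i$ blows up; I would resolve it either by the modeling convention that $\mathbf{a}$ lies strictly below the Pareto front, or by a limiting argument that keeps the weights on the active coordinates of $\mathbf{x}^*$ of the form above while letting the (still positive) weights on the remaining coordinates tend to $0$. The more substantive part is to make rigorous the assertion that $\mathbf{x}^*$ ``is an optimal solution of \eqref{LP}'': one must argue that minimizing $\mathbb{E}_{\mathbf{r}\sim Dir(\alpha)}\big[g_{\mathbf{r}}(h(\mathbf{r},\phi))\big]$ over $\phi$ forces $h(\mathbf{r},\phi^*)$ to minimize $g_{\mathbf{r}}(\cdot)$ over $\Omega$ for $Dir(\alpha)$-almost every $\mathbf{r}$, and that the hypernetwork class is expressive enough to represent the resulting preference-to-solution map. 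This is where the universal-approximation theory for the sequence-to-sequence hypernetwork is invoked, and I expect it to require the most care; the Chebyshev realizability argument itself is short.
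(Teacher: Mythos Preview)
The paper states Theorem~\ref{t4.1} without proof; it is presented as a consequence of the weighted Chebyshev scalarization theory underlying \citep{tuan2023framework}, and the text moves directly to Algorithm~\ref{alg:hypertrans1}. So there is no paper-side argument to compare against line by line.

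Your proposal is the standard and correct route for this kind of statement. Choosing $\mathbf{r}_i$ proportional to $1/(f_i(\mathbf{x}^*)-\mathbf{a}_i)$ so that all terms in the Chebyshev max are equal at $\mathbf{x}^*$, and then using weak Pareto optimality to rule out any strict improver, is exactly the classical argument (cf.\ Miettinen, Ehrgott) that every weakly efficient point is recovered by some strictly positive Chebyshev weight. Your handling of feasibility for \eqref{LP} via $\mathbf{x}^*\in\Omega=\{\mathbf{x}\in X:\mathcal{F}(\mathbf{x})\le\mathbf{b}\}$ is also correct, and you are right that convexity of the $f_i$ plays no role here.

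You have also correctly identified the one genuinely delicate point that the paper leaves implicit: Problem \eqref{LP} is an optimization over $\phi$ with an expectation over $\mathbf{r}$, so the phrase ``$\mathbf{x}^*$ is an optimal solution of \eqref{LP}'' must be read as ``$\mathbf{x}^*$ minimizes the integrand $g_{\mathbf{r}}(\cdot)$ over $\Omega$ at this particular $\mathbf{r}$, hence equals $h(\mathbf{r},\phi^*)$ for an ideal $\phi^*$.'' The paper adopts this reading tacitly (see the Remark following Theorem~\ref{t4.4}), and the universal-approximation results (Theorems~\ref{theorem_mlp} and~\ref{theorem_trans}) are what justify the existence of such a $\phi^*$. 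Your limiting treatment of the boundary case $f_i(\mathbf{x}^*)=\mathbf{a}_i$ is the usual fix and is fine; the paper's modeling convention that $\mathbf{a}$ is a strict lower bound (``coordinate transfer method'') sidesteps it in practice.
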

The pseudocode that solves Problem \eqref{LP} is presented in Algorithm \ref{alg:hypertrans1}. In contrast to the algorithm proposed by \cite{tuan2023framework}, our approach incorporates upper bounds on the objective function during the inference phase and lower bounds during model training. The model can weed out non-dominated Pareto solutions and solutions that do not meet the split feasibility constraints by adding upper-bounds constraints during post-processing.

Moreover, we propose building a Hypernetwork based on the Transformer architecture instead of the MLP architecture used in other studies \citep{navon2020learning,hoang2022improving,tuan2023framework}. Take advantage of the universal approximation theory of sequence-to-sequence function and the advantages of Transformer's Attention Block over traditional CNN or MLP models \citep{cordonnier2019relationship, li2021can}.

\subsection{Hypernetwork-Based Multilayer Perceptron}
We define a Hypernetwork-Based Multilayer Perceptron (Hyper-MLP) $h$ is a function of the form:
\begin{align*}\tag{Hyper-MLP}\label{Hyper-MLP}
    \mathbf{x}_{\mathbf{r}} &= h_{\text{mlp}}(\mathbf{r} ;[\boldsymbol{W}, \boldsymbol{b}])\\
    &=W^k \cdot \sigma\left(W^{k-1} \ldots \sigma\left(W^1\mathbf{a}+b^1\right)+b^{k-1}\right)\nonumber,
\end{align*}
with weights $W^i \in \mathbb{R}^{k_{i+1} \times k_i}$ and biases $b^i \in \mathbb{R}^{k_{i+1}}$, for some $k_i \in \mathbb{N}$. In addition, $\phi=[\boldsymbol{W}, \boldsymbol{b}]$ accumulates the parameters of the hypernetwork. The function $\sigma$ is a non-linear activation function, typically ReLU, logistic function, or hyperbolic tangent. An illustration is shown in Figure \ref{fig:wo_join_input}a.
\begin{theorem}\citep{cybenko1989approximation}
Let $\sigma$ be any continuous sigmoidal function. Then finite sums of the form
\begin{align*}
    g(x)=\sum_{j=1}^N \alpha_j \sigma\left(y_j^{\mathrm{T}} x+\theta_j\right),
\end{align*}
are dense in $C\left(I_n\right)$. In other words, given any $f \in C\left(I_n\right)$ and $\varepsilon>0$, there is a sum, $g(x)$, of the above form, for which
\begin{align*}
    |g(x)-f(x)|<\varepsilon \quad \text { for all } \quad x \in I_n .
\end{align*}
\end{theorem}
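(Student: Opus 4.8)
The plan is to follow the classical functional-analytic argument: reduce density to a statement about annihilating measures, and then show that a continuous sigmoidal function is \emph{discriminatory}. Write $C(I_n)$ for the continuous functions on the cube $I_n=[0,1]^n$ with the sup norm, and let $S\subset C(I_n)$ be the linear span of the maps $x\mapsto\sigma(y^{\mathrm T}x+\theta)$, i.e.\ the set of all finite sums $g(x)=\sum_{j=1}^N\alpha_j\sigma(y_j^{\mathrm T}x+\theta_j)$. This $S$ is a linear subspace, so its closure $\overline S$ is a closed subspace, and the goal is $\overline S=C(I_n)$. Suppose not; then $\overline S$ is a proper closed subspace, and by the Hahn--Banach theorem there is a nonzero bounded linear functional $L$ on $C(I_n)$ vanishing on $\overline S$. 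By the Riesz representation theorem, $L(f)=\int_{I_n}f\,d\mu$ for some nonzero finite signed regular Borel measure $\mu$ on $I_n$. Since $x\mapsto\sigma(y^{\mathrm T}x+\theta)$ lies in $S$ for every $y\in\mathbb R^n$ and $\theta\in\mathbb R$, we get $\int_{I_n}\sigma(y^{\mathrm T}x+\theta)\,d\mu(x)=0$ for all such $y,\theta$.

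The core of the argument is to show that this forces $\mu=0$. First I would fix $y,\theta,\phi$ and examine the pointwise limit of $\sigma(\lambda(y^{\mathrm T}x+\theta)+\phi)$ as $\lambda\to+\infty$: since $\sigma$ is sigmoidal, this limit equals $1$ on the open half-space $H_{y,\theta}=\{x:y^{\mathrm T}x+\theta>0\}$, equals $0$ on $\{y^{\mathrm T}x+\theta<0\}$, and equals $\sigma(\phi)$ on the hyperplane $\Pi_{y,\theta}=\{y^{\mathrm T}x+\theta=0\}$. Because $\sigma$ is bounded, the bounded convergence theorem gives $\sigma(\phi)\,\mu(\Pi_{y,\theta})+\mu(H_{y,\theta})=0$ for every $\phi$; letting $\phi\to-\infty$ and then $\phi\to+\infty$ yields $\mu(H_{y,\theta})=0$ and $\mu(\Pi_{y,\theta})=0$ for all $y,\theta$.

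Next I would fix $y$ and consider the bounded linear functional $h\mapsto\int_{I_n}h(y^{\mathrm T}x)\,d\mu(x)$ on bounded measurable $h$ on $\mathbb R$. By the previous step it vanishes on indicators of half-lines and points, hence on step functions, hence — by bounded convergence again — on all bounded measurable $h$. Applying this to $h(u)=\cos(su)$ and $h(u)=\sin(su)$ gives $\int_{I_n}e^{\,is\,y^{\mathrm T}x}\,d\mu(x)=0$ for all $s\in\mathbb R$, $y\in\mathbb R^n$, so the Fourier--Stieltjes transform of $\mu$ vanishes identically; by uniqueness of the Fourier transform of a finite measure, $\mu=0$, contradicting $\mu\ne0$. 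Hence $\overline S=C(I_n)$, which is the claim.

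The step I expect to be the main obstacle is the discriminatory lemma: one must disentangle the contributions of the half-space $H_{y,\theta}$ and the boundary hyperplane $\Pi_{y,\theta}$ by varying $\phi$ and invoking the asymptotics of $\sigma$ at $\pm\infty$, and then pass cleanly from "vanishes on half-space and hyperplane indicators" to "vanishes against every bounded measurable $h(y^{\mathrm T}\cdot)$," which is exactly where the second bounded-convergence approximation is needed. The Hahn--Banach/Riesz reduction is routine, so essentially all of the work lies in showing that a continuous sigmoidal function is discriminatory.
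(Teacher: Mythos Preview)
Your proposal is correct and faithfully reproduces Cybenko's original functional-analytic proof: the Hahn--Banach/Riesz reduction to an annihilating signed measure, the discriminatory lemma obtained by scaling the affine argument and applying bounded convergence, and the Fourier-transform step to force $\mu=0$. Note, however, that the paper does not supply its own proof of this theorem at all; it simply quotes the statement and attributes it to \citep{cybenko1989approximation}, using it as a background universal-approximation result. So there is no ``paper's proof'' to compare against --- your argument is essentially the proof from the cited source, and it is sound.
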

It has been known since the 1980s \citep{cybenko1989approximation, hornik1989multilayer} that feed-forward neural nets with a single hidden layer can approximate essentially any function if the hidden layer is allowed to be arbitrarily wide. Such results hold for a wide variety of activations, including ReLU. However, part of the recent renaissance in neural nets is the empirical observation that deep neural nets tend to achieve greater expressivity per parameter than their shallow cousins.
\begin{theorem}\citep{hanin2017approximating}\label{theorem_mlp}
    For every continuous function $f:[0,1]^{d_{\text {in }}} \rightarrow \mathbb{R}^{d_{\text {out }}}$ and every $\varepsilon>0$ there is a Hyper-MLP $h$ with ReLU activations, input dimension $d_{\text {in }}$, output dimension $d_{\text {out }}$,  hidden layer widths $d_{\text {in }} = d_1, d_2,\dots,d_k,d_{k+1}=d_{\text {out}}$ that $\varepsilon$-approximates $f$ :
    \begin{align*}
        \sup _{\mathbf{x} \in[0,1]^{d_{\text {in }}}}\left\|f(\mathbf{x})-h(\mathbf{r})\right\| \leq \varepsilon.
    \end{align*}
\end{theorem}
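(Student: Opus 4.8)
I would reduce the vector-valued claim to the scalar case, approximate $f$ uniformly on the cube by a continuous piecewise-linear (PWL) interpolant on a fine triangulation, realize that interpolant \emph{exactly} by a ReLU Hyper-MLP, and finally stack the $d_{\text{out}}$ scalar networks in parallel. Note this statement is a slight restatement of the shallow result quoted just above (Cybenko/Hornik); the extra content is only that one can take the $\mathrm{ReLU}$ activation and an arbitrary prescribed depth $k$, so padding arguments will do the bookkeeping.

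\textbf{Step 1 (uniform PWL approximation).} Since $[0,1]^{d_{\text{in}}}$ is compact, $f$ is uniformly continuous, so choose $\delta>0$ with $\|\mathbf{x}-\mathbf{y}\|\le\delta\Rightarrow\|f(\mathbf{x})-f(\mathbf{y})\|\le\varepsilon$. Fix any triangulation of $[0,1]^{d_{\text{in}}}$ whose simplices have diameter $<\delta$ (e.g. the Kuhn/Freudenthal triangulation of a sufficiently fine grid), and let $\tilde f$ be affine on each simplex with $\tilde f(v)=f(v)$ at every vertex $v$. Writing a point of a simplex as a convex combination $\mathbf{x}=\sum_i\lambda_i v_i$ of its vertices gives $\|f(\mathbf{x})-\tilde f(\mathbf{x})\| = \big\|\sum_i\lambda_i\big(f(\mathbf{x})-f(v_i)\big)\big\|\le\varepsilon$, so it suffices to represent each scalar component $\tilde f_\ell$, $\ell=1,\dots,d_{\text{out}}$, exactly by a ReLU Hyper-MLP.

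\textbf{Step 2 (exact ReLU realization of a continuous PWL function).} This is the heart of the argument. A continuous PWL function $g:\mathbb{R}^{d_{\text{in}}}\to\mathbb{R}$ admits a lattice (max--min) representation $g=\max_{k}\min_{j\in S_k}\ell_j$ for finitely many affine maps $\ell_j$; equivalently, the class of functions computed by ReLU networks coincides with the continuous PWL functions. The only scalar gadgets needed are $\max(a,b)=b+\mathrm{ReLU}(a-b)$, $\min(a,b)=a-\mathrm{ReLU}(a-b)$, and the identity $\mathrm{id}(t)=\mathrm{ReLU}(t)-\mathrm{ReLU}(-t)$; the last one lets one pad layers so that all branches share a common depth, and in fact lets one extend any network to any larger prescribed depth $k$. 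Composing these over the finitely many $\ell_j$ in the representation of $\tilde f_\ell$ yields a ReLU Hyper-MLP computing $\tilde f_\ell$ exactly with some finite hidden widths. (If one additionally wants the sharp, Hanin-style width bound of order $d_{\text{in}}+d_{\text{out}}$, one instead threads the raw input together with a running accumulator through the layers, incorporating one linear region at a time; exactness then comes from the max--min identities above, and the width control from this region-by-region scheduling.)

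\textbf{Step 3 (assemble and conclude); main obstacle.} Run the $d_{\text{out}}$ scalar networks of Step 2 in parallel --- block-diagonally stacking their weight matrices and equalizing depths to the target $k$ via the identity gadget --- to obtain a single Hyper-MLP $h$ with $h(\mathbf{x})=\tilde f(\mathbf{x})$ for all $\mathbf{x}\in[0,1]^{d_{\text{in}}}$. Then $\sup_{\mathbf{x}\in[0,1]^{d_{\text{in}}}}\|f(\mathbf{x})-h(\mathbf{x})\| = \sup_{\mathbf{x}}\|f(\mathbf{x})-\tilde f(\mathbf{x})\| \le \varepsilon$, which is the assertion. The genuinely nontrivial point is Step 2 --- that an arbitrary continuous PWL function is \emph{exactly}, not merely approximately, a ReLU network --- and, if the explicit prescribed widths in the statement are to be honored, doing this while keeping every hidden width under control; the max--min representation supplies exactness, and the accumulator construction supplies the width bound, while depth is free by the identity-padding trick.
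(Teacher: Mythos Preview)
The paper does not prove this theorem; it is quoted from \cite{hanin2017approximating} as a known result and stated without argument. There is therefore no ``paper's own proof'' to compare against.

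Your outline is a correct and standard route to the bare universal-approximation claim: uniform PWL approximation on a fine triangulation of the cube, exact ReLU realization of a continuous PWL function via the max--min lattice representation and the gadgets $\max(a,b)=b+\mathrm{ReLU}(a-b)$, $\mathrm{id}(t)=\mathrm{ReLU}(t)-\mathrm{ReLU}(-t)$, then identity-padding and block-diagonal stacking to handle depth and vector output. The one genuinely nontrivial ingredient --- that every continuous PWL function on $\mathbb{R}^{d_{\text{in}}}$ admits such a lattice representation --- you correctly flag as the main obstacle; it is a known structural fact (Tarela--Mart\'inez, Wang--Sun, later Arora et al.), so invoking it is legitimate, though a self-contained proof would have to supply it.

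For context, Hanin's own argument is organized differently: rather than going through the lattice representation, it is a direct layer-by-layer construction that carries the raw input coordinates forward while maintaining a running scalar accumulator, incorporating one affine region (or one ``bump'') per block of layers. That design is what yields the sharp width bound of order $d_{\text{in}}+d_{\text{out}}$ that the theorem's listing of hidden widths alludes to. You mention this accumulator scheme in passing; if the intent is only the existence statement as written (with unspecified $d_2,\dots,d_k$), your lattice route is cleaner, while Hanin's buys the explicit width control.
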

% \begin{figure*}[h]
%     \centering
%     \includegraphics[width=1.\textwidth]{Hyper_MLP.pdf}
%     \caption{.}
%     \label{fig:hyper_mlp}
% \end{figure*}

\subsection{ Hypernetwork-Based Transformer block}
The paper \cite{yun2019transformers} gave a clear mathematical explanation of contextual mappings and showed that multi-head self-attention layers can correctly calculate contextual mappings for input sequences. They show that the capacity to calculate contextual mappings and the value mapping capability of the feed-forward layers allows transformers to serve as universal approximators for any permutation equivariant sequence-to-sequence function. There are well-known results for approximation, like how flexible Transformer networks are at it \citep{yun2019transformers}. Its sparse variants can also universally approximate any sequence-to-sequence function \citep{yun2020n}.

A transformer block is a sequence-to-sequence function mapping $\mathbb{R}^{d \times n}$ to $\mathbb{R}^{d \times n}$. It consists of two layers: a multi-head self-attention layer and a token-wise feed-forward layer, with both layers having a skip connection. More concretely, for an input $\mathbf{r} \in \mathbb{R}^{d \times m}$ consisting of $d$-dimensional embeddings of $m$ tasks, a Transformer block with multiplicative or dot-product attention \citep{luong2015effective} consists of the following two layers. We propose a hypernetwork-based transformer block (Hyper-Trans) as follows:
\begin{align*}\tag{Hyper-Trans}\label{Hyper-Trans}
    \mathbf{x}_{\mathbf{r}} &= h_{\text{trans}}(\mathbf{r};\phi)=\operatorname{MultiHeadAttn}(\mathbf{r})+\operatorname{MLP}(\mathbf{r}),
\end{align*}
with:
\begin{align*}
    \operatorname{MultiHeadAttn}(\mathbf{r}) & =\mathbf{r}+\sum_{i=1}^h \boldsymbol{W}_O^i \boldsymbol{W}_V^i \mathbf{r} \cdot \sigma\left[\left(\boldsymbol{W}_K^i \mathbf{r}\right)^T \boldsymbol{W}_Q^i \mathbf{r}\right], \\
    \operatorname{MLP}(\mathbf{r}) & =\boldsymbol{W}_2 \cdot \operatorname{ReLU}\left(\boldsymbol{W}_1 \cdot \operatorname{MultiHeadAttn}(\mathbf{r}) +\boldsymbol{b}_1 \mathbf{1}_n^T\right)+\boldsymbol{b}_2 \mathbf{1}_n^T,\\
\end{align*}
where $\boldsymbol{W}_O^i \in \mathbb{R}^{d \times k}, \boldsymbol{W}_V^i, \boldsymbol{W}_K^i, \boldsymbol{W}_Q^i \in \mathbb{R}^{k \times d}, \boldsymbol{W}_2 \in \mathbb{R}^{d \times r}, \boldsymbol{W}_1 \in \mathbb{R}^{r \times d}, \boldsymbol{b}_2 \in \mathbb{R}^d, \boldsymbol{b}_1 \in \mathbb{R}^r$, and $\operatorname{MLP}(\mathbf{r})$ is multilayer perceptron block with a ReLU activation function. Additionally, we can also replace the ReLU function with the GeLU function. The number of heads $e$ and the head size $k$ are two main parameters of the attention layer, and $l$ denotes the hidden layer size of the feed-forward layer.

We would like to point out that our definition of the Multi-Head Attention layer is the same as \cite{vaswani2017attention}, in which they combine attention heads and multiply them by a matrix $\boldsymbol{W}_O \in \mathbb{R}^{d \times k e}$. One difference in our setup is the absence of layer normalization, which simplifies our analysis while preserving the basic architecture of the transformer.

We define transformer networks as the composition of Transformer blocks. The family of the sequence-to-sequence functions corresponding to the Transformers can be defined as:
\begin{align*}
    \mathcal{T}^{e, k, l}:=\left\{h\right\},
\end{align*}
where $h: \mathbb{R}^{d \times m} \rightarrow \mathbb{R}^{d \times m}$ is a composition of Transformer blocks $t^{e, k, l}: \mathbb{R}^{d \times m} \rightarrow \mathbb{R}^{d \times m}$ denotes a Transformer block defined by an attention layer with $e$ heads of size $k$ each, and a feed-forward layer with $l$ hidden nodes. An illustration is shown in Figure \ref{fig:wo_join_input}b.
\begin{theorem}\citep{yun2019transformers}\label{theorem_trans}
    Let $\mathbb{F}$ be the sequence-to-sequence function class, which consists of all continuous permutation equivariant functions with compact support that map $\mathbb{R}^{d \times m} \rightarrow \mathbb{R}^{d \times m}$. For $1 \leq p<\infty$ and $\epsilon>0$, then for any given $f \in \mathbb{F}$, there exists a Transformer network $h \in \mathcal{T}^{2,1,4}$, such that:
    \begin{align*}
        \mathrm{d}_p\left(h, f\right):=\left(\int\left\|h(\mathbf{r})-f\right\|_p^p d \mathbf{r}\right)^{1 / p} < \epsilon.
    \end{align*}
\end{theorem}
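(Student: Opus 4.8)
The plan is to reconstruct the three-stage reduction behind this universal approximation statement. The starting observation is that every Transformer block is permutation equivariant: the token-wise feed-forward layer commutes with permutations of the $m$ columns of $\mathbf{r}$, and the self-attention layer does too, because permuting columns permutes the rows and columns of the attention matrix in matching ways. Hence every $h\in\mathcal{T}^{e,k,l}$ is permutation equivariant, which is exactly why $\mathbb{F}$ is restricted to permutation equivariant $f$. \textbf{Stage 1: quantize the target.} Since $f$ is continuous with compact support it is uniformly continuous; enclose the support in a cube, partition that cube into sub-cubes of side $\delta$, and let $\bar{f}$ be the piecewise-constant function equal to the value of $f$ at a representative point of each sub-cube, choosing the partition symmetric under column permutations so that $\bar{f}$ remains permutation equivariant. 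Uniform continuity gives $\mathrm{d}_p(\bar{f},f)<\epsilon/3$ once $\delta$ is small enough.

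\textbf{Stage 2: realize $\bar{f}$ exactly with a modified Transformer.} Work first inside the larger class $\overline{\mathcal{T}}$ of networks whose attention uses the hardmax operator and whose feed-forward layers may use step or piecewise-linear activations, and build $\bar{g}\in\overline{\mathcal{T}}$ from three sub-blocks. First, a stack of feed-forward layers quantizes every coordinate of the input onto the $\delta$-grid, so that only finitely many input matrices remain to be handled. Second, a stack of self-attention layers implements a \emph{contextual mapping} $q$: it sends each admissible quantized input $\mathbf{r}$ to a tuple $q(\mathbf{r})$ of reals whose entries are pairwise distinct and such that the value sets $q(\mathbf{r})$ and $q(\mathbf{r}')$ are disjoint whenever $\mathbf{r}'$ is not a column permutation of $\mathbf{r}$. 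The mechanism is a \emph{selective shift}: with large weights an attention head lets each token read an order statistic such as the maximum of the other tokens and shift itself by a controlled increment, and iterating this over the $d$ coordinate slices encodes the entire unordered column multiset into separated scalars. Third, a final stack of feed-forward layers acts as a lookup table on the finite range of $q$, outputting the value that $\bar{f}$ assigns. Composing the three sub-blocks gives $\bar{g}=\bar{f}$ off the measure-zero grid boundaries, so $\mathrm{d}_p(\bar{g},\bar{f})<\epsilon/3$.

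\textbf{Stage 3: from modified to genuine Transformer.} Replacing hardmax by softmax costs nothing in the limit: scaling the query-key product by a large constant makes softmax converge uniformly to hardmax on the compact quantized domain, so the attention outputs move by less than any prescribed amount. The step or piecewise-linear activations are in turn replaced by ReLU combinations using Theorem \ref{theorem_mlp}; since only finitely many input patterns and finitely many layers are in play, the accumulated error stays controlled, and the head, size, and width bookkeeping can be pushed down to the class $\mathcal{T}^{2,1,4}$. This produces $g\in\mathcal{T}^{2,1,4}$ with $\mathrm{d}_p(g,\bar{g})<\epsilon/3$, and the triangle inequality $\mathrm{d}_p(g,f)\le \mathrm{d}_p(g,\bar{g})+\mathrm{d}_p(\bar{g},\bar{f})+\mathrm{d}_p(\bar{f},f)<\epsilon$ finishes the argument.

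\textbf{The main obstacle} is the second sub-step, the contextual mapping: proving that a bounded number of single-head, size-one attention layers can injectively and context-separatingly compress an entire token sequence into separated scalars. The selective-shift construction has to be arranged so that the reachable values coming from different quantized inputs never interleave, while still respecting permutation equivariance, and tracking this interleaving as one sweeps through the $d$ coordinate slices and $m$ tokens is the combinatorially delicate heart of the proof. The remaining ingredients, namely coordinate quantization, the lookup table, and the softmax and ReLU limiting arguments, are comparatively routine.
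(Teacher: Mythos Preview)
The paper does not actually prove this theorem; it is quoted verbatim from \cite{yun2019transformers} and used as a black box in the proof of Theorem~\ref{t4.4}. So there is no ``paper's own proof'' to compare against. That said, your three-stage sketch (piecewise-constant quantization of $f$, exact realization by a modified Transformer with hardmax and piecewise-linear activations built around a contextual mapping via selective shift operations, then approximation of the modified network by a genuine softmax/ReLU Transformer in $\mathcal{T}^{2,1,4}$) is a faithful outline of the argument in the cited source, and you have correctly identified the contextual-mapping construction as the technical core.
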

\subsection{Solution Constraint layer}
In many real-world applications, there could be constraints on the solution structure $\mathbf{x}$ across all preferences. The hypernetwork model can properly handle these constraints for all solutions via constraint layers.

We first begin with the most common constraint: that the decision variables are explicitly bounded. In this case, we can simply add a transformation operator to the output of the hyper-network:
\begin{align*}
    \mathbf{x}_{\mathbf{r}}=c\left(h\left(\mathbf{r}, \phi\right)\right),
\end{align*}
where $h\left(\mathbf{r}, \phi\right)$ is Hypernetwork and $c: \mathbb{R}^n \rightarrow \mathbb{R}^n$ is an activation function that maps arbitrary model output $h\left(\mathbf{r}; \cdot\right) \in(-\infty, \infty)^n$ into the desired bounded range. The activation function should be differentiable, and hence, we can directly learn the bounded hypernetwork by the gradient-based method proposed in the main paper. We introduce three typical bounded constraints and the corresponding activation functions in the following:

\textbf{Non-Negative Decision Variables.} We can set $c(\cdot)$ as the rectified linear function (ReLU):
\begin{align*}
    \mathbf{x}_{\mathbf{r}} = c(\mathbf{x}_{\mathbf{r}})=\max \{0, \mathbf{x}_{\mathbf{r}}\}.
\end{align*}
Which will keep the values for all non-negative inputs and set the rest to 0. In other words, all the output of hypernetwork will now be in the range $[0, \infty)$.

\textbf{Box-bounded Decision Variables.} We can set $c(\cdot)$ as the sigmoid function:
\begin{align*}
   \mathbf{x}_{\mathbf{r}} = c(\mathbf{x}_{\mathbf{r}})=\frac{1}{1+e^{-\mathbf{x}_{\mathbf{r}}}}.
\end{align*}

Now, all the decision variables will range from 0 to 1. It is also straightforward to other bounded regions with arbitrary upper and lower bounds for each decision variable.

\textbf{Simplex Constraints.} In some applications, a fixed amount of resources must be arranged for different agents or parts of a system. We can use the Softmax function $c(\cdot)$ where
\begin{align*}
   \mathbf{x}_{\mathbf{r}} = c(\mathbf{x}_{\mathbf{r}})=\frac{e^{\mathbf{x}^{i}_{\mathbf{r}}}}{\sum_{j=1}^n e^{\mathbf{x}^{i}_{\mathbf{r}}}}, \forall i \in\{1,2, \cdots, n\},
\end{align*}
such that all the generated solutions are on the simplex $\left\{\mathbf{x} \in \mathbb{R}^n \mid \sum_{i=1}^n \mathbf{x}_i=1\right.$ and $\mathbf{x}_i>0$ for $i=$ $1, \ldots, n\}$.

These bounded constraints are for each individual solution. With the specific activation functions, all (infinite) generated solutions will always satisfy the structure constraints, even for those with unseen contexts and preferences. This is also an anytime feasibility guarantee during the whole optimization process.
\begin{algorithm}[H]
\hsize=\textwidth % <--------- THE HACK!
\KwIn{Init $\phi_0,t=0$, $\mathbf{a}, \alpha$.}
\KwOut{$\phi^*$.}

\While{not converged}{
% \For{$iter \gets 1$ to $2$} {

    $\mathbf{r}_t = Dir(\alpha)$

    \eIf{MLP}{
    
    $\mathbf{x}_{\mathbf{r}_t} =  h_{\text{mlp}}\left(\mathbf{r}_t, \phi\right)$}
    {

    $\mathbf{x}_{\mathbf{r}_t} =  h_{\text{trans}}\left(\mathbf{r}_t, \phi\right)$
    }
    $\phi_{t+1} = \phi_t - \xi\nabla_{\phi}\mathcal{S}\left(\mathcal{F}\left(\mathbf{x}_{\mathbf{r}_t}\right),\mathbf{r}_t,\mathbf{a}\right)$
    
    $t=t+1$
    % }

}
$\phi^* = \phi_t$
\caption{: Hypernetwork training for Connected Pareto Front.}
\label{alg:hypertrans1}
\end{algorithm}
\begin{figure}[b]
    \centering
    \includegraphics[scale=0.5]{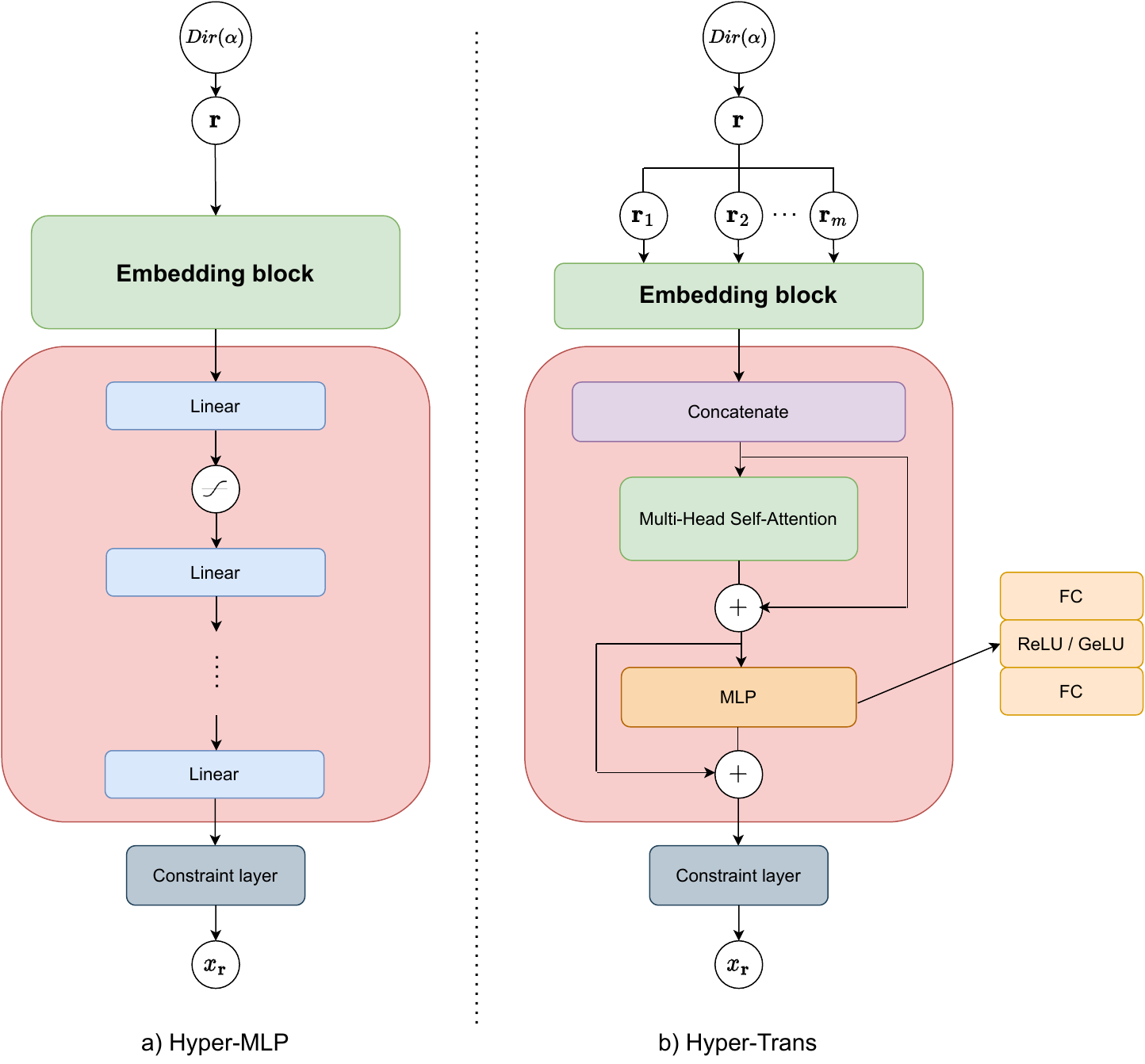}
    \caption{Hyper-MLP (\textit{left}) receives an input reference vector, Hyper-Trans (\textit{right}) receives each coordinate of the input reference vector and outputs the corresponding Pareto optimal solution.}
    \label{fig:wo_join_input}
\end{figure}
\begin{theorem}\label{t4.4}
    Let neural network $h$ be a set of multilayer perceptron or transformer blocks with $\sigma$ activation. Assume that $\phi^*$ is stationary point of Algorithm \ref{alg:hypertrans1} and $\nabla_{\phi}\mathbf{x}(\hat{\mathbf{r}};\phi^*)\neq 0$. Then $\mathbf{x}\left(\hat{\mathbf{r}}\right) = h\left(\hat{\mathbf{r}},\phi^*\right)$ is a global optimal solution to Problem \eqref{LP}, and there exists a neighborhood $U$ of $\hat{\mathbf{r}}$ and a smooth mapping $\mathbf{x}(\mathbf{r})$ such that $\mathbf{x}\left(\mathbf{r}^*\right)_{\mathbf{r}^*\in U}$ is also a global optimal solution to Problem \eqref{LP}. 
\end{theorem}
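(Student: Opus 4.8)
The plan is to prove the two assertions separately. \textit{For the first} — that $\mathbf{x}(\hat{\mathbf{r}})=h(\hat{\mathbf{r}},\phi^{*})$ solves \eqref{LP} at $\hat{\mathbf{r}}$ — I would first note that, since $\hat{\mathbf{r}}_i>0$ and each $f_i$ is convex, the composite objective $\mathbf{x}\mapsto\mathcal{S}\!\left(\mathcal{F}(\mathbf{x}),\hat{\mathbf{r}},\mathbf{a}\right)=\max_{i}\hat{\mathbf{r}}_i\bigl(f_i(\mathbf{x})-\mathbf{a}_i\bigr)$ is a pointwise maximum of convex functions, hence convex and in particular pseudoconvex on the convex feasible set $\{\mathbf{x}\in X:\mathcal{F}(\mathbf{x})\le\mathbf{b}\}$ (cf.\ Proposition \ref{p3.5}), so every stationary point of the scalarised problem is a global minimiser. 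It thus suffices to show $\mathbf{x}(\hat{\mathbf{r}})$ is stationary, and for this I would use that $\phi^{*}$ is a stationary point of Algorithm \ref{alg:hypertrans1}: then $\nabla_{\phi}\mathcal{S}\bigl(\mathcal{F}(h(\hat{\mathbf{r}},\phi)),\hat{\mathbf{r}},\mathbf{a}\bigr)$ vanishes at $\phi^{*}$, and by the chain rule it equals $\bigl[\nabla_{\phi}\mathbf{x}(\hat{\mathbf{r}};\phi^{*})\bigr]^{\top}\nabla_{\mathbf{x}}\mathcal{S}\bigl(\mathcal{F}(\mathbf{x}),\hat{\mathbf{r}},\mathbf{a}\bigr)\big|_{\mathbf{x}=\mathbf{x}(\hat{\mathbf{r}})}$. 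Invoking $\nabla_{\phi}\mathbf{x}(\hat{\mathbf{r}};\phi^{*})\neq0$ so that the output Jacobian does not annihilate the gradient, I obtain $\nabla_{\mathbf{x}}\mathcal{S}\bigl(\mathcal{F}(\mathbf{x}(\hat{\mathbf{r}})),\hat{\mathbf{r}},\mathbf{a}\bigr)=0$, i.e.\ Pareto stationarity; stationarity plus pseudoconvexity gives global optimality, and pushing this through Theorem \ref{t4.1} together with the equivalences in Propositions \ref{p3.1}--\ref{p3.4} identifies $\mathbf{x}(\hat{\mathbf{r}})$ as the weakly Pareto-optimal point of \eqref{SMOP} attached to $\hat{\mathbf{r}}$ and satisfying the box constraints, i.e.\ a global solution of \eqref{LP}.

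\textit{For the second assertion} I would apply the Implicit Function Theorem to the first-order optimality system. Writing $\Phi(\mathbf{x},\boldsymbol{\lambda},\mathbf{r})=0$ for the KKT conditions of $\min\mathcal{S}(\mathcal{F}(\mathbf{x}),\mathbf{r},\mathbf{a})$ s.t.\ $\mathbf{x}\in X,\ \mathcal{F}(\mathbf{x})\le\mathbf{b}$, the first part supplies a multiplier $\boldsymbol{\lambda}(\hat{\mathbf{r}})$ with $\Phi(\mathbf{x}(\hat{\mathbf{r}}),\boldsymbol{\lambda}(\hat{\mathbf{r}}),\hat{\mathbf{r}})=0$. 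Assuming the standard regularity at this point — a constraint qualification, strict complementarity, and nonsingularity of $\partial\Phi/\partial(\mathbf{x},\boldsymbol{\lambda})$ — the IFT yields a neighbourhood $U$ of $\hat{\mathbf{r}}$ and a unique smooth branch $\mathbf{r}\mapsto(\mathbf{x}(\mathbf{r}),\boldsymbol{\lambda}(\mathbf{r}))$ solving $\Phi=0$; shrinking $U$ keeps each $\mathbf{r}_i>0$ and the active index of the Chebyshev maximum constant, so $\mathcal{S}$ stays differentiable on $U$. Then, by pseudoconvexity, each $\mathbf{x}(\mathbf{r}^{*})$ with $\mathbf{r}^{*}\in U$ satisfies the first-order conditions of the corresponding scalarised problem and is therefore a global minimiser, i.e.\ a global solution of \eqref{LP} at $\mathbf{r}^{*}$; the map $\mathbf{r}\mapsto\mathbf{x}(\mathbf{r})$ is the claimed smooth mapping.

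\textit{The main obstacles} are the two non-routine steps. First, passing from $\phi$-stationarity to $\mathbf{x}$-stationarity: strictly, $[\nabla_{\phi}\mathbf{x}]^{\top}\nabla_{\mathbf{x}}\mathcal{S}=0$ forces $\nabla_{\mathbf{x}}\mathcal{S}=0$ only when the output Jacobian has full row rank, so I would either strengthen ``$\nabla_{\phi}\mathbf{x}(\hat{\mathbf{r}};\phi^{*})\neq0$'' to this rank condition or argue it holds generically for the over-parametrised Hyper-MLP/Hyper-Trans maps. Second, the non-smoothness of the weighted Chebyshev $\max$, which I handle by localising to the region where its argmax is a singleton before invoking the IFT and parametric-sensitivity results; if $\sigma$ is only piecewise smooth (ReLU), smoothness of $\mathbf{r}\mapsto h(\mathbf{r},\phi^{*})$ must likewise be read off locally. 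The remaining steps are chain-rule bookkeeping combined with the equivalences already established in Section 3.
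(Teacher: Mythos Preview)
Your argument for the first assertion is essentially the paper's: chain rule to pass from $\phi$-stationarity to $\mathbf{x}$-stationarity, then pseudoconvexity of the weighted Chebyshev scalarisation to upgrade to global optimality. The paper phrases it as a contradiction (assume non-optimality, invoke pseudoconvexity to get a strict directional-derivative inequality, and contradict the stationarity equation $\nabla S(\mathbf{x})\nabla_{\phi}\mathbf{x}(\hat{\mathbf{r}};\phi^{*})=0$), but the content is the same and your direct version is cleaner. Notably, the paper makes the identical leap you flag as an obstacle: it writes $\nabla S(\mathbf{x})\nabla_{\phi}\mathbf{x}=0$ and then asserts $\nabla_{\mathbf{x}}S(\mathbf{x})=0$ without a rank argument, so your caveat about strengthening $\nabla_{\phi}\mathbf{x}\neq 0$ to a full-row-rank condition applies equally to the original.

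For the second assertion your route is genuinely different and more substantive. The paper does not invoke the Implicit Function Theorem or KKT sensitivity at all; it simply picks any $\mathbf{r}^{*}$ in a neighbourhood $U$ of $\hat{\mathbf{r}}$ and says one can ``reiterate the procedure of optimizing Algorithm~\ref{alg:hypertrans1}'' to conclude $\mathbf{x}(\mathbf{r}^{*})$ is globally optimal---in effect re-running the first-part argument pointwise in $\mathbf{r}$, leaning on $\phi^{*}$ being stationary for the full expectation. This is brief but does not actually exhibit the smoothness of $\mathbf{r}\mapsto\mathbf{x}(\mathbf{r})$ claimed in the statement. Your IFT-on-KKT approach does deliver a smooth local branch, at the price of the extra regularity hypotheses (strict complementarity, nonsingular Jacobian, single active index in the $\max$) you carefully list; it is the more standard parametric-optimisation argument and buys you the smoothness that the paper's proof leaves implicit.
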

\begin{proof}
    Assume that $\mathbf{x}\left(\hat{\mathbf{r}}\right)$ is not a local optimal solution to Problem \eqref{LP}. Indeed, by using universal approximation Theorem \ref{theorem_mlp} and \ref{theorem_trans}, we can approximate smooth function $\mathbf{x}\left(\hat{\mathbf{r}}\right)$ by a network $h\left(\hat{\mathbf{r}},\phi^*\right)$. Since $\mathcal{S}\left(\mathcal{F}\left(\mathbf{x}\right),\mathbf{r}\right) := \underset{i=1,\dots,m}{\max}\left\{r_i\left(f_i\left(\mathbf{x}\right)-\mathbf{a}_i\right)\right\}$ is pseudoconvex on $X$ and $\nabla_{\phi}\mathbf{x}(\hat{\mathbf{r}};\phi^*)\neq 0$, we imply: 
    \begin{align}\label{eq1}
        \exists \mathbf{x}^{'}\in X,\mathbf{x}^{'}\neq \mathbf{x}: S(\mathbf{x}^{'})<S(\mathbf{x})\Rightarrow \left[
        \begin{matrix}
        \nabla S(\mathbf{x})(\mathbf{x}^{'}-\mathbf{x})\nabla_{\phi}\mathbf{x}(\hat{\mathbf{r}};\phi^*)<0 & \\
        \nabla S(\mathbf{x})(\mathbf{x}^{'}-\mathbf{x})\nabla_{\phi}\mathbf{x}(\hat{\mathbf{r}};\phi^*)>0 & \\
        \end{matrix}
        \right..
    \end{align}
 Besides $\phi^*$ is stationary point of Algorithm \ref{alg:hypertrans1}, hence:
    \begin{align*}
        \nabla S(\mathbf{x})\nabla_{\phi}\mathbf{x}(\hat{\mathbf{r}};\phi^*) = 0,
    \end{align*}
    then:
    \begin{align}\label{eq2}
        \nabla_{\mathbf{x}} S(\mathbf{x}) = 0.
    \end{align}
 Combined with $\mathbf{x}^{'}\neq \mathbf{x}$, we have:
    \begin{align}\label{eq3}
        \nabla S(\mathbf{x})\nabla_{\phi}\mathbf{x}(\hat{\mathbf{r}};\phi^*)(\mathbf{x}^{'}-\mathbf{x}) = 0.
    \end{align}
From \eqref{eq1}, \eqref{eq2}, and \eqref{eq3}, we have $\mathbf{x}\left(\hat{\mathbf{r}}\right) = h\left(\hat{\mathbf{r}},\phi^*\right)$ is a stationary point or a local optimal solution to Problem \eqref{LP}. With $S$ is pseudoconvex on $X$, then $\mathbf{x}\left(\hat{\mathbf{r}}\right)$ is a global optimal solution to Problem \eqref{LP} \citep{mangasarian1994nonlinear}. We choose any $\mathbf{r}^{*}\in U$ that is neighborhood of $\hat{\mathbf{r}}$, i.e. $\mathbf{r}^*\in\mathcal{P}$. Reiterate the procedure of optimizing Algorithm \ref{alg:hypertrans1} we have $\mathbf{x}\left(\mathbf{r}^*\right)$ is a global optimal solution to Problem \eqref{LP}. 

% In other words, $\mathbf{x}\left(\hat{\mathbf{r}}\right)$ is a global optimal solution to Problem \eqref{LP}. 
\end{proof}
\begin{remark}
    Via Theorem \ref{t4.4}, we can see that the optimal solution set of Problem \eqref{SMOP} can be approximated by Algorithm \ref{alg:hypertrans1}. From Theorem \ref{t4.1}, it guarantees that any reference vector $\mathbf{r} \left(\mathbf{r}_i>0\right)$ of Dirichlet distribution $Dir(\alpha)$ always generates an optimal solution of Problem \eqref{SMOP} such that split feasibility constraints. Then the Pareto front is also approximated accordingly by mapping $\mathcal{F}(\cdot)$ respectively.
\end{remark}

\section{Learning Disconnected Pareto Front with Hyper-Transformer network}
The PF of some MOPs may be discontinuous in real-world applications due to constraints, discontinuous search space, or complicated shapes. Existing methods are mostly built upon a revolutionary searching algorithm, which requires massive computation to give acceptable solutions. In this work, we introduce two transformer-based methods to effectively learn the irregular Pareto Front, which we shall call Hyper-Transformer with Joint Input and Hyper-Transformer with Mixture of Experts.
\subsection{Hyper-Transformer with Joint Input}
However, to guarantee real-time and flexibility in the system, we re-design adaptive model joint input for split feasibility constraints as follows:
\begin{align*}\label{Joint-PHN-VOP}\tag{Joint-Hyper-Trans}
 \phi^* &= \argmin_{\phi} \underset{\substack{\mathbf{r} \sim Dir(\alpha) \\ \mathbf{a}\sim U(0,1)}}{\mathbb{E}} \text{ }\big[\mathcal{S}\left(\mathcal{F}\left(h_{\text{trans-joint}}\left(\mathbf{r},\mathbf{a}, \phi\right)\right),\mathbf{r},\mathbf{a}\right)\big]\\
    \text{ s.t. } & h_{\text{trans-joint}}\left(\mathbf{r},\mathbf{a}, \phi\right)\in X\\
    & \mathcal{F}\left(h_{\text{trans-joint}}\left(\mathbf{r},\mathbf{a}, \phi\right)\right) \le \mathbf{b},
\end{align*}
where $U(0,1)$ is uniform distribution. 
\subsection{Hyper-Transformer with Mixture of Experts}
Despite achieving notable results in the continuous Pareto front, the joint input approach fails to achieve the desired MED in the discontinuous scenario. We, therefore, integrate the idea from the mixture of experts \citep{Shazeer2017OutrageouslyLN} into the transformer-based model and assume that each Pareto front component will be learned by one expert. 

In its simplest form, the MoE consists of a set of $k$ experts (neural networks) $e_i: \mathcal{X} \to \mathbb{R}^u, i \in \{1,2,\dots,k\}$, and a gate $g: \mathcal{X} \to \mathbb{R}^n$ that assigns weights to the experts. The gate's output is assumed to be a probability vector, i.e., $g(x) \geq 0$ and $\sum_i^{k} g(x)_i=1$, for any $x \in \mathcal{X}$. Given an example $x \in \mathcal{X}$, the corresponding output of the MoE is a weighted combination of the experts:
\begin{align*}
    \sum_i^n e_i(x)g(x)_i.
\end{align*}
In most settings, the experts are usually MLP modules and the gate $g$ is chosen to be a \textit{Softmax} gate, and then the top-$k$ expert with the highest values will be chosen to process the inputs associated with the corresponding value. As shown in Figure \ref{fig:w_join_input}b, our model takes $r_i, i=1,2,\dots, m$ as input, the corresponding reference vector for $i$th constraints. We follow the same architecture design for the expert networks but omit the gating mechanism by fixing the routing of $r_i$ to the $i$th expert. This allows each expert to specialize in a certain region of the image space in which may lie a Pareto front. By this setting, our model resembles a multi-model approach but has much fewer parameters and is simpler. 

We adapt this approach for Hyper-Transformer as follows:
\begin{align*}\label{Expert-PHN-VOP}\tag{Expert-Hyper-Trans}
 \phi^* &= \argmin_{\phi} \underset{\mathbf{r} \sim Dir(\alpha)}{\mathbb{E}} \text{ }\big[\mathcal{S}\left(\mathcal{F}\left(_{\text{trans-expert}}\left(\mathbf{r},ID, \phi\right)\right),\mathbf{r},\mathbf{a}[ID]\right)\big]\\
    \text{ s.t. } &h_{\text{trans-expert}}\left(\mathbf{r},ID, \phi\right)\in X\\
    & \mathcal{F}\left(_{\text{trans-expert}}\left(\mathbf{r},ID, \phi\right)\right) \le \mathbf{b},
\end{align*}
where $h_{\text{trans-expert}}\left(\mathbf{r},ID, \phi\right) = \bigg[\displaystyle\sum_{i=1}^{k} \text{MLP}_i\left(h_{\text{trans}}(\mathbf{r};\phi)\right)\bigg]g(ID)$ with $g(ID) = (0_1,\dots,1_{ID},\dots, 0_k)$.

Hypernetwork $h$ with architecture corresponding to Joint Input and Mixture of Experts was illustrated in Figures \ref{fig:w_join_input}a and \ref{fig:w_join_input}b.
\begin{algorithm}[H]
\KwIn{Init $\phi_0,t=0$, $\mathbf{idxs} = [0,\dots,k], \mathbf{a},\alpha$.}
\KwOut{$\phi^*$.}
\For{$ID$ in $\mathbf{idxs}$} {

\While{not converged}{
    
    $\mathbf{a}_t = \mathbf{a}[ID]$

    $\mathbf{r}_t = Dir(\alpha)$

    \eIf{Joint input}{
        $\mathbf{x}_{\mathbf{r}_t} =  h_{\text{trans-joint}}\left(\mathbf{r}_t,\mathbf{a}_t, \phi_t\right)$}
      {
      
      $\mathbf{x}_{\mathbf{r}_t} =  h_{\text{trans-expert}}\left(\mathbf{r}_t,ID, \phi_t\right)$
      
      }
    
    $\phi_{t+1} = \phi_t - \xi\nabla_{\phi}\mathcal{S}\left(\mathcal{F}\left(\mathbf{x}_{\mathbf{r}_t}\right),\mathbf{r}_t,\mathbf{a}_t\right)$
    
    $t=t+1$
    % }

}
}
$\phi^* = \phi_t$
\caption{: Hyper-Transformer training for Disconnected Pareto Front.}
\label{alg:hypertrans2}
\end{algorithm}
\begin{figure}[h]
    \centering
    \includegraphics[scale=0.5]{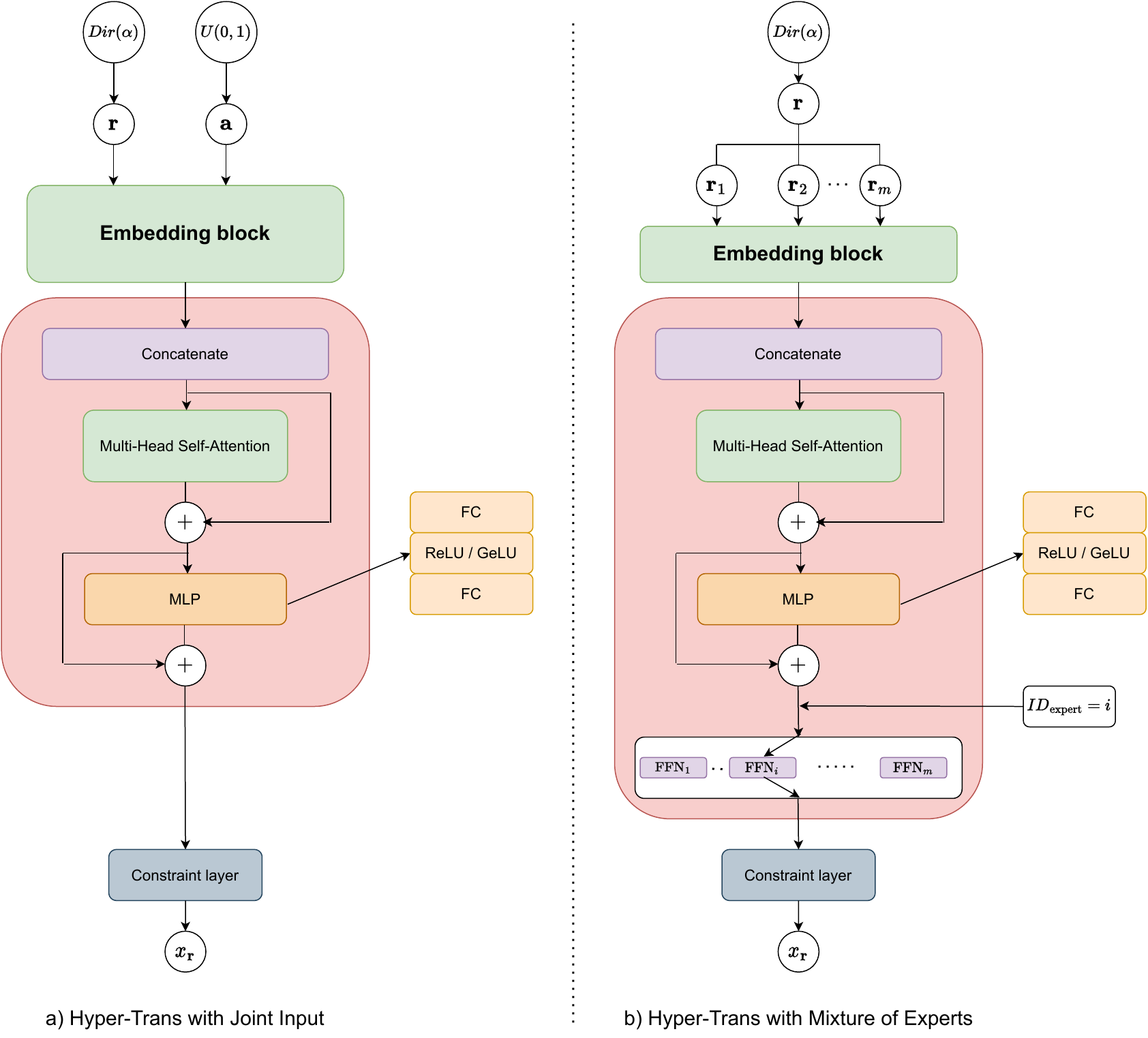}
    \caption{Proposed Transformer-based Hypernetwork. \textit{Left}: The Joint Input model takes reference vectors and objective function's lower bounds corresponding to each Pareto front component. \textit{Right}: Mixture of Experts integrated model which inputs reference vectors.}
    \label{fig:w_join_input}
\end{figure}

\section{Computational experiments}
The code is implemented in Python language programming and the Pytorch framework \citep{paszke2019pytorch}. We compare the performance of our method with the baseline method \citep{tuan2023framework} and provide the setting details and additional experiments in Appendices \ref{appendixA} and \ref{appendixB}.
\subsection{Evaluation metrics}

\textbf{Mean Euclid Distance (MED).} How well the model maps preferences to the corresponding Pareto optimal solutions on the Pareto front serves as a measure of its quality. To do this, we use the Mean Euclidean Distance (MED) (\citep{tuan2023framework}) between the truth corresponding Pareto optimal solutions $\mathcal{F}^* = \{f (\mathbf{x}^*_\mathbf{r})¿$ and the learned solutions $\hat{\mathcal{F}} = \{f(h(\mathbf{r};\phi)\}$.
\begin{align*}
    MED(\mathcal{F}^*,\hat{\mathcal{F}}) = \dfrac{1}{|\mathcal{F}^*|}\left(\sum_{i=1}^{|\mathcal{F}^*|}\left\Vert \mathcal{F}^*_i - \hat{\mathcal{F}}_i\right\Vert_2\right).
\end{align*}

\textbf{Hypervolume (HV).} Hypervolume \citep{zitzler1999multiobjective} is the area dominated by the Pareto front. Therefore, the quality of a Pareto front is proportional to its hypervolume. Given a set of $k$ points $\mathcal{M} = \{m^j | m^j \in \mathbb{R}^m; j=1,\dots, k\}$ and a reference point $\rho\in\mathbb{R}^m_{+}$. The Hypervolume of $\mathcal{S}$ is measured by the region of non-dominated points bounded above by $m \in \mathcal{M}$, and then the hypervolume metric is defined as follows:
\begin{align*}
    HV(S) = VOL\left(\underset{m \in \mathcal{M}, m \prec \rho}{\bigcup}\displaystyle{\Pi_{i=1}^m}\left[m_i,\rho_i\right]\right).
\end{align*}

\textbf{Hypervolume Difference(HVD).} The area dominated by the Pareto front is known as Hypervolume. The higher the Hypervolume, the better the Pareto front quality. For evaluating the quality of the learned Pareto front, we employ Hypervolume Difference (HVD) between the Hypervolumes computed by the truth Pareto front $\mathcal{F}$ and the learned Pareto front $\hat{\mathcal{F}}$ as follows:
\begin{align*}
HVD(\mathcal{F}^*,\hat{\mathcal{F}}) = HV(\mathcal{F}^*) - HV(\hat{\mathcal{F}}).
\end{align*}
% Moreover, we also adopt metrics to evaluate the model's performance, including mean accuracy, precision, recall, and f1 value in multi-label classification.

% \textbf{Mean Accuracy value (mA).}
% \begin{align*}
%     mA = \dfrac{1}{K}\dfrac{1}{M}\displaystyle\sum^{K}\sum_{j=1}^M\dfrac{1}{2}\left(\dfrac{TP^j}{TP^j+FN^j} + \dfrac{TN^j}{TN^j+FP^j}\right)
% \end{align*}

% \textbf{Precision value (Pre).}
% \begin{align*}
%     Pre = \dfrac{1}{K}\dfrac{1}{M}\displaystyle\sum^{K}\sum_{j=1}^M\dfrac{TP^j}{TP^j+FP^j}
% \end{align*}

% \textbf{Recall value (Recall).}
% \begin{align*}
%     Recall = \dfrac{1}{K}\dfrac{1}{M}\displaystyle\sum^{K}\sum_{j=1}^M\dfrac{TP^j}{TP^j+FN^j}
% \end{align*}

% \textbf{F1 value (F1).}
% \begin{align*}
%     F1 = \dfrac{1}{K}\dfrac{1}{M}\displaystyle\sum^{K}\sum_{j=1}^M\dfrac{2*Pre*Recall}{Pre+Recall}
% \end{align*}
%, where $K$ is the number of test preference vectors and $M$ is the number of labels.

\subsection{Synthesis experiments}
We utilized a widely used synthesis multi-objective optimization benchmark problem in the following to evaluate our proposed method with connected and disconnected Pareto front. For ease of test problems, we normalize the PF to $[0, 1]^m.$ Our source code is available at \url{https://github.com/tuantran23012000/CPFL_Hyper_Transformer.git}.
\subsubsection{Problems with Connected Pareto Front}

\noindent \textbf{CVX1} \citep{tuan2023framework}:
\begin{align*}\label{CVX1}\tag{CVX1} 
\min & \left\{x,(x-1)^2\right\}\\
\text{s.t. } & 0\le x\le 1.
\end{align*}

% \end{figure*}
\noindent \textbf{CVX2} \citep{binh1997mobes}:
\begin{align*}\label{CVX2}\tag{CVX2}
    \min & \left\{f_1,f_2\right\}\\
\text{s.t. } & x_i\in[0,5], i=1,2
\end{align*}
where \begin{align*}
    f_1 = \dfrac{x_1^2 + x_2^2}{50}, \textbf{ } f_2 = \dfrac{(x_1-5)^2 + (x_2-5)^2}{50}.
\end{align*}

\noindent \textbf{CVX3} \citep{thang2020monotonic}:
\begin{align*}\label{CVX3}\tag{CVX3}
    \min & \left\{f_1,f_2,f_3\right\}\\
\text{s.t. } & x_{1}^2+x_{2}^2+x_3^2 = 1\\
& x_i\in[0,1], i=1,2,3
%  & 0\le x_1 \le 1\\
%  & 0\le x_2 \le 1\\
%  & 0\le x_3 \le 1,
\end{align*}
where \begin{align*}
    & f_1 = \dfrac{x_1^2 + x_2^2 + x_3^2 +x_2 - 12x_3 + 12}{14},\\
    & f_2 = \dfrac{x_1^2 + x_2^2 + x_3^2 + 8x_1 - 44.8x_2 + 8x_3 + 44}{57},\\
    & f_3 = \dfrac{x_1^2 + x_2^2 + x_3^2 - 44.8x_1 + 8x_2 + 8x_3 + 43.7}{56}.
\end{align*}
Moreover, we experiment with the additional Non-Convex MOO problems, including ZDT1-2 \citep{zitzler2000comparison}, and DTLZ2 \citep{deb2002scalable}.

\noindent \textbf{ZDT1} \citep{zitzler2000comparison}: It is a classical multi-objective optimization benchmark problem with the form:
\begin{align*}\label{ZDT1}\tag{ZDT1}
    & f_1(\mathbf{x})=x_1 \\
& f_2(\mathbf{x})=g(\mathbf{x})\left[1-\sqrt{f_1(\mathbf{x}) / g(\mathbf{x})}\right],
\end{align*}
where $g(\mathbf{x})=1+\frac{9}{n-1} \sum_{i=1}^{n-1}x_{i+1} \text { and } 0 \leq x_i \leq 1 \text { for } i=1, \ldots, n$.

\noindent \textbf{ZDT2} \citep{zitzler2000comparison}: It is a classical multi-objective optimization benchmark problem with the form:
\begin{align*}\label{ZDT2}\tag{ZDT2}
    & f_1(\mathbf{x})=x_1 \\
    & f_2(\mathbf{x})=g(\mathbf{x})\left(1-\left(f_1(\mathbf{x}) / g(\mathbf{x})\right)^2\right),
\end{align*}
where $g(\mathbf{x})=1+\frac{9}{n-1} \sum_{i=1}^{n-1}x_{i+1} \text { and } 0 \leq x_i \leq 1 \text { for } i=1, \ldots, n$.

\noindent \textbf{DTLZ2} \citep{deb2002scalable}: It is a classical multi-objective optimization benchmark problem in the form:
\begin{align*}\label{DTLZ2}\tag{DTLZ2}
    & f_1(\mathbf{x})=(1+g(\mathbf{x})) \cos \frac{\pi x_1}{2} \cos \frac{\pi x_2}{2} \\
& f_2(\mathbf{x})=(1+g(\mathbf{x})) \cos \frac{\pi x_1}{2} \sin \frac{\pi x_2}{2} \\
& f_3(\mathbf{x})=(1+g(\mathbf{x})) \sin \frac{\pi x_1}{2} 
\end{align*}
where $g(\mathbf{x})=\sum_{i=1}^{n-2}\left(x_{i+2}\right)^2 \text { and } 0 \leq x_i \leq 1 \text { for } i=1, \ldots, n$. 

The statistical comparison results of the connected Pareto front problems of the MED scores between our proposed Hyper-Trans and the Hyper-MLP \citep{tuan2023framework} are given in Table \ref{wo_join_input}. These results show that the MED scores of Hyper-Trans are statistically significantly better than those of Hyper-MLP in all comparisons. The state trajectories of $\mathcal{F}(x)$ are shown in Figure \ref{fig:connect_trajectory}. These were calculated using Hyper-Transformer and Hyper-MLP for 2D problems where $\mathbf{x}$ is generated at $\mathbf{r} = [0.5,0.5]$ and for 3D problems where $\mathbf{x}$ is generated at $\mathbf{r} = [0.4,0.3,0.3]$. The number of iterations needed to train the model goes up, and the fluctuation amplitude of the objective functions produced by the hyper-transformer goes down compared to the best solution.

\subsubsection{Problems with Disconnected Pareto Front}
\noindent \textbf{ZDT3} \citep{zitzler2000comparison}: It is a classical multi-objective optimization benchmark problem with the form:
\begin{align*}\label{ZDT3}\tag{ZDT3}
    & f_1(\mathbf{x})=x_1 \\
    & f_2(\mathbf{x})=g(\mathbf{x})\left(1-\sqrt{\left(f_1(\mathbf{x}) / g(\mathbf{x})\right)} - \left(f_1(\mathbf{x}) / g(\mathbf{x})\right)\sin{10\pi f_1}\right),
\end{align*}
where $g(\mathbf{x})=1+\frac{9}{n-1} \sum_{i=1}^{n-1}x_{i+1} \text { and } 0 \leq x_i \leq 1 \text { for } i=1, \ldots, n$.

\noindent $\textbf{ZDT3}^{*}$ \citep{chen2023data}: It is a classical multi-objective optimization benchmark problem in the form:
\begin{align*}\label{ZDT3_variant}\tag{$\text{ZDT3}^{*}$ }
    & f_1(\mathbf{x})=x_1 \\
    & f_2(\mathbf{x})=g(\mathbf{x})\left(1-\sqrt{\left(f_1(\mathbf{x}) / g(\mathbf{x})\right)} - \left(f_1(\mathbf{x})^{\gamma} / g(\mathbf{x})\right)\sin{A\pi f_1^{\beta}}\right),
\end{align*}
where $g(\mathbf{x})=1+\frac{9}{n-1} \sum_{i=1}^{n-1}x_{i+1} \text { and } 0 \leq x_i \leq 1 \text { for } i=1, \ldots, n$. The $A$ determines the number of disconnected regions of the PF. $\gamma$ controls the overall shape of the PF where $\gamma>1, \gamma<1$, and $\gamma=1$ lead to a concave, a convex, and a linear PF, respectively. $\beta$ influences the location of the disconnected regions.

\noindent \textbf{DTLZ7} \citep{deb2002scalable}: It is a classical multi-objective optimization benchmark problem with the form:
\begin{align*}\label{DTLZ7}\tag{DTLZ7}
    & f_1(\mathbf{x}_1)=x_1 \\
    & f_2(\mathbf{x}_2) = x_2 \\
    & \vdots \\
    & f_{m-1}(\mathbf{x}_{m-1}) = x_{m-1} \\
    & f_m(\mathbf{x}_m)=\frac{(1+g(\mathbf{x}_m))h\left(f_1,f_2,\dots,f_{m-1},g\right)}{6},
\end{align*}
where $g(\mathbf{x}_m)=1 + \frac{9}{|\mathbf{x}_m|}\sum_{x_i\in\mathbf{x}_m}x_i, h\left(f_1,f_2,\dots,f_{m-1},g\right) = m - \sum_{i=1}^{m-1}\big[\frac{f_i}{1+g}\left(1+\sin{3\pi f_i}\right)\big] \text { and } 0 \leq x_i \leq 1 \text { for } i=1, \ldots, n$.  The functional $g$ requires $k=|\mathbf{x}_m|=n-m+1$ decision variables.

The disparity between the Hypervolume calculated utilizing the actual Pareto front $\mathcal{F}$ and the learned Pareto front $\hat{\mathcal{F}}$ of the Joint Input model is illustrated in Table \ref{w_join_input} and Figure \ref{fig:appro_disconnect}. The outcomes of the Joint Input model surpass those of the Mixture of Experts structure. However, this distinction is not statistically significant. In addition, the hyper-transformer model with MoE still gets a much lower MED score than the joint input when comparing disconnected Pareto front tests. Using complex MoE designs for the Hyper-Transformer model shows that Controllable Disconnected Pareto Front Learning could have good future results.
 \begin{figure*}[b]
     \centering
        \begin{subfigure}[b]{0.15\textwidth}
         \centering
        \includegraphics[width=\textwidth]{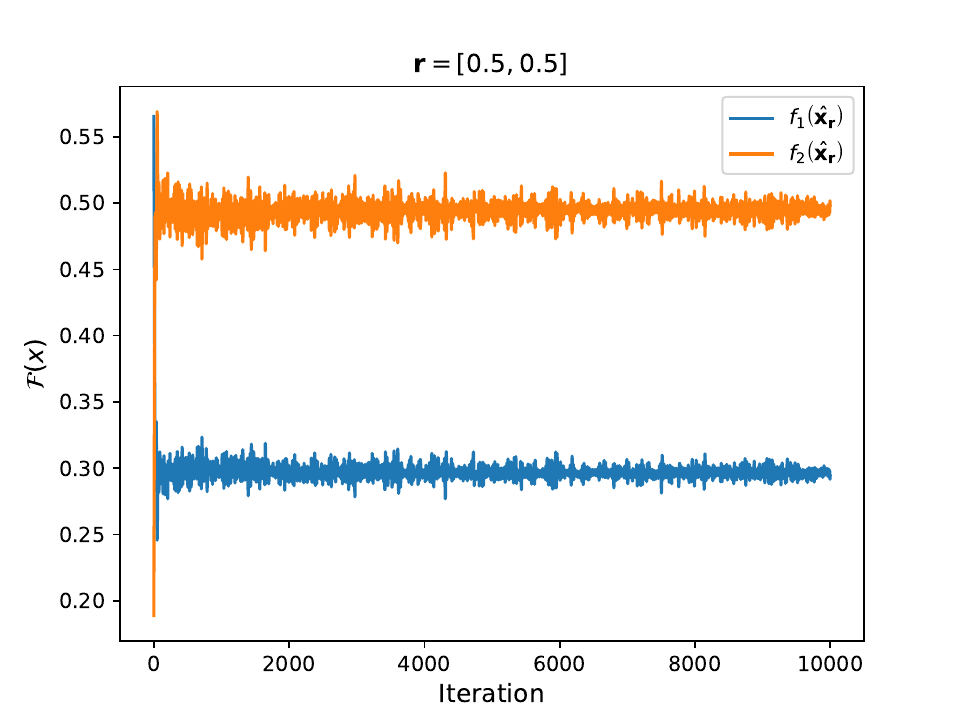}
         %\label{fig:five over x}
     \end{subfigure}
     \hfill
     \begin{subfigure}[b]{0.15\textwidth}
         \centering
        \includegraphics[width=\textwidth]{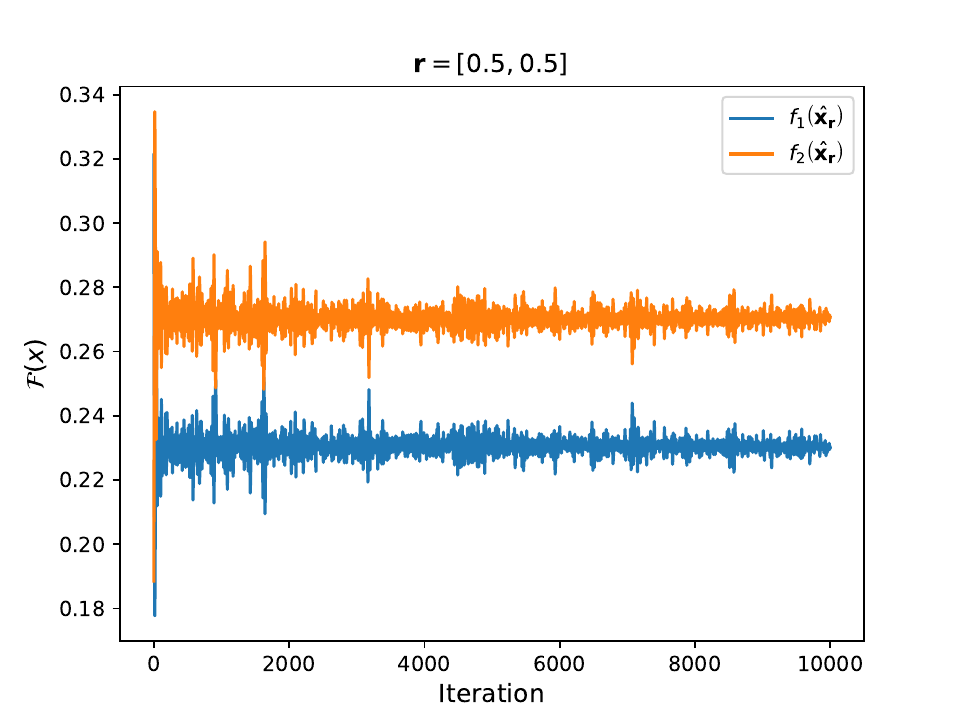}
         %\label{fig:y equals x}
     \end{subfigure}
     \hfill
     \begin{subfigure}[b]{0.15\textwidth}
         \centering
        \includegraphics[width=\textwidth]{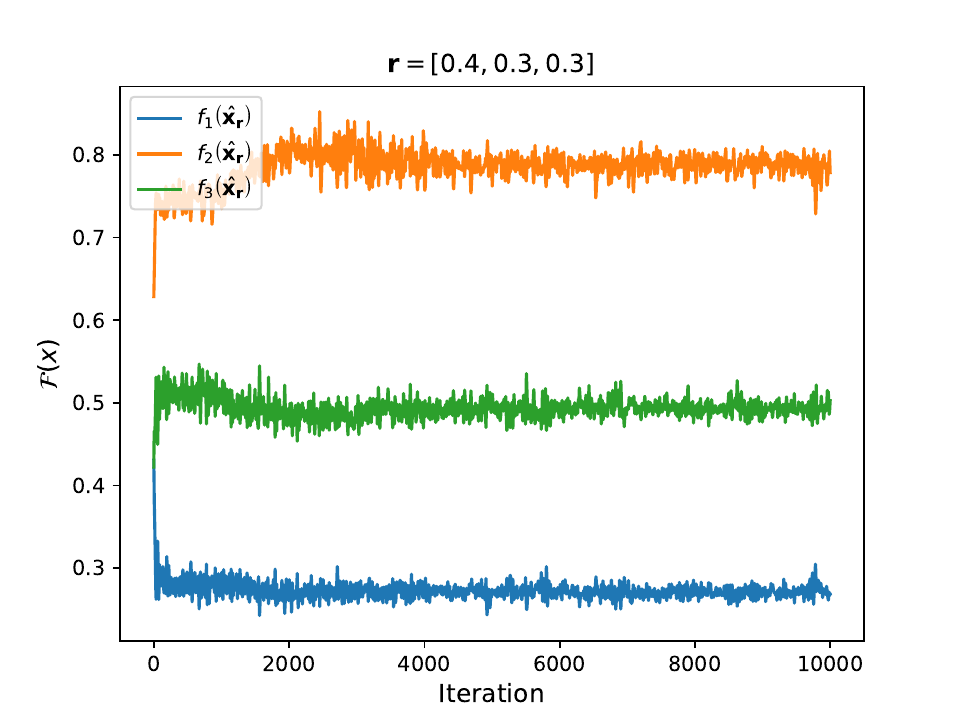}
         %\label{fig:three sin x}
     \end{subfigure}
     \hfill
     \begin{subfigure}[b]{0.15\textwidth}
         \centering
        \includegraphics[width=\textwidth]{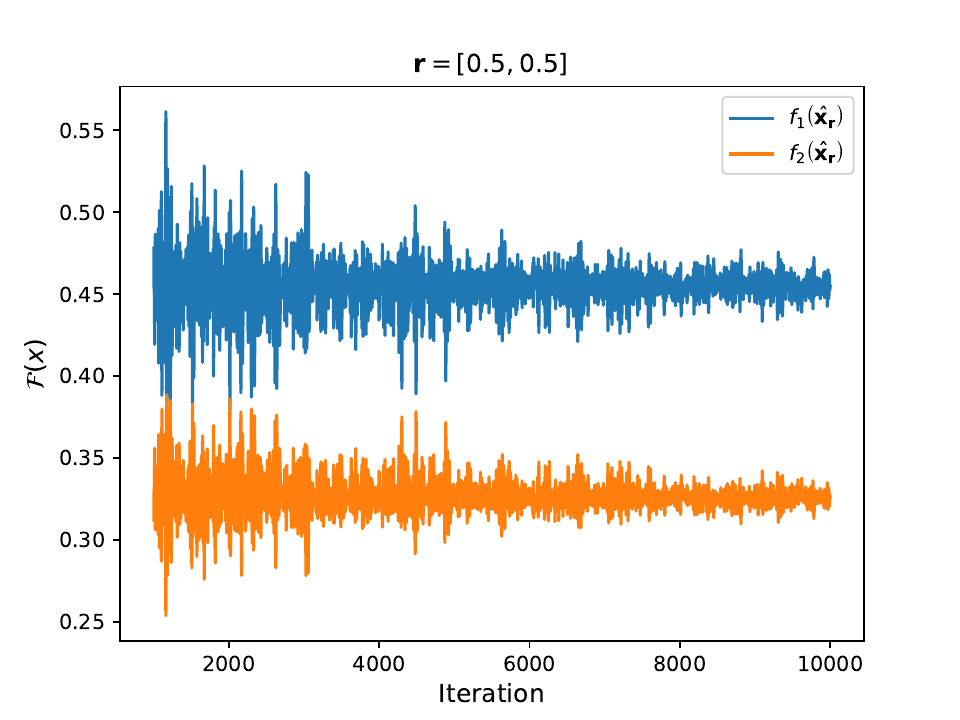}
         %\label{fig:three sin x}
     \end{subfigure}
     \hfill
     \begin{subfigure}[b]{0.15\textwidth}
         \centering
        \includegraphics[width=\textwidth]{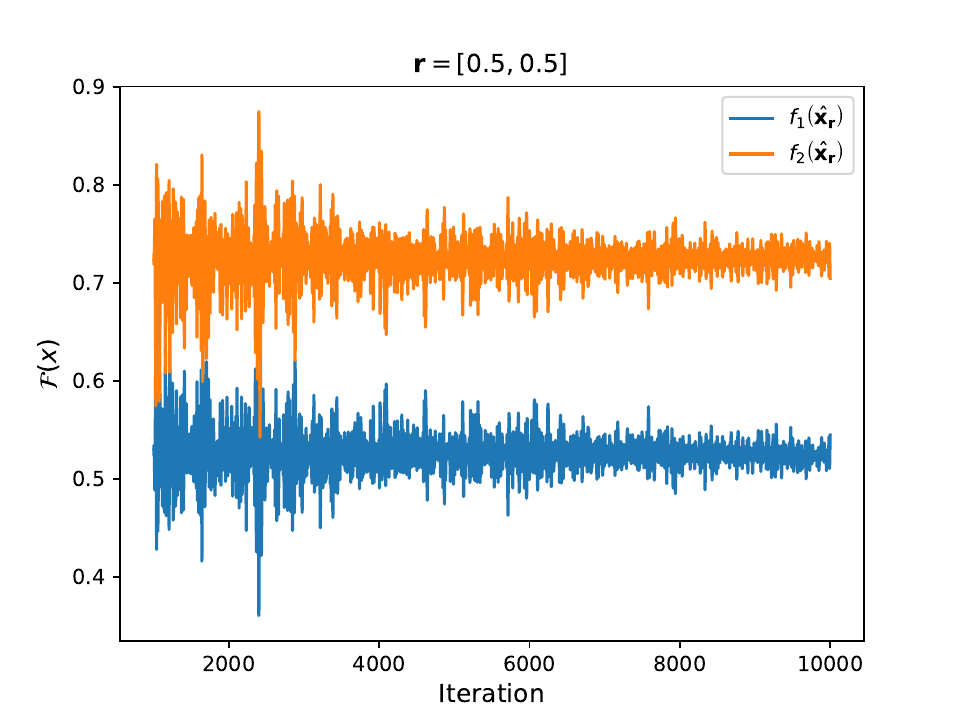}
         %\label{fig:three sin x}
     \end{subfigure}
     \hfill
     \begin{subfigure}[b]{0.15\textwidth}
         \centering
        \includegraphics[width=\textwidth]{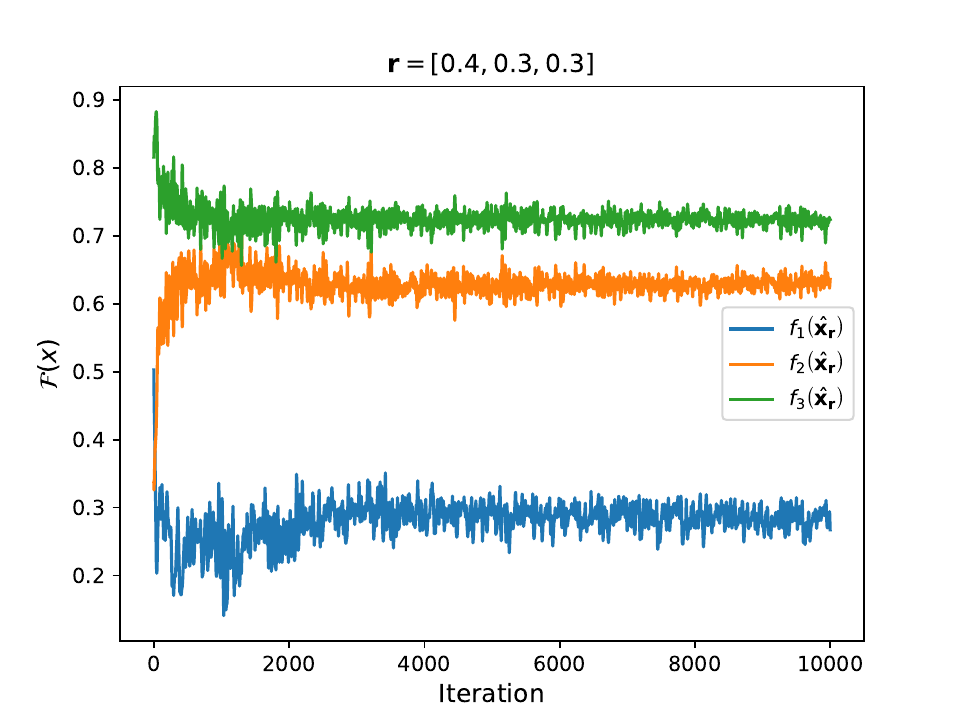}
         %\label{fig:three sin x}
     \end{subfigure}

    \begin{subfigure}[b]{0.15\textwidth}
         \centering
        \includegraphics[width=\textwidth]{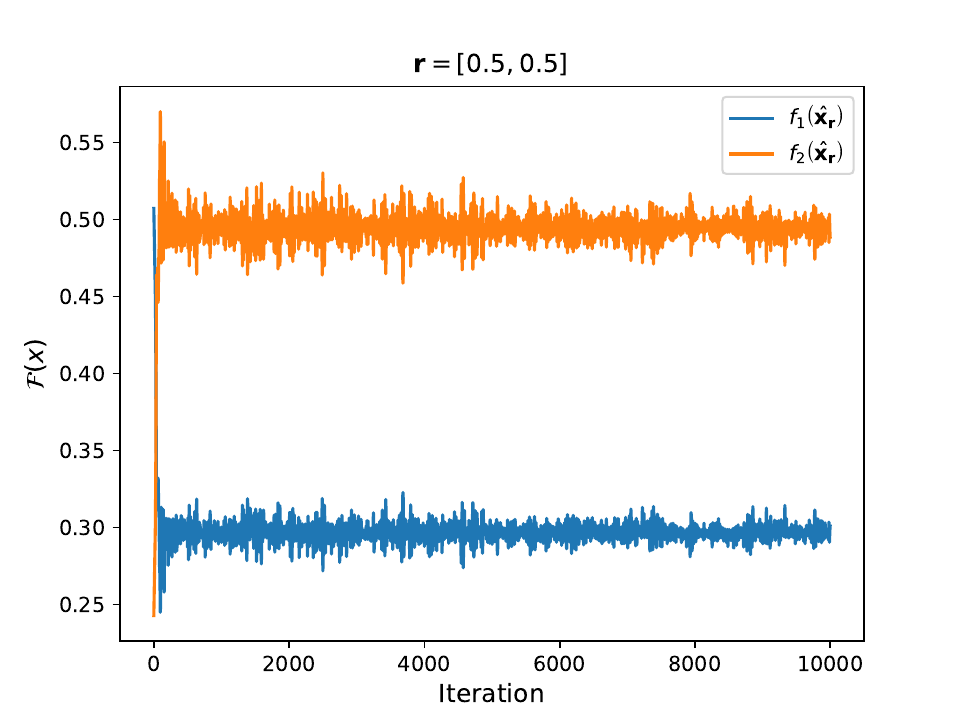}
         %\label{fig:five over x}
     \end{subfigure}
     \hfill
     \begin{subfigure}[b]{0.15\textwidth}
         \centering
        \includegraphics[width=\textwidth]{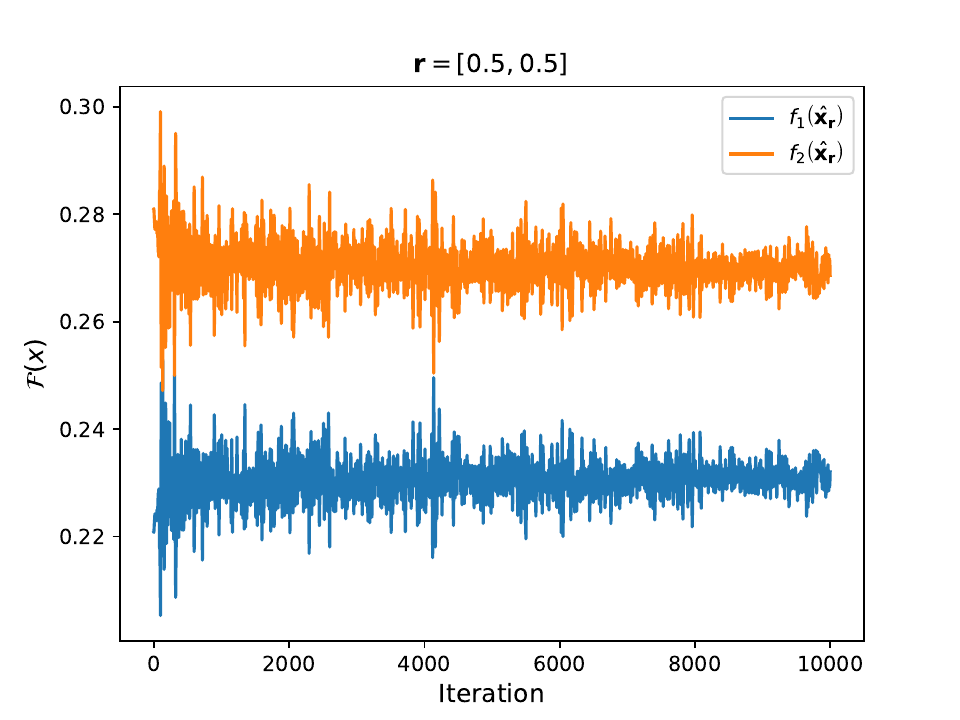}
         %\label{fig:y equals x}
     \end{subfigure}
     \hfill
     \begin{subfigure}[b]{0.15\textwidth}
         \centering
        \includegraphics[width=\textwidth]{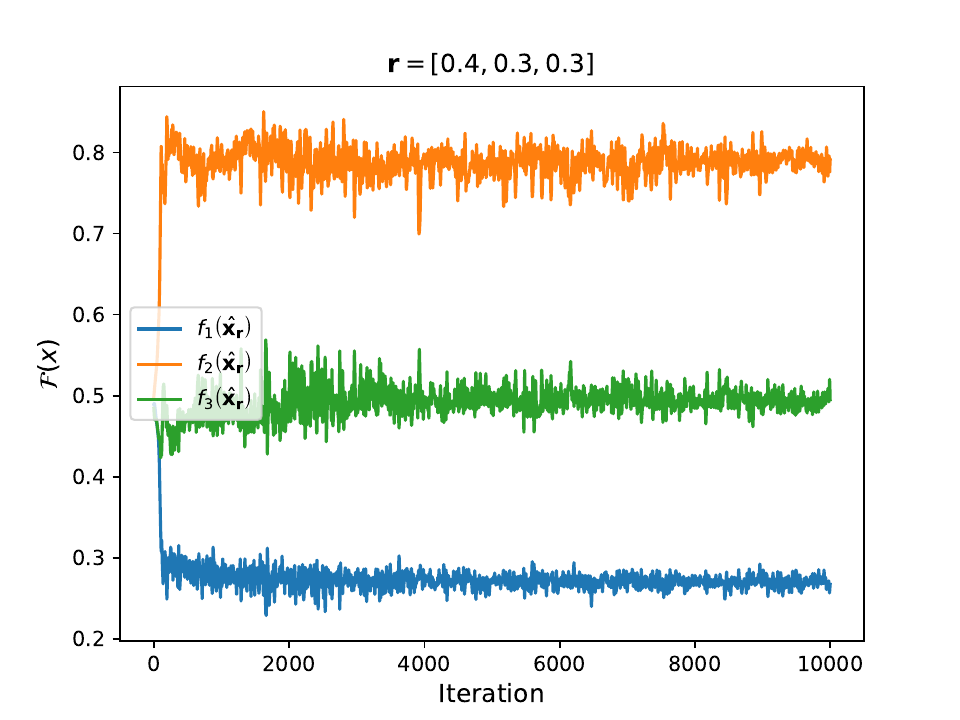}
         %\label{fig:three sin x}
     \end{subfigure}
     \hfill
     \begin{subfigure}[b]{0.15\textwidth}
         \centering
        \includegraphics[width=\textwidth]{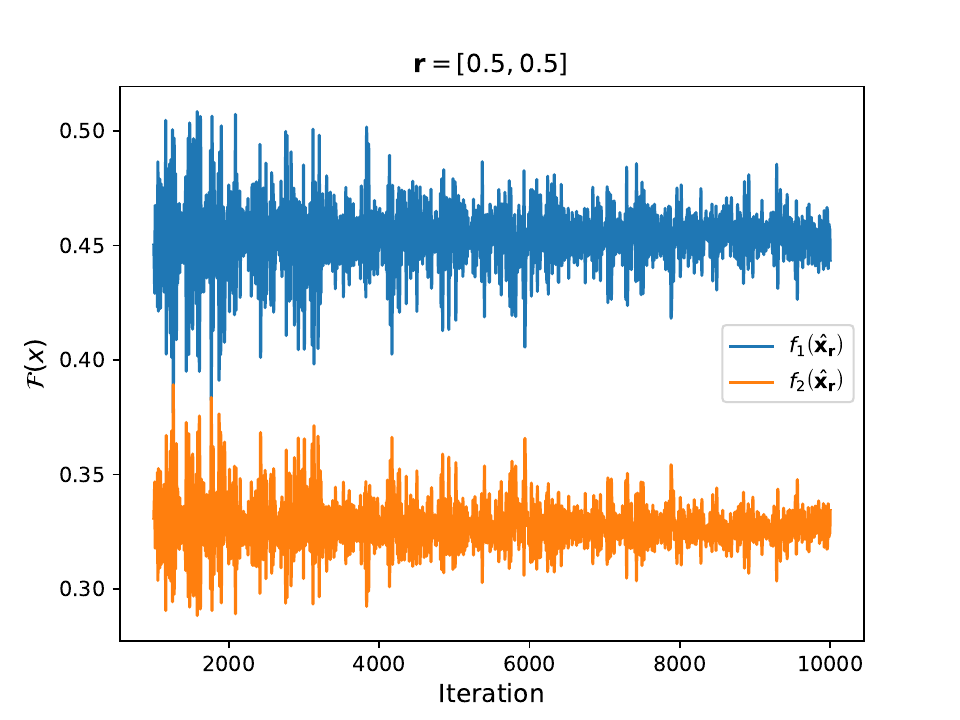}
         %\label{fig:three sin x}
     \end{subfigure}
     \hfill
     \begin{subfigure}[b]{0.15\textwidth}
         \centering
        \includegraphics[width=\textwidth]{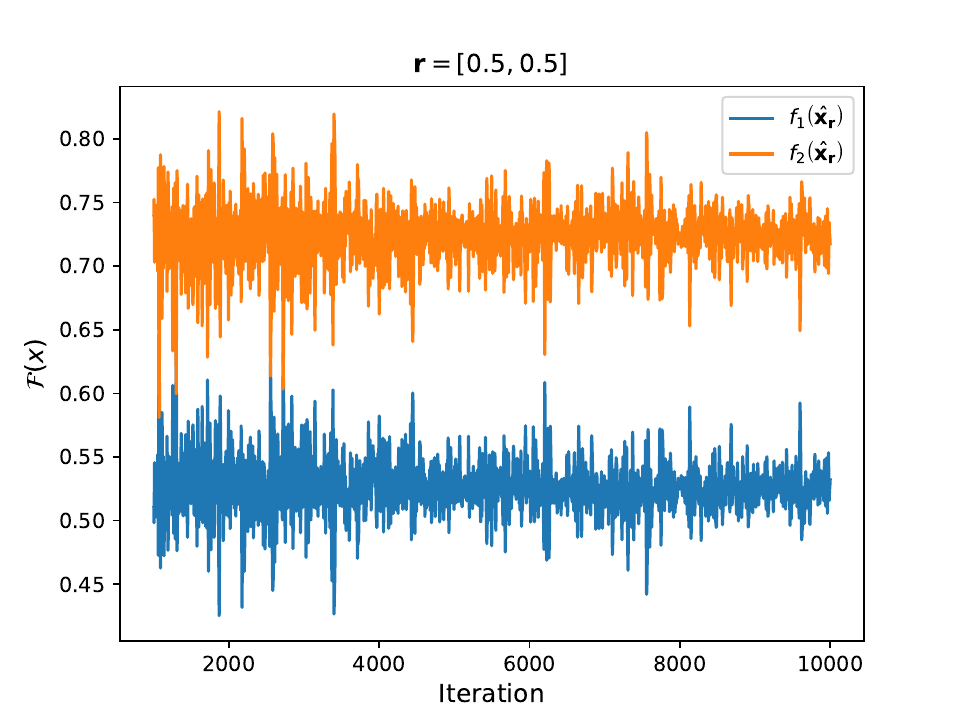}
         %\label{fig:three sin x}
     \end{subfigure}
     \hfill
     \begin{subfigure}[b]{0.15\textwidth}
         \centering
        \includegraphics[width=\textwidth]{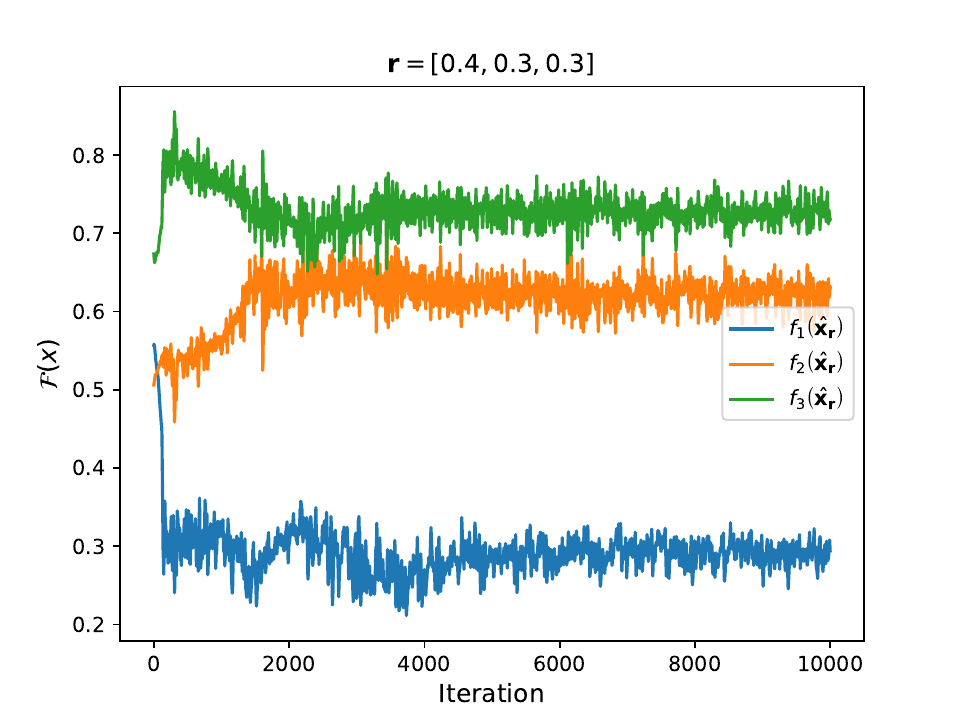}
         %\label{fig:three sin x}
     \end{subfigure}
     \caption{Comparison of multi-objective trajectories between Hyper-Trans and Hyper-MLP. The top panel shows the evolution of $\mathcal{F}(\mathbf{x})$ obtained by Hyper-Trans with $\mathbf{r}= [0.5,0.5]$ in 2 objectives and $\mathbf{r}= [0.4,0.3,0.3]$ in 3 objectives at \ref{CVX1}, \ref{CVX2}, \ref{CVX3}, \ref{ZDT1}, \ref{ZDT2}, \ref{DTLZ2} problems (from left to right). The bottom panel shows the evolution of $\mathcal{F}(\mathbf{x})$ obtained by Hyper-MLP.}
     \label{fig:connect_trajectory}
\end{figure*}
 \begin{table}[!b]
\caption{We evaluate $30$ random seed with lower bounds in Table \ref{attributes}.}
\label{wo_join_input}
\centering
\resizebox{1.0\textwidth}{!}{\begin{tabular}{|c|c|c|c|c|c|}
\toprule
\bf Example & \bf Constraint layer &\bf Hyper-MLP\citep{tuan2023framework} & \bf Hyper-Trans (\textbf{ours}) & \bf Params & \bf MED$\Downarrow$\\ 
 \midrule
\multirow{2}{*}{\ref{CVX1}}& \multirow{2}{*}{sigmoid} &\checkmark & & $5\times 5701$ & $0.00229 \pm  0.00119$\\
& & & \checkmark & $5\times 5731$ &   $\bf 0.00161\pm \bf 0.00129$\\
 \midrule
\multirow{2}{*}{\ref{CVX2}}& \multirow{2}{*}{sigmoid} & \checkmark & & $5\times 5732$ & $0.00353\pm  0.00144$\\
& & & \checkmark & $5 \times 5762$ &   $\bf 0.00258\pm \bf 0.00127$\\
 \midrule
\multirow{2}{*}{\ref{CVX3}}& \multirow{2}{*}{softmax + sqrt} & \checkmark & & $5\times 5793$ &  $0.01886 \pm 0.00784$\\
& & & \checkmark  & $5\times 5853$ &$\bf 0.00827 \pm 0.00187$\\
 \midrule
\multirow{2}{*}{\ref{ZDT1}}& \multirow{2}{*}{sigmoid} & \checkmark & & $5\times 6600$ & $0.00682\pm 0.00385$\\
& & & \checkmark & $5 \times 6630$ &  $\bf 0.00219\pm \bf 0.00049$\\
 \midrule
\multirow{2}{*}{\ref{ZDT2}}& \multirow{2}{*}{sigmoid} & \checkmark & & $5 \times 6600$ & $0.00859\pm 0.00476$\\
& & & \checkmark & $5 \times 6630$ &  $\bf 0.00692\pm \bf 0.00304$\\
 \midrule
\multirow{2}{*}{\ref{ZDT3}}& \multirow{2}{*}{sigmoid} & \checkmark & & $5 \times 3210$ & $0.18741\pm 0.00653$\\
& & & \checkmark & $5 \times 3230$ &  $\bf 0.00767\pm \bf 0.00414$\\
 \midrule
\multirow{2}{*}{\ref{ZDT3_variant}}& \multirow{2}{*}{sigmoid} & \checkmark & & $2\times6600$ & $0.00641\pm 0.00594$\\
& & & \checkmark & $2\times6630$ &  $\bf 0.00391\pm \bf 0.00404$\\
 \midrule
\multirow{2}{*}{\ref{DTLZ2}}& \multirow{2}{*}{sigmoid} & \checkmark & & $5 \times 2810$ &$0.06217\pm  0.01528$\\
& & & \checkmark & $5 \times 2850$ &  $\bf 0.01083\pm \bf 0.00142$\\
\midrule
\multirow{2}{*}{\ref{DTLZ7}}& \multirow{2}{*}{sigmoid} & \checkmark & & $4\times 6010$ & $0.03439\pm  0.02409$\\
& & & \checkmark & $4\times 6070$ &  $\bf 0.01116 \pm \bf  0.00217$\\
\bottomrule
\end{tabular}}
\end{table}
 \begin{table*}[ht]
\caption{We evaluate $30$ random seed with lower bounds in Table \ref{attributes}.}
\label{w_join_input}
\centering
\resizebox{1.0\textwidth}{!}{\begin{tabular}{|c|c|c|c|c|c|c|c|}
\toprule
\bf Example & \bf Constraint layer &\bf Model &\bf Joint Input& \bf  Mixture of Experts&\bf Params& \bf HVD$\Downarrow$& \bf MED$\Downarrow$\\ 
 \midrule
 \multirow{2}{*}{\ref{ZDT3}} & \multirow{2}{*}{sigmoid}& \multirow{2}{*}{Hyper-Trans} & \checkmark &  & 22230 &$0.04088$ & $0.52587 \pm 0.37795$\\
& & & & \checkmark & 20250 &$\bf0.00091$&$\bf 0.24787 \pm \bf 0.22053$\\
 \midrule
\multirow{2}{*}{\ref{ZDT3_variant}} & \multirow{2}{*}{sigmoid}& \multirow{2}{*}{Hyper-Trans} & \checkmark &  & 4110 &$\bf-0.00472$&$0.35923 \pm 0.35548$\\
& & & & \checkmark & 3900 &  $-0.00466$& $\bf 0.12789 \pm \bf 0.09911$\\
 \midrule
% \multirow{2}{*}{\ref{DTLZ2}} & \multirow{2}{*}{sigmoid}& \multirow{2}{*}{Hyper-Trans} & \checkmark &  & 2870 & $0.03327\pm  0.00544$\\
% & & & & \checkmark & 2890 & $\bf0.03156\pm \bf 0.00341$\\
%  \midrule
\multirow{2}{*}{\ref{DTLZ7}} & \multirow{2}{*}{sigmoid}& \multirow{2}{*}{Hyper-Trans} & \checkmark &  & 7990 & $\bf 0.00265$& $ 0.52847 \pm 0.31352$\\
& & & & \checkmark & 7880 & $ 0.00302$ & $\bf 0.12338 \pm 0.07585$\\
\bottomrule 
\end{tabular}}

\end{table*}
 \begin{figure*}[b]
     \centering
        \begin{subfigure}[b]{0.45\textwidth}
         \centering
         \includegraphics[width=\textwidth]{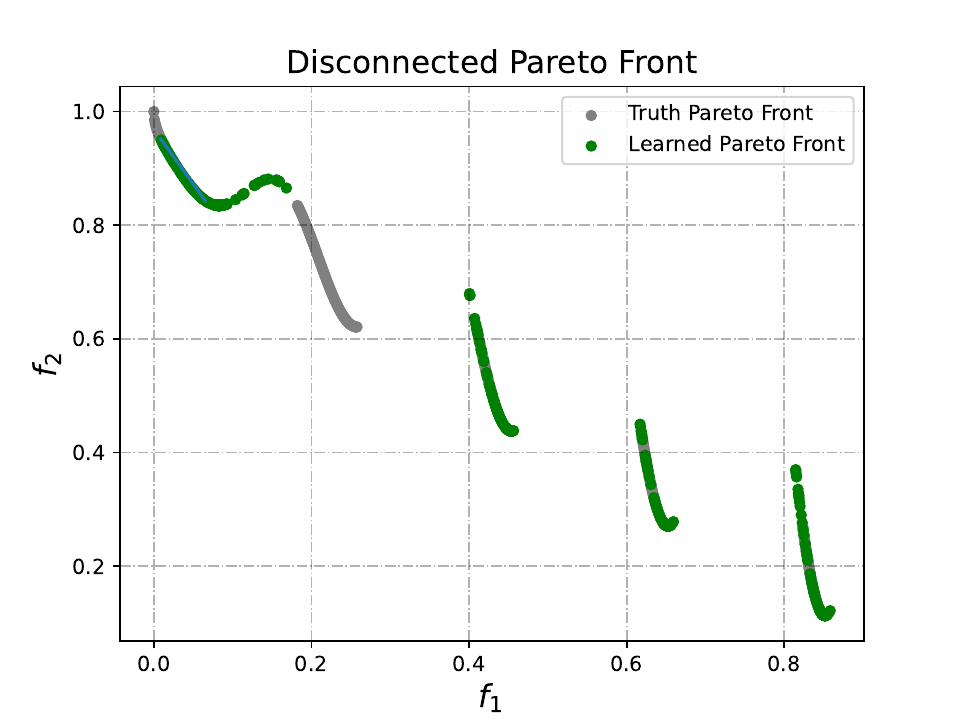}
         %\label{fig:five over x}
     \end{subfigure}
     % \hfill
     % \begin{subfigure}[b]{0.24\textwidth}
     %     \centering
     %     \includegraphics[width=\textwidth]{}
     %     %\label{fig:y equals x}
     % \end{subfigure}
     \hfill
     \begin{subfigure}[b]{0.45\textwidth}
         \centering
         \includegraphics[width=\textwidth]{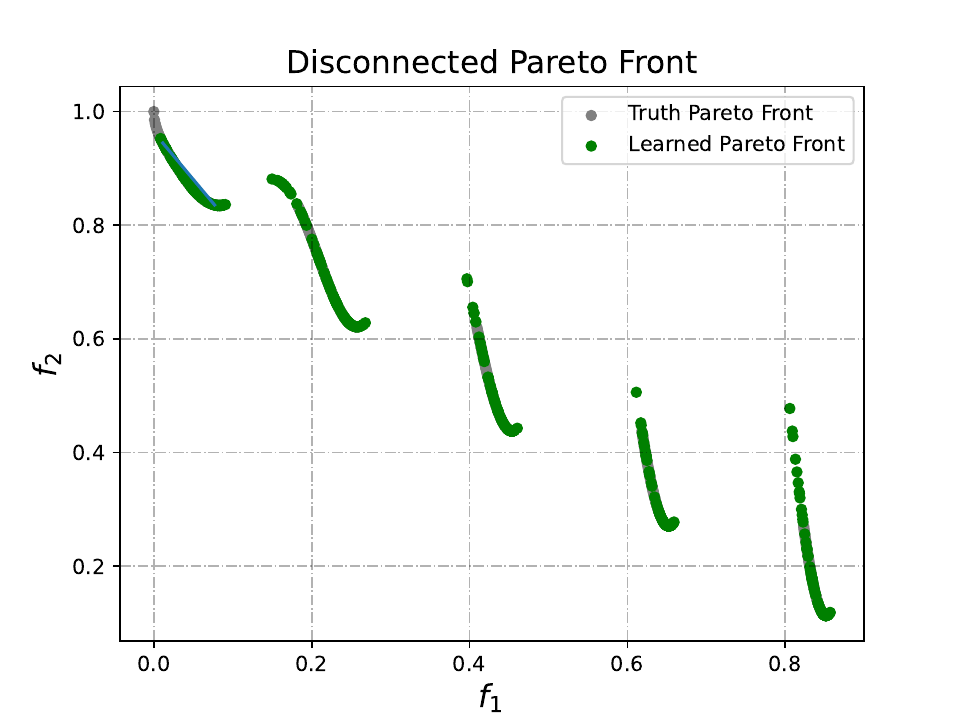}
         %\label{fig:three sin x}
     \end{subfigure}

     \centering
     \begin{subfigure}[b]{0.45\textwidth}
         \centering
         \includegraphics[width=\textwidth]{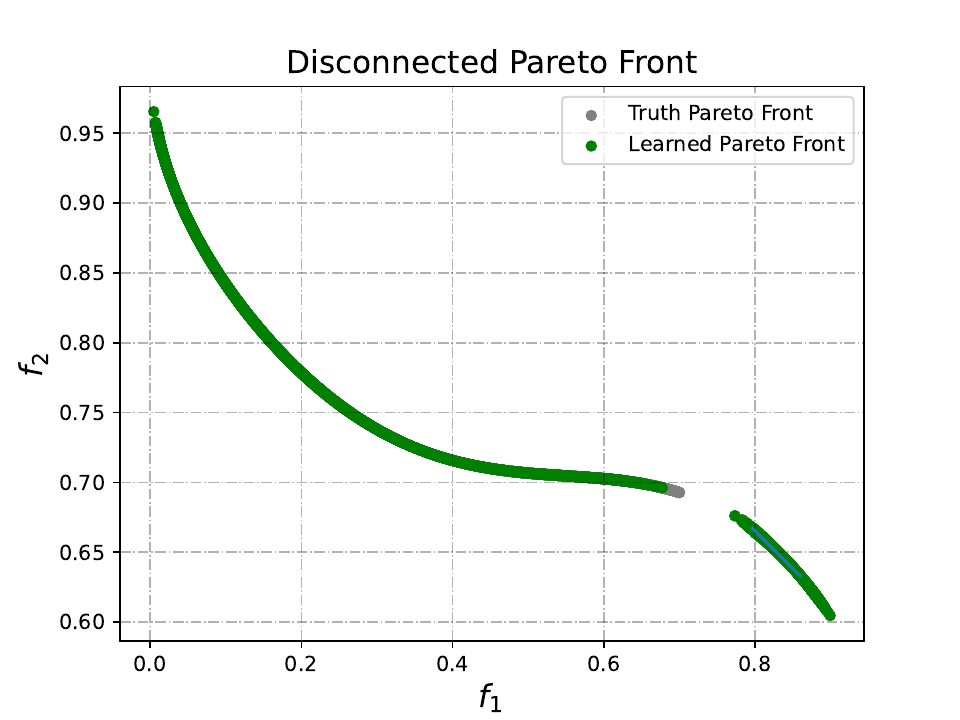}
         %\label{fig:five over x}
     \end{subfigure}
     \hfill
     \begin{subfigure}[b]{0.45\textwidth}
         \centering
         \includegraphics[width=\textwidth]{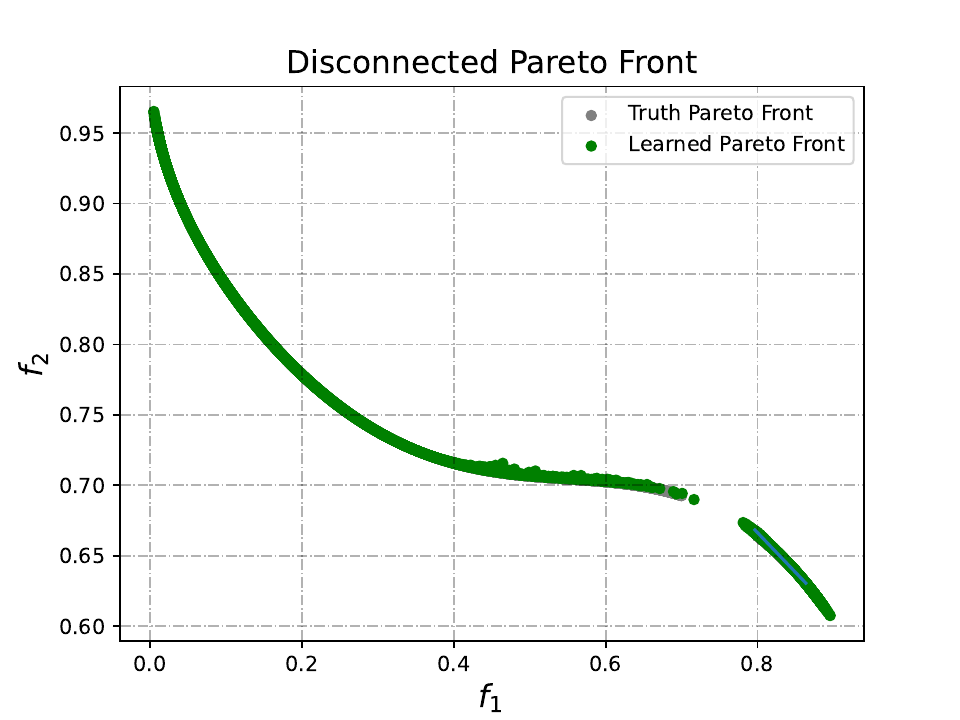}
         %\label{fig:three sin x}
     \end{subfigure}

    \begin{subfigure}[b]{0.45\textwidth}
         \centering
         \includegraphics[width=\textwidth]{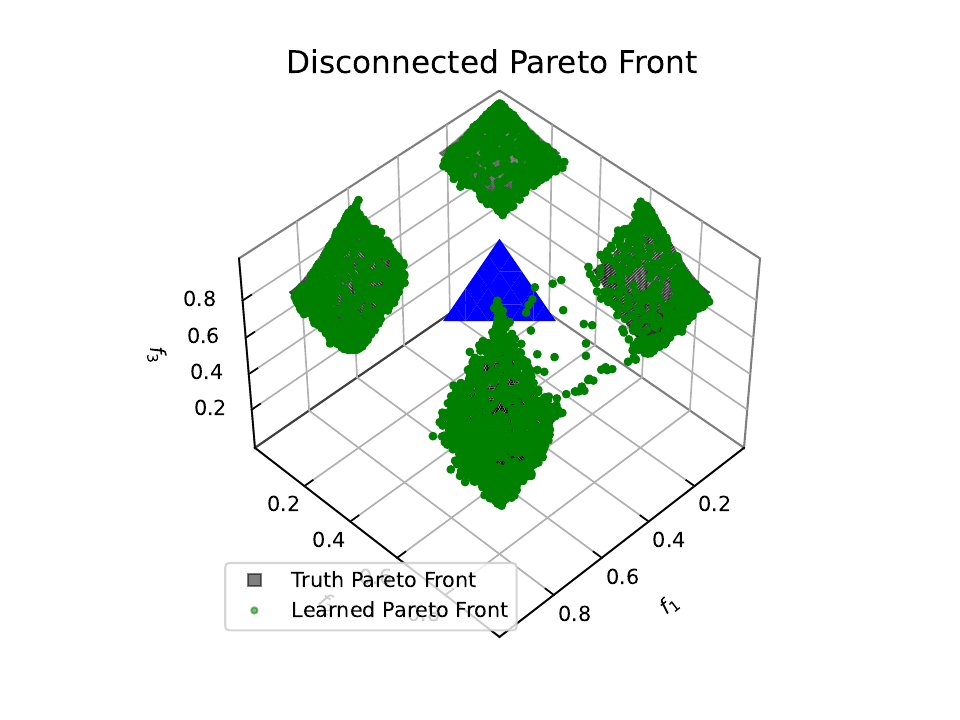}
         \caption{Hyper-Transformer with joint input}
     \end{subfigure}
     \hfill
     \begin{subfigure}[b]{0.45\textwidth}
         \centering
         \includegraphics[width=\textwidth]{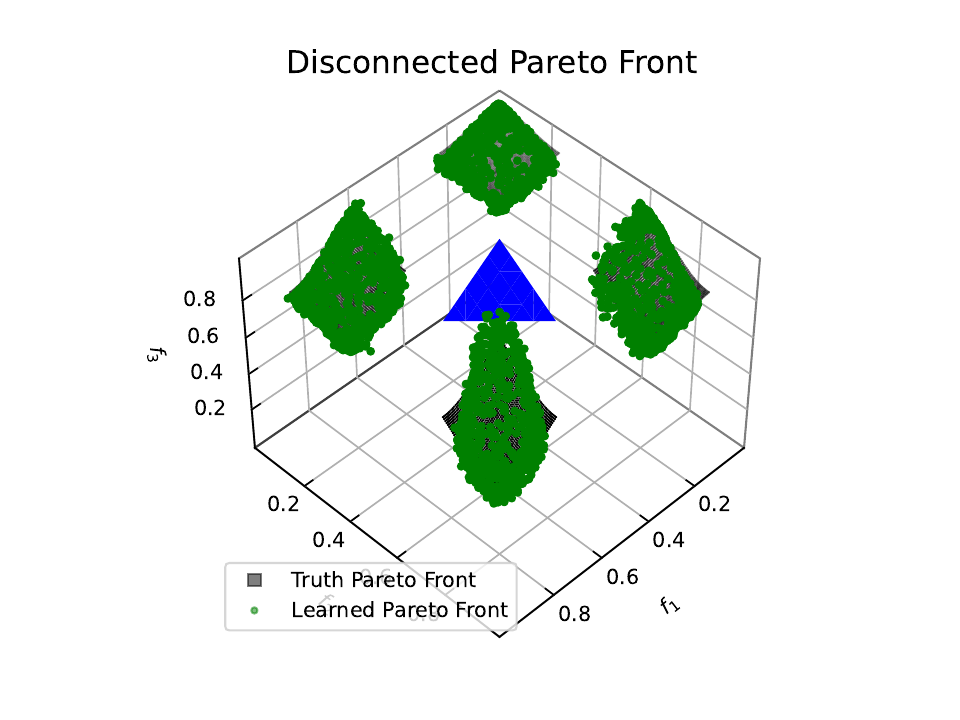}
         \caption{Hyper-Transformer with Mixture of Experts}
     \end{subfigure}
      \caption{\textit{Left}: Pareto Front is approximated by the Joint Input model. \textit{Right}: Pareto Front is approximated by the Mixture of Experts model in example \ref{ZDT3} (top), example \ref{ZDT3_variant} (middle), and example \ref{DTLZ7} (bottom).}
      \label{fig:appro_disconnect}
\end{figure*}

\section{Conclusion and Future Work}

This paper presents a novel approach to tackle controllable Pareto front learning with split feasibility constraints. Additionally, we provide mathematical explanations for accurately mapping a priority vector to the corresponding Pareto optimal solution by hyper-transformers based on a universal approximation theory of the sequence-to-sequence function. Furthermore, this study represents the inaugural implementation of Controllable Disconnected Pareto Front Learning. Besides, we provide experimental computations of controllable Pareto front learning with a MED score to substantiate our theoretical reasoning. The outcomes demonstrate that the hypernetwork model, based on a transformer architecture, exhibits superior performance in the connected Pareto front and disconnected Pareto front problems compared to the multi-layer Perceptron model.

Although the early results are promising, several obstacles must be addressed. Multi-task learning studies show promise for real-world multi-objective systems that need real-time control and involve difficulties with split feasibility constraints. Nevertheless, more enhancements are required for our suggested approach to addressing disconnected Pareto Front issues. This is due to the need for the model to possess prior knowledge of the partition feasibility limitations, which restricts the model's capacity to anticipate non-dominated solutions. Future research might involve the development of a resilient MoE hyper-transformer that can effectively adjust to various split feasibility limitations and prevent the occurrence of non-dominated solutions and weakly Pareto-optimal solutions.
\section*{Declaration of competing interest}
\section*{Data availability}
Data will be made available on request.
\section*{Acknowledgments}
\appendix
\section{Experiment Details}\label{appendixA}
\subsection{Computational Analysis}
Hyper-Transformer consists of two blocks: the Self-Attention mechanism and the Multilayer Perceptron. With Hypernetwork w/o join input architect, we assume dimension of three matrics $\boldsymbol{W}_Q, \boldsymbol{W}_K, \boldsymbol{W}_V$ is $d$, number of heads is $2$. Besides, we also assume the input and output of the MLP block with $d$ dimension. Hence, the total parameters of the Hyper-Transformer is $4d^2 + 4d$. The Hyper-MLP architect uses six hidden linear layers with $d$ dimension input and output. Therefore, the total parameters of Hyper-MLP is $6d^2 + 6d$. 

Although the total parameters of Hyper-MLP are larger than Hyper-Transformer, the number of parameters that need to be learned for the MOP examples is the opposite when incorporating the Embedding block. In the two architectures described in Figures\ref{fig:wo_join_input}a and \ref{fig:wo_join_input}b, the parameters to be learned of Hyper-Trans are:
\begin{align*}
    6d^2 + 6d + 2md + (d+1)n,
\end{align*}
and with Hyper-MLP are:
\begin{align*}
    6d^2 + 6d + (m+1)d + (d+1)n.
\end{align*}
From there, we see that the difference in the total parameters to be learned $(m-1)d$ between the Hyper-Trans and Hyper-MLP models is insignificant. It only depends on the width of the hidden layers $d$ and the number of objective functions $m\ge 2$.
\subsection{Training setup}
The experiments MOO were implemented on a computer with CPU - Intel(R) Core(TM) i7-10700, 64-bit CPU $@ 2.90$GHz, and 16 cores. Information on MOO test problems is illustrated in Table \ref{attributes}. 
\begin{table}[b]
\caption{\revise{Information of MOO problems.}}
\label{attributes}
\centering
\resizebox{0.6\textwidth}{!}{\begin{tabular}{c|c|c|c|c|c}
\toprule
\bf Problem & \bf n & \bf m & \textbf{Objective function} & \textbf{Pareto-optimal} & \textbf{Pareto front}\\
 \midrule
\ref{CVX1} & 1 & 2 & convex & convex & connected \\
\ref{CVX2} & 2 & 2 & convex & convex & connected\\
\ref{CVX3} & 3 & 3 & convex & convex & connected\\
\ref{ZDT1} & 30 & 2 & non-convex & convex & connected\\
\ref{ZDT2} & 30 & 2 & non-convex & non-convex & connected\\
\ref{ZDT3} & 30 & 2 & non-convex & non-convex & disconnected\\
\ref{ZDT3_variant} & 30 & 2 & non-convex & non-convex& disconnected\\
\ref{DTLZ2} & 10 & 3 & non-convex & non-convex& connected\\
\ref{DTLZ7} & 10 & 3 & non-convex & non-convex& disconnected\\
\bottomrule
\end{tabular}}
\end{table}

\begin{table}[b]
\caption{\revise{Hyperparameters for training MOO problems.}}
\label{hyperparameters}
\centering
\resizebox{1.\textwidth}{!}{\begin{tabular}{c|c}
\toprule
\bf Problem & \textbf{Hyperparameters}\\
 \midrule
\ref{CVX1} & Adam optimizer, $
\alpha = 0.6, d = 20, iter = 20000, lr = 0.001, a = [[0,0.8],[0.1,0.6],[0.2,0.4],[0.35,0.22],[0.6,0.1]]$  \\
 \midrule
\ref{CVX2} & Adam optimizer, $
\alpha = 0.6, d = 20, iter = 20000, lr = 0.001, a = [[0,0.6],[0.02,0.4],[0.16,0.2],[0.2,0.15],[0.4,0.02]]$ \\
\ref{CVX3} & Adam optimizer, $
\alpha = 0.6, d = 20, iter = 20000, lr = 0.001, a = [[0.15, 0.2, 0.7], [0.2, 0.5, 0.6], [0.2, 0.7, 0.4], [0.35, 0.6, 0.22], [0.6, 0.1, 0.46]]$ \\
\ref{ZDT1} & Adam optimizer, $
\alpha = 0.6, d = 20, iter = 20000, lr = 0.001, a = [[0, 0.8], [0.1, 0.6], [0.2, 0.4], [0.35, 0.22], [0.6, 0.1]]$ \\
\ref{ZDT2} & Adam optimizer, $
\alpha = 0.6, d = 20, iter = 20000, lr = 0.001, a = [[0.1, 0.9], [0.1, 0.6], [0.2, 0.4], [0.35, 0.22], [0.6, 0.1]]$ \\
\ref{ZDT3} & Adam optimizer, $
\alpha = 0.6, d = 30, iter = 20000, lr = 0.001, a = [[0.01, 0.81], [0.16, 0.61], [0.4, 0.41], [0.62, 0.23], [0.81, 0.1]]$ \\
\ref{ZDT3_variant} & Adam optimizer, $
\alpha = 0.6, d = 10, iter = 20000, lr = 0.001, a = [[0.8, 0.62], [0.01, 0.7]], A=2, \gamma = 3,\beta = \dfrac{1}{3}$ \\
\ref{DTLZ2} & Adam optimizer, $
\alpha = 0.6, d = 20, iter = 20000, lr = 0.001, a = [[0.15, 0.2, 0.7], [0.2, 0.5, 0.6], [0.2, 0.7, 0.4], [0.35, 0.6, 0.22], [0.6, 0.1, 0.46]]$ \\
\ref{DTLZ7} & Adam optimizer, $
\alpha = 0.6, d = 20, iter = 20000, lr = 0.001, a = [[0.62, 0.62, 0.4], [0.01, 0.62, 0.5], [0.01, 0.01, 0.82], [0.62, 0.01, 0.6]]$ \\
\bottomrule
\end{tabular}}
\end{table}
We use Hypernetwork based on multi-layer perceptron (MLP), which has the following structure:
\begin{align*}
    h_{\text{mlp}}(\mathbf{r},\phi): \textbf{Input}(\mathbf{r}) &\to \textbf{Linear}(m, d) \to\textbf{ReLU}\to \textbf{Linear}(d,d) \to \textbf{ReLU}\\
    &\to \textbf{Linear}(d,d)
    \to \textbf{ReLU} \to \textbf{Linear}(d,d)
    \to \textbf{ReLU} \\
    &\to \textbf{Linear}(d,d)
    \to \textbf{ReLU} \to \textbf{Linear}(d,d)
    \to \textbf{ReLU}\\
    &\to \textbf{Linear}(d,d)
    \to \textbf{ReLU} \to \textbf{Linear}(d, n)\\
    & \to \textbf{Constraint layer}\to \textbf{Output}(\mathbf{x}_{\mathbf{r}}).
\end{align*}
Toward Hypernetwork based on the Transformer model, we use the structure as follows:
\begin{align*}
    h_{\text{trans}}(\mathbf{r},\phi): \textbf{Input}(\mathbf{r}) &\to \left[
\begin{matrix}
\textbf{Linear}(1, d)  & \\
\dots \\
\textbf{Linear}(1, d) & \\
\end{matrix}
\right.\to \textbf{Concatenate}\to\left[
\begin{matrix}
&\textbf{Multi-Head Self-Attention}   \\
&\textbf{Identity layer}  \\
\end{matrix}
\right.\\
& \to \textbf{Sum} \to \left[
\begin{matrix}
&\textbf{MLP}   \\
&\textbf{Identity layer}  \\
\end{matrix}
\right. \to \textbf{Sum} \to \textbf{Linear}(d, n)\\
& \to \textbf{Constraint layer}\to \textbf{Output}(\mathbf{x}_{\mathbf{r}}).
\end{align*}
\section{ADDITIONAL EXPERIMENTS}\label{appendixB}
\subsection{Application of Controllable Pareto Front Learning in Multi-task Learning}
\subsubsection{Multi-task Learning as Multi-objectives optimization.}
Denotes a supervised dataset $\left(\mathbf{x},\mathbf{y}\right)=\left\{\left(x_j,y_j\right)\right\}_{j=1}^N$  where $N$ is the number of data points. They specified the MOO formulation of Multi-task learning from the empirical loss $\mathcal{L}^i(\mathbf{y},g(\mathbf{x},\boldsymbol{\theta}))$ using a vector-valued loss $\mathcal{L}$:
\begin{align*}
    \boldsymbol{\theta} &= \argmin_{\boldsymbol{\theta}} \mathcal{L}\left(\mathbf{y},g\left(\mathbf{x},\boldsymbol{\theta}\right)\right), \\
    \mathcal{L}\left(\mathbf{y}, g\left(\mathbf{x}, \boldsymbol{\theta}\right)\right) &= \left(\mathcal{L}_1\left(\mathbf{y},g\left(\mathbf{x},\boldsymbol{\theta}\right)\right),\dots,\mathcal{L}_m\left(\mathbf{y},g\left(\mathbf{x},\boldsymbol{\theta}\right)\right)\right)^T
\end{align*}
where $g\left(\mathbf{x}; \boldsymbol{\theta} \right): \mathcal{X}\times\Theta \rightarrow \mathcal{Y}$ represents to a Target network with parameters $\boldsymbol{\theta}$.
\subsubsection{Controllable Pareto Front Learning in Multi-task Learning.}
Controllable Pareto Front Learning in Multi-task Learning by solving the following:
\begin{align*}%\tag{PFL}%\label{eq:plf}
    & \phi^* = \argmin_{\phi} \underset{\substack{\mathbf{r} \sim Dir(\alpha)\\(\mathbf{x}, \mathbf{y}) \sim p_D}}{\mathbb{E}} \mathcal{S}(\mathcal{L}(\mathbf{y},g(\mathbf{x}, \boldsymbol{\theta}_{\mathbf{r}}), \mathbf{r},\mathbf{a}) \\
    & \text{ s.t. } \boldsymbol{\theta}_{\mathbf{r}} = h(\mathbf{r}, \phi^*)\\
    & \mathcal{L}(\mathbf{y},g(\mathbf{x}, \boldsymbol{\theta}_{\mathbf{r}}) \le \mathbf{b},
\end{align*}
where $h: \mathcal{P} \times \Phi \rightarrow \Theta$ represents to a Hypernetwork, $\mathbf{a} = (\mathbf{a}_1,\dots,\mathbf{a}_m), \mathbf{a}_i \ge 0$ is the lower-bound vector for the loss vector $\mathcal{L}(\mathbf{y},g(\mathbf{x}, \boldsymbol{\theta}_{\mathbf{r}})$, and the upper-bound vector denoted as $\mathbf{b} = (\mathbf{b}_1,\dots,\mathbf{b}_m), \mathbf{b}_i \ge 0$ is the desired loss value. The random variable $\mathbf{r}$ is a preference vector, forming the trade-off between loss functions.

\subsubsection{Multi-task Learning experiments}
The dataset is split into two subsets in MTL experiments: training and testing. Then, we split the training set into ten folds and randomly picked one fold to validate the model. The model with the highest HV in the validation fold will be evaluated. All methods are evaluated with the same well-spread preference vectors based on \citep{das}. The experiments MTL were implemented on a computer with CPU - Intel(R) Xeon(R) Gold 5120 CPU @ 2.20GHz, 32 cores, and GPU - VGA NVIDIA Tesla V100-PCIE with VRAM 32 GB.

\textbf{Image Classification.} Our experiment involved the application of three benchmark datasets from Multi-task Learning for the image classification task: Multi-MNIST \citep{sabour2017}, Multi-Fashion \citep{xiao2017fashion}, and Multi-Fashion+MNIST \citep{lin2019pareto}. We compare our proposed Hyper-Trans model with the Hyper-MLP model based on Multi-LeNet architecture \citep{tuan2023framework}, and we report results in Table \ref{mnist}.

\begin{table*}[b]
\caption{Testing hypervolume on Multi-MNIST, Multi-Fashion, and Multi-Fashion+MNIST datasets with 10 folds split.}
\label{mnist}
\centering
\resizebox{0.8\textwidth}{!}{\begin{tabular}{| c | >{\centering\arraybackslash}m{2.cm} | >{\centering\arraybackslash}m{2.cm} | >{\centering\arraybackslash}m{2.cm} |>{\centering\arraybackslash}m{1.cm} | }
\toprule
                & {\bf Multi-MNIST} & {\bf Multi-Fashion} & {\bf Fashion-MNIST} &   \\ \midrule  

{\bf Method} & \bf HV $\Uparrow$ & \bf HV$\Uparrow$ & \bf HV$\Uparrow$ &  {\bf Params} \\
\midrule
Hyper-MLP \citep{tuan2023framework} &$2.860 \pm 0.027$  & $2.164 \pm 0.045$  & $2.781 \pm 0.039$ &  8.66M \\ 
\midrule
Hyper-Trans + ReLU (\textbf{ours}) & $\bf 2.883 \pm \bf 0.029$   & $2.166 \pm 0.059$  & $\bf 2.806 \pm \bf 0.041$  & 8.66M  \\ 
Hyper-Trans + GeLU (\textbf{ours})& $2.879 \pm 0.017$   & $\bf 2.196 \pm \bf 0.046$  & $2.802 \pm 0.049$  & 8.66M  \\ 
\bottomrule
\end{tabular}}
\end{table*}

\textbf{Scene Understanding.} The NYUv2 dataset \citep{silberman2012indoor} serves as the basis experiment for our method. This dataset is a collection of 1449 RGBD images of an indoor scene that have been densely labeled at the per-pixel level using 13 classifications. We use this dataset as a 2-task MTL benchmark for depth estimation and semantic segmentation. The results are presented in Table \ref{nuyv2} with (3, 3) as hypervolume's reference point. Our method, which includes ReLU and GeLU activations, achieves the best HV on the NYUv2 dataset with the same parameters as Hyper-MLP.
\begin{table*}[ht]
\caption{Testing hypervolume on NYUv2 dataset.}
\label{nuyv2}
\centering
\resizebox{0.5\textwidth}{!}{\begin{tabular}{| c | >{\centering\arraybackslash}m{1.cm} |>{\centering\arraybackslash}m{1.cm} | }
\toprule
\multicolumn{3}{|c|}{NYUv2} \\ 
\midrule
{\bf Method} & \bf HV $\Uparrow$& {\bf Params} \\
\midrule
Hyper-MLP \citep{tuan2023framework} & $4.058$ & 31.09M \\ 
\midrule
Hyper-Trans + ReLU (\textbf{ours}) &  $ 4.093$ & 31.09M  \\ 
Hyper-Trans + GeLU (\textbf{ours}) &   $\bf 4.135$ & 31.09M  \\ 
% PHN-Cheby (\textbf{ours})  &  8.865 & 10.36 \\
% PHN-Utility  (\textbf{ours})  & \bf 9.237 &  10.35 \\ 
\bottomrule
\end{tabular}}
\end{table*}

\textbf{Multi-Output Regression.} We conduct experiments using the SARCOS dataset \citep{Sethu2000}  to illustrate the feasibility of our methods in high-dimensional space. The objective is to predict seven relevant joint torques from a 21-dimensional input space (7 tasks) (7 joint locations, seven joint velocities, and seven joint accelerations). In Table \ref{sarcos}, our proposed model shows superiority over Hyper-MLP in terms of hypervolume value.

\begin{table*}[ht]
\caption{Testing hypervolume on SARCOS dataset with ten folds split. }
\label{sarcos}
\centering
\resizebox{0.5\textwidth}{!}{\begin{tabular}{| c | >{\centering\arraybackslash}m{2.cm} |>{\centering\arraybackslash}m{1.cm} | }
\toprule
\multicolumn{3}{|c|}{SARCOS} \\ 
\midrule
{\bf Method} & \bf HV $\Uparrow$& {\bf Params} \\
\midrule
Hyper-MLP \citep{tuan2023framework} & $0.6811 \pm 0.227$ & 7.1M \\ 
\midrule
Hyper-Trans + ReLU (\textbf{ours}) & $ 0.7107 \pm 0.0236$ & 7.1M  \\ 
Hyper-Trans + GeLU (\textbf{ours}) & $\bf 0.7123 \pm \bf 0.0134$ & 7.1M  \\ 
% \midrule
% PHN-Cheby (\textbf{ours})  &  0.828 &  1.64\\
% PHN-Utility  (\textbf{ours})  & 
% \bf 0.856 &  \bf 1.42  \\ 
\bottomrule
\end{tabular}}
\end{table*}

\textbf{Multi-Label Classification.} Continually investigate our proposed architecture in MTL problem, we solve the problem of recognizing 40 facial attributes (40 tasks) in 200K face images on CelebA dataset \citep{liu2015deep} using a big Target network: Resnet18 (11M parameters) of \citep{he2016deep}. Due to the very high dimensional scale (40 dimensions), we only test hypervolume value on ten hard-tasks CelebA datasets, including 'Arched Eyebrows,' 'Attractive,' 'Bags Under Eyes,' 'Big Lips,' 'Big Nose,' 'Brown Hair,' 'Oval Face,' 'Pointy Nose,' 'Straight Hair,' 'Wavy Hair.' Table \ref{celeba} shows that the Hyper-Trans model combined with the GeLU activation function gives the highest HV value with the reference point (1,\dots,1).

\begin{table*}[ht]
\caption{Testing hypervolume on ten hard-tasks CelebA dataset.}
\label{celeba}
\centering
\resizebox{0.5\textwidth}{!}{\begin{tabular}{| c | >{\centering\arraybackslash}m{2.cm} |>{\centering\arraybackslash}m{1.cm} | }
\toprule
\multicolumn{3}{|c|}{CelebA} \\ 
\midrule
{\bf Method} & \bf HV $\Uparrow$& {\bf Params} \\
\midrule
Hyper-MLP \citep{tuan2023framework}& $ 0.003995$ & 11.09M \\ 
\midrule
Hyper-Trans + ReLU (\textbf{ours}) & $0.003106$ & 11.09M  \\ 
Hyper-Trans + GeLU (\textbf{ours}) & $\bf 0.004719$ & 11.09M  \\ 
% \midrule
% PHN-Cheby (\textbf{ours})  &  0.828 &  1.64\\
% PHN-Utility  (\textbf{ours})  & 
% \bf 0.856 &  \bf 1.42  \\ 
\bottomrule
\end{tabular}}
\end{table*}

\subsection{Number of Heads and Hidden dim}
To understand the impact of the number of heads and the dimension of hidden layers, we analyzed the MED error based on different numbers of heads and hidden dims in Figure \ref{fig:head_dim}.

We compare the Hyper-Transformer and Hyper-MLP models based on the MED score, where the dimension of the hidden layers $d = [16,32,64,128]$, and the number of heads $e = [1,2,4,8,16]$.
\begin{figure*}[ht]
     \centering
        \begin{subfigure}[b]{0.3\textwidth}
         \centering
         \includegraphics[width=\textwidth]{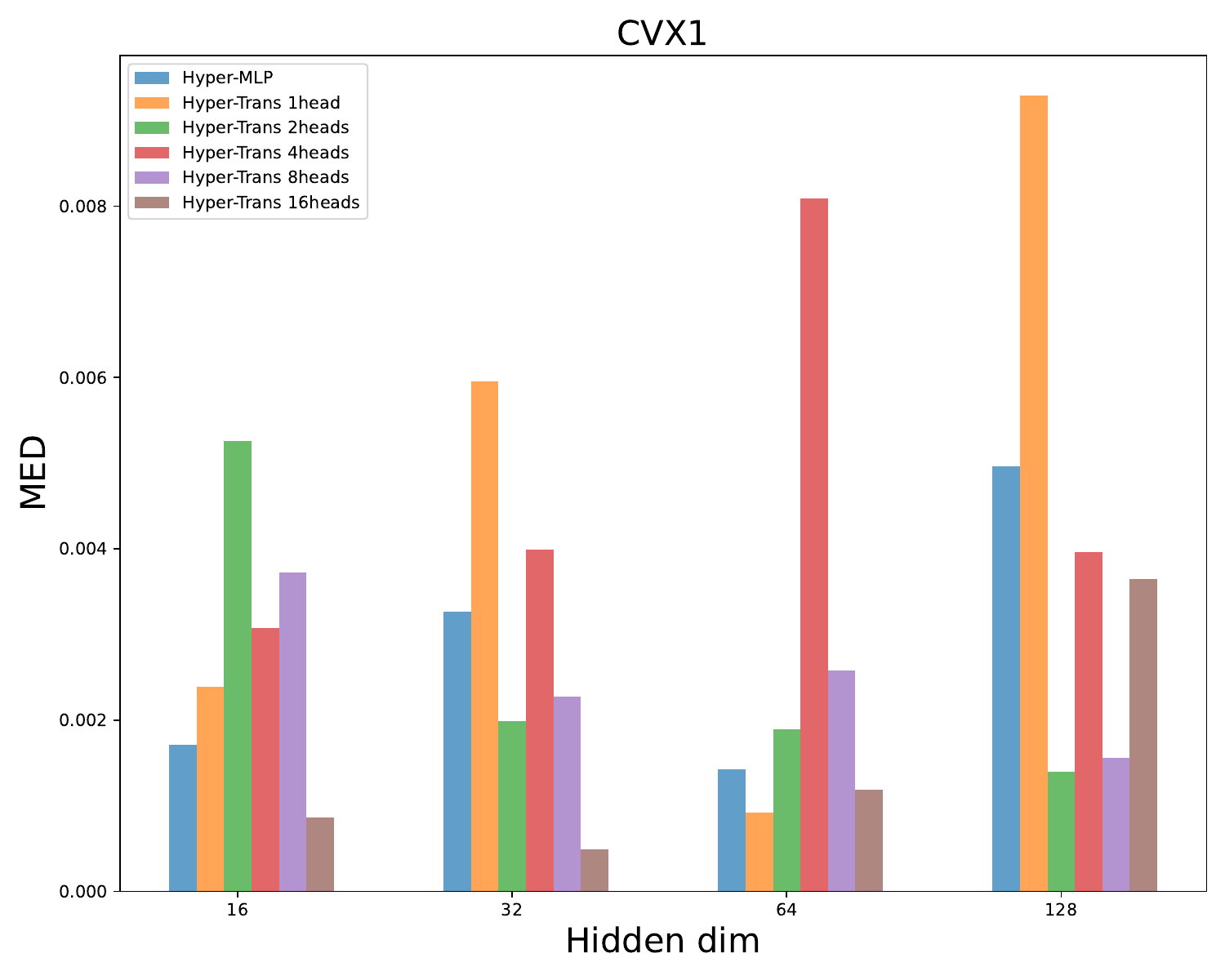}
         %\label{fig:five over x}
     \end{subfigure}
     \hfill
     \begin{subfigure}[b]{0.3\textwidth}
         \centering
         \includegraphics[width=\textwidth]{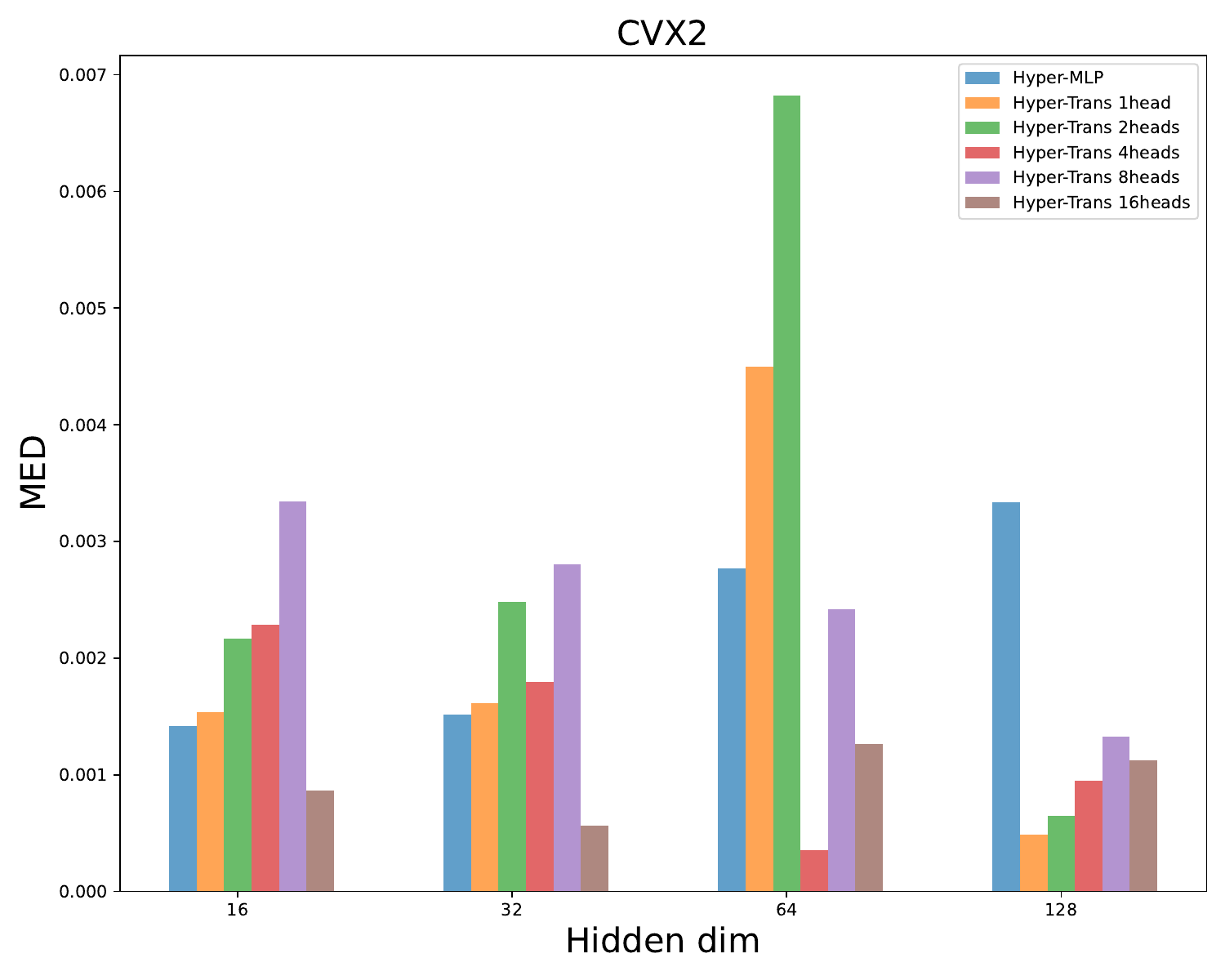}
         %\label{fig:three sin x}
     \end{subfigure}
     \hfill
     \begin{subfigure}[b]{0.3\textwidth}
         \centering
         \includegraphics[width=\textwidth]{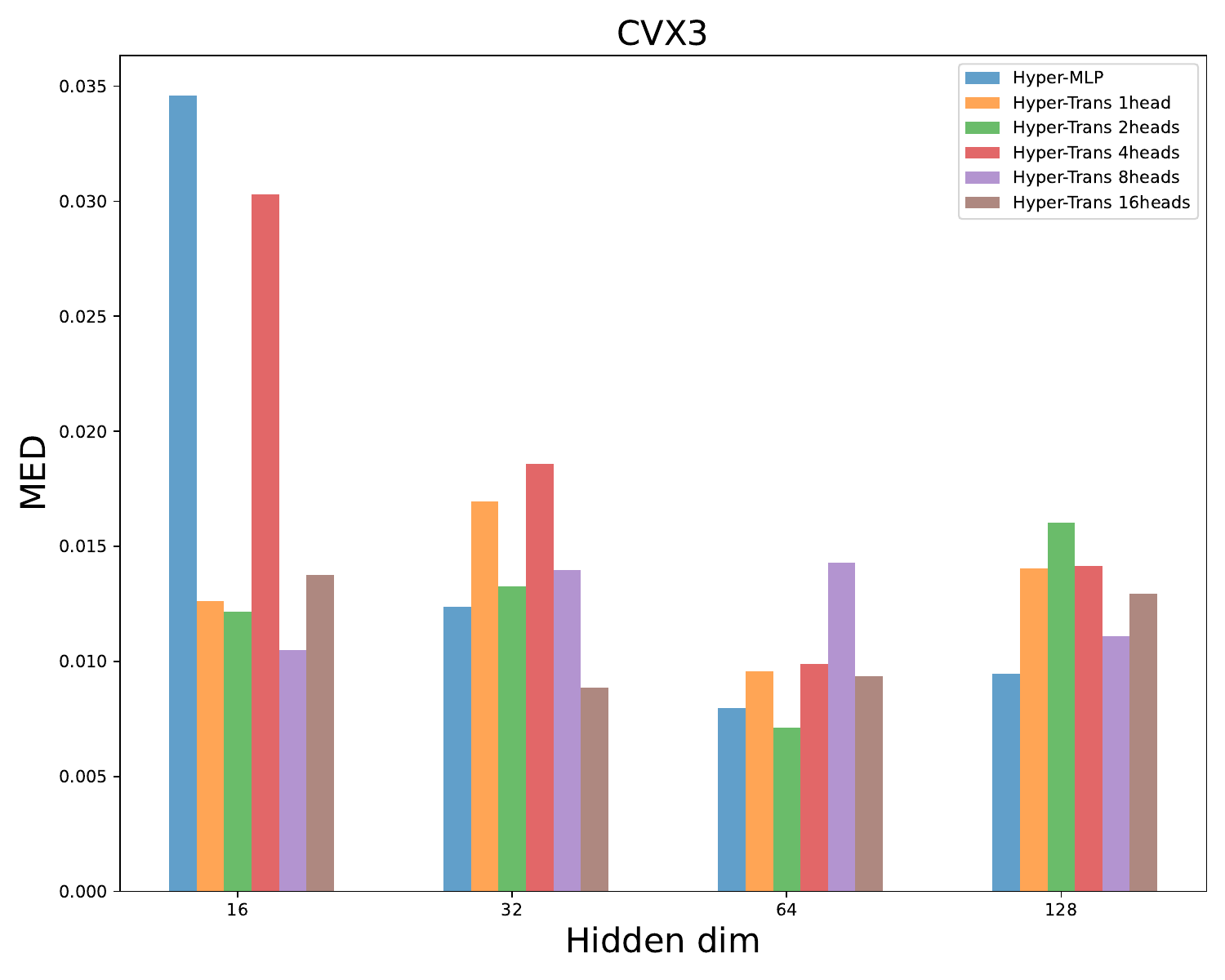}
         %\label{fig:three sin x}
     \end{subfigure}
     
     \begin{subfigure}[b]{0.3\textwidth}
         \centering
         \includegraphics[width=\textwidth]{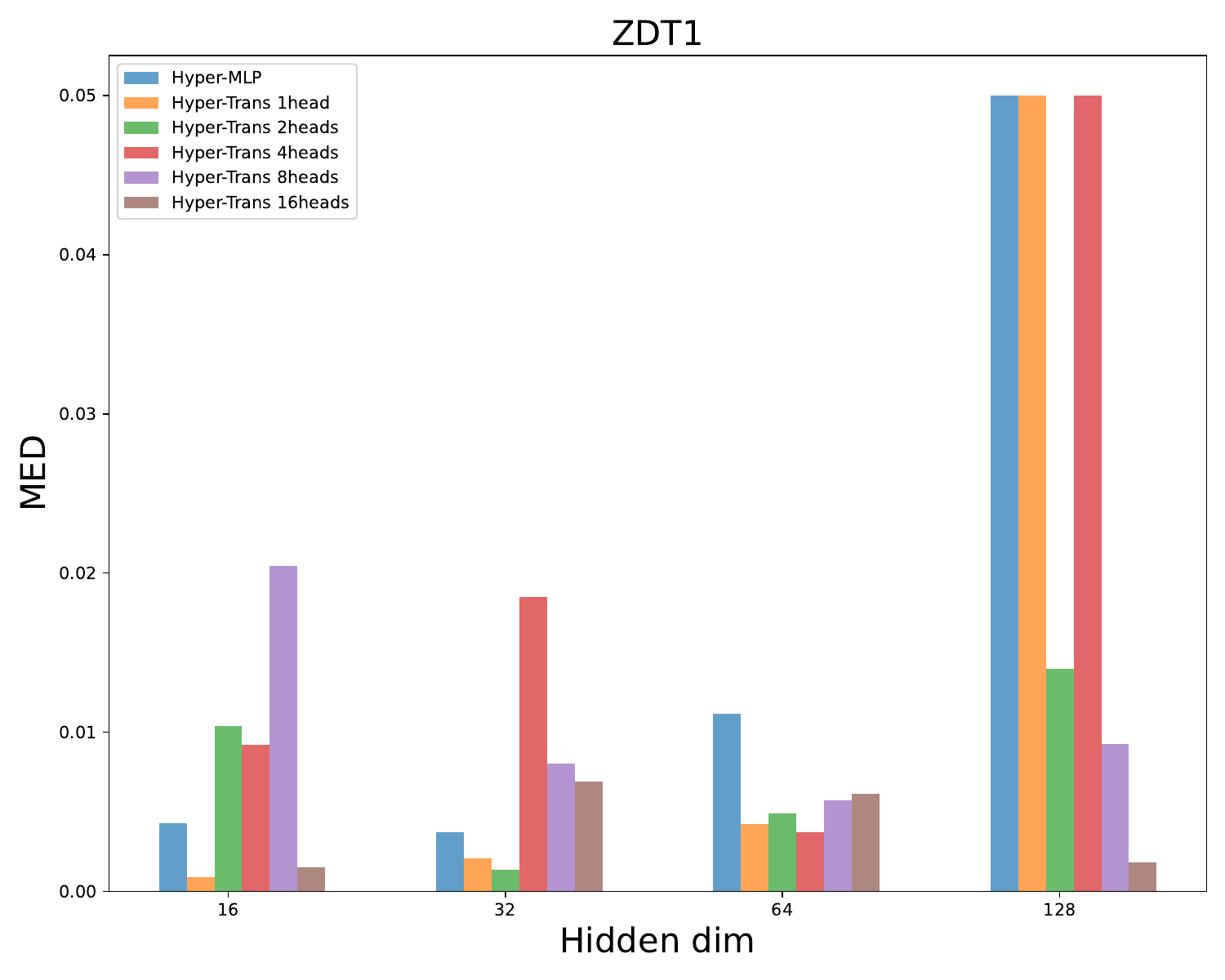}
         %\label{fig:three sin x}
     \end{subfigure}
     \hfill
     \begin{subfigure}[b]{0.3\textwidth}
         \centering
         \includegraphics[width=\textwidth]{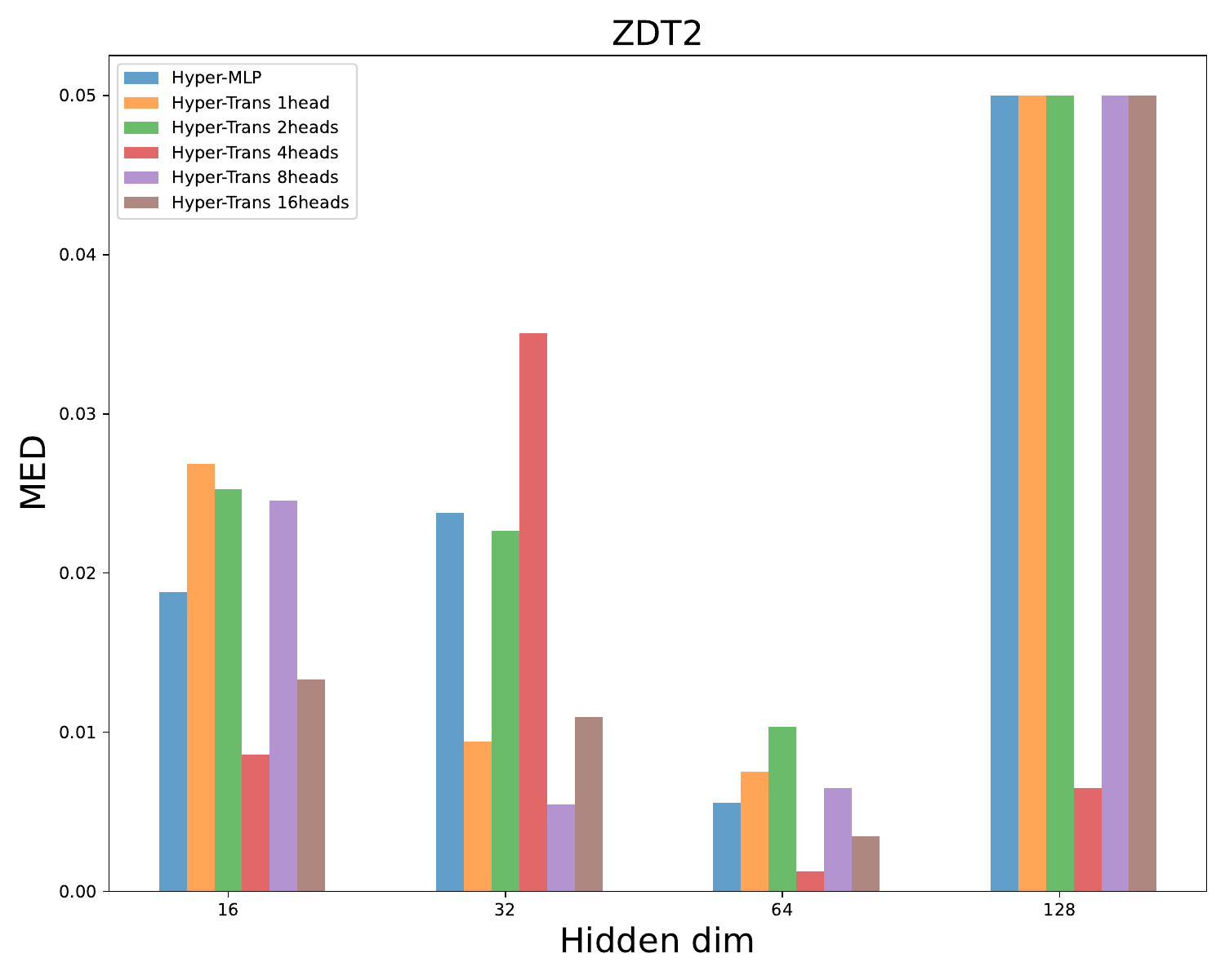}
         %\label{fig:three sin x}
     \end{subfigure}
     \hfill
     \begin{subfigure}[b]{0.3\textwidth}
         \centering
         \includegraphics[width=\textwidth]{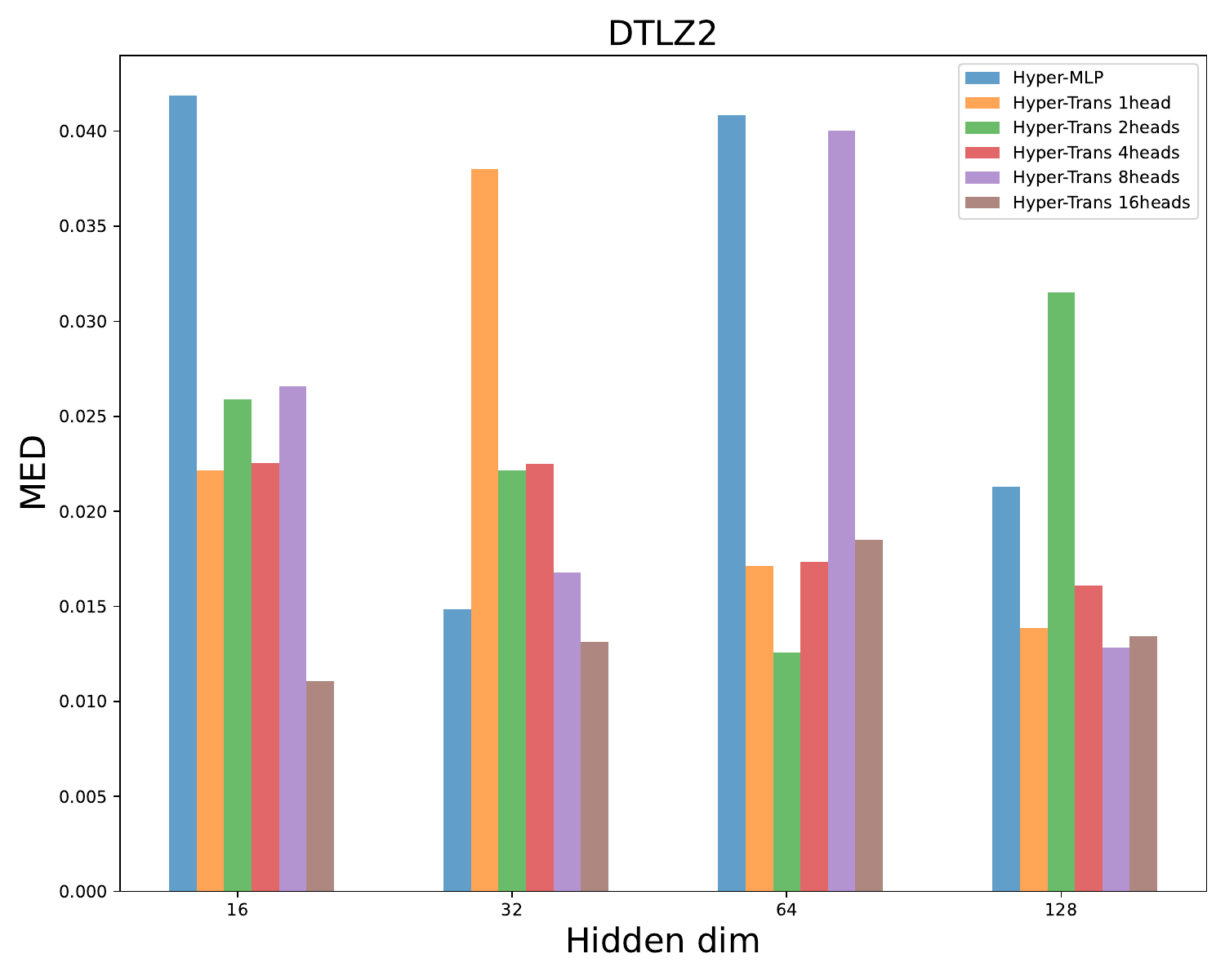}
         %\label{fig:three sin x}
     \end{subfigure}
      \caption{MED score of Hyper-Transformer and Hyper-MLP across the number of Heads and the dimension of Hidden layers.}
      \label{fig:head_dim}
\end{figure*}
\subsection{Feature Maps Weight generated by Hypernetworks}
We compute feature maps of the first convolutional from the weights generated by Hyper-Trans in Figure \ref{fig:feature_map}. Briefly, we averaged feature maps of this convolutional layer across all its filter outputs. This accounts for the learned weights of Multi-Lenet through hypernetworks.
\begin{figure*}[ht]
     \centering
     \includegraphics[width=0.7\textwidth]{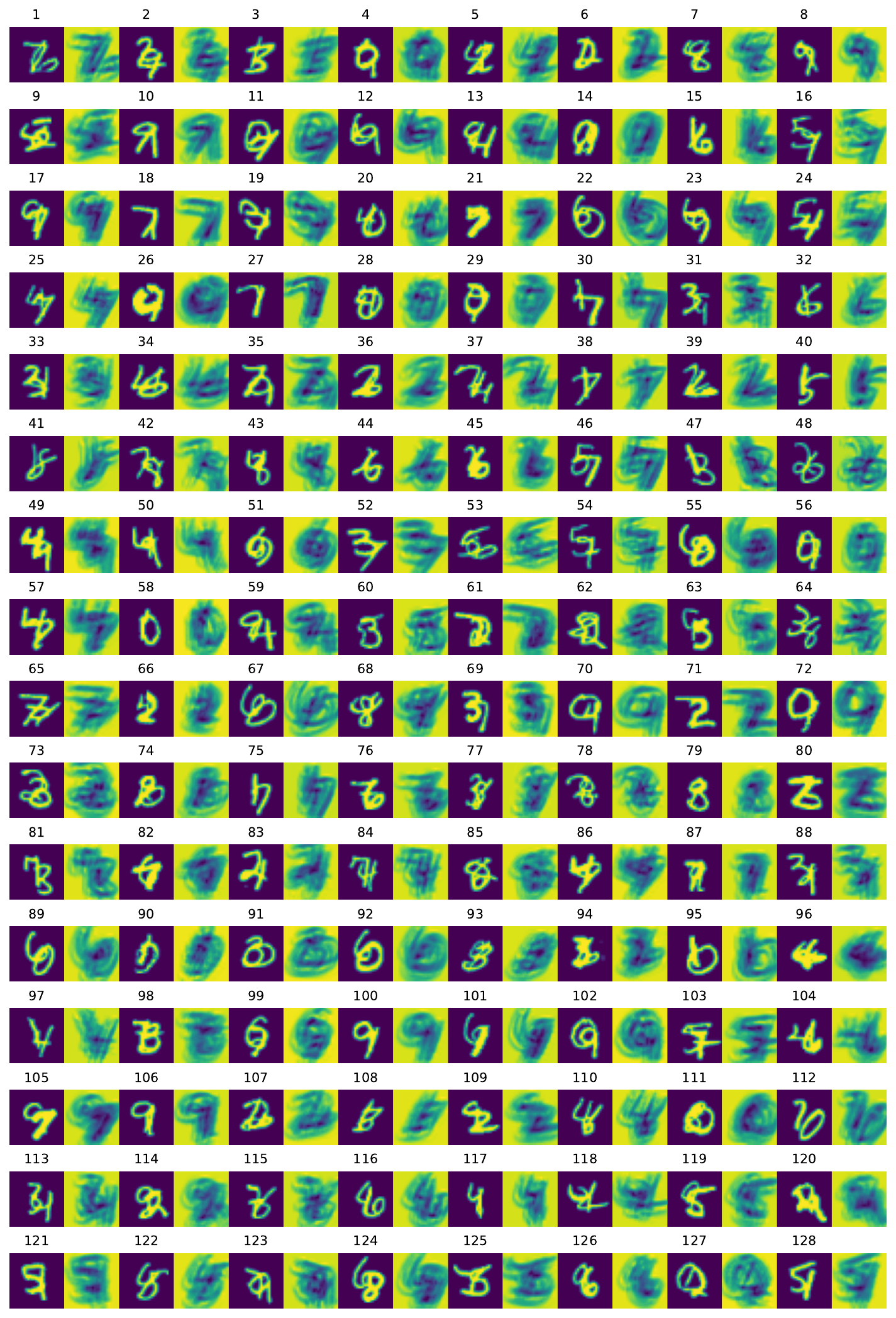}
     \caption{Feature maps of Targetnetwork with weights that Hyper-Transformer generated.}
     \label{fig:feature_map}
\end{figure*}
\subsection{Exactly Mapping of Hypernetworks}
We utilize Hypernetwork to generate an approximate efficient solution from a reference vector created by Dirichlet distribution with $\alpha = 0.6$. We trained all completion functions using an Adam optimizer \citep{kingma2014adam} with a learning rate of $1e-3$ and $20000$ iterations. In the test phase, we sampled three preference vectors based on each lower bound in Table \ref{hyperparameters}. Besides, we also illustrated target points and predicted points from the pre-trained Hypernetwork in Figure \ref{fig:connect_1}, \ref{fig:connect_2}, and \ref{fig:disconnect}.

\begin{figure*}[ht]
     \centering
        \begin{subfigure}[b]{0.45\textwidth}
         \centering
         \includegraphics[width=\textwidth]{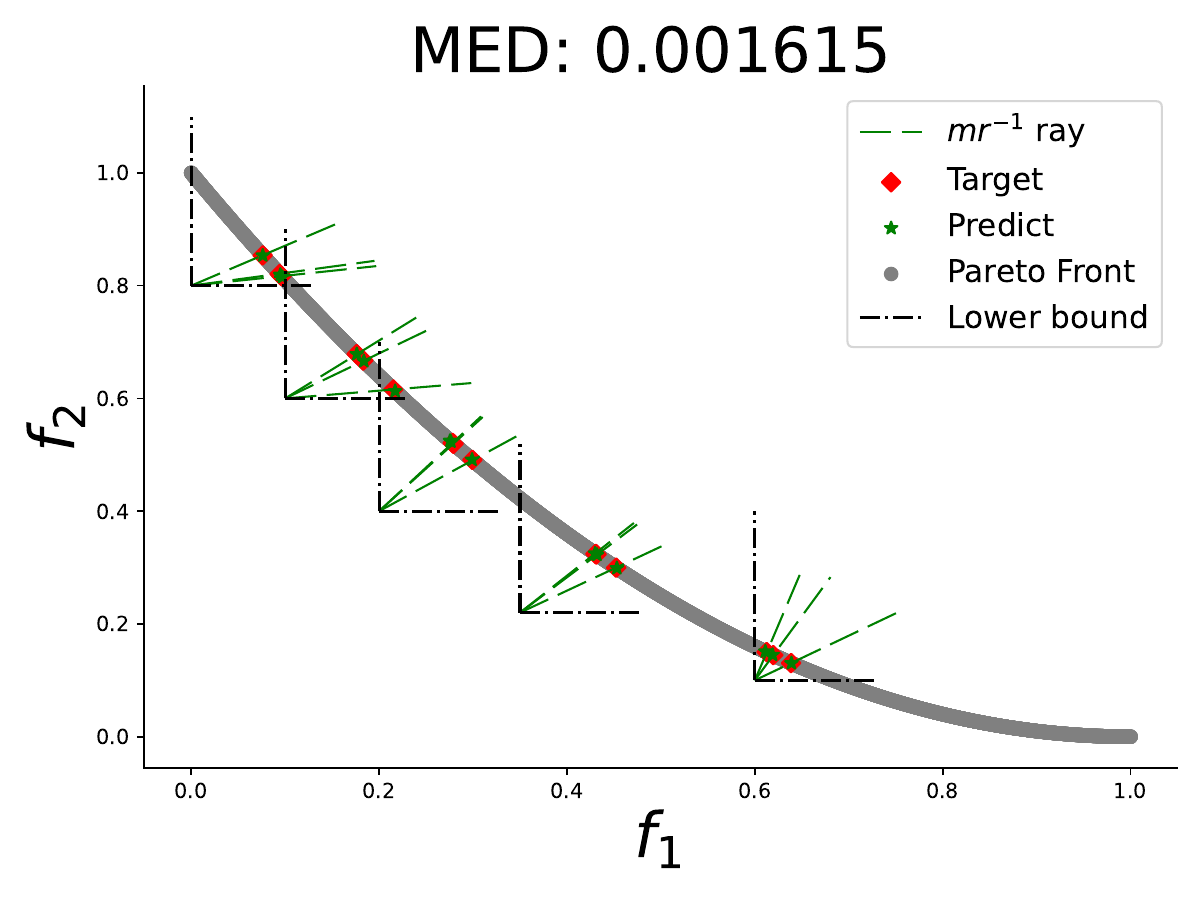}
         %\label{fig:five over x}
     \end{subfigure}
     \hfill
     \begin{subfigure}[b]{0.45\textwidth}
         \centering
         \includegraphics[width=\textwidth]{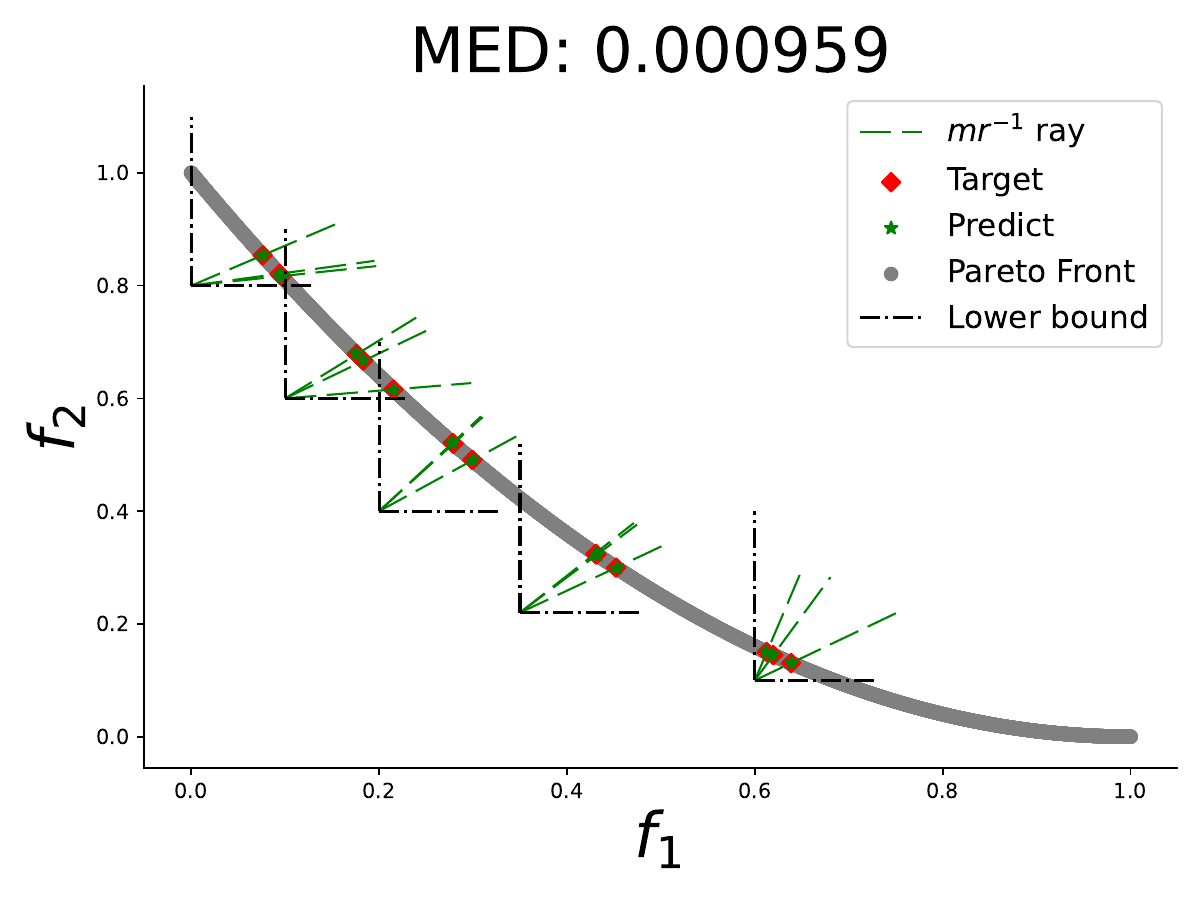}
         %\label{fig:three sin x}
     \end{subfigure}

     \centering
     \begin{subfigure}[b]{0.45\textwidth}
         \centering
         \includegraphics[width=\textwidth]{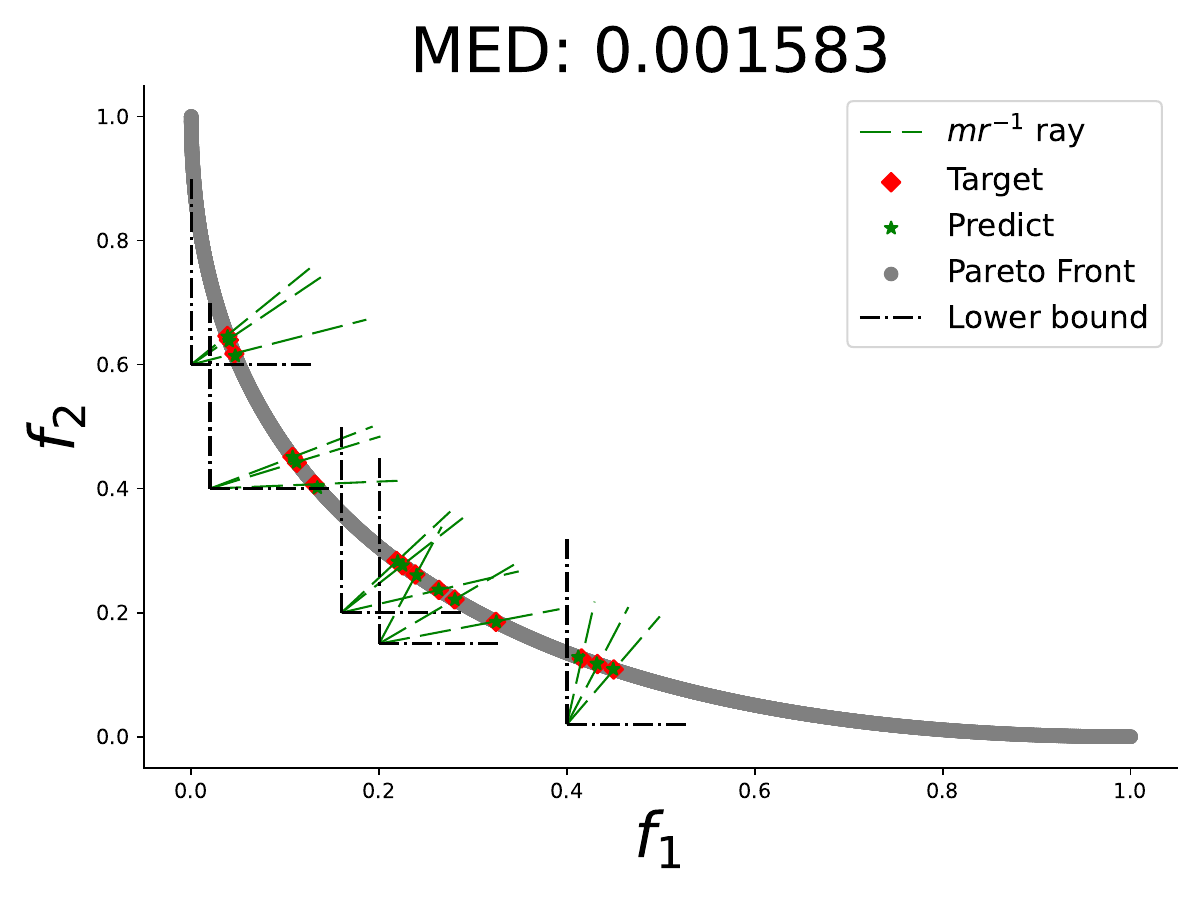}
         %\label{fig:five over x}
     \end{subfigure}
     \hfill
     \begin{subfigure}[b]{0.45\textwidth}
         \centering
         \includegraphics[width=\textwidth]{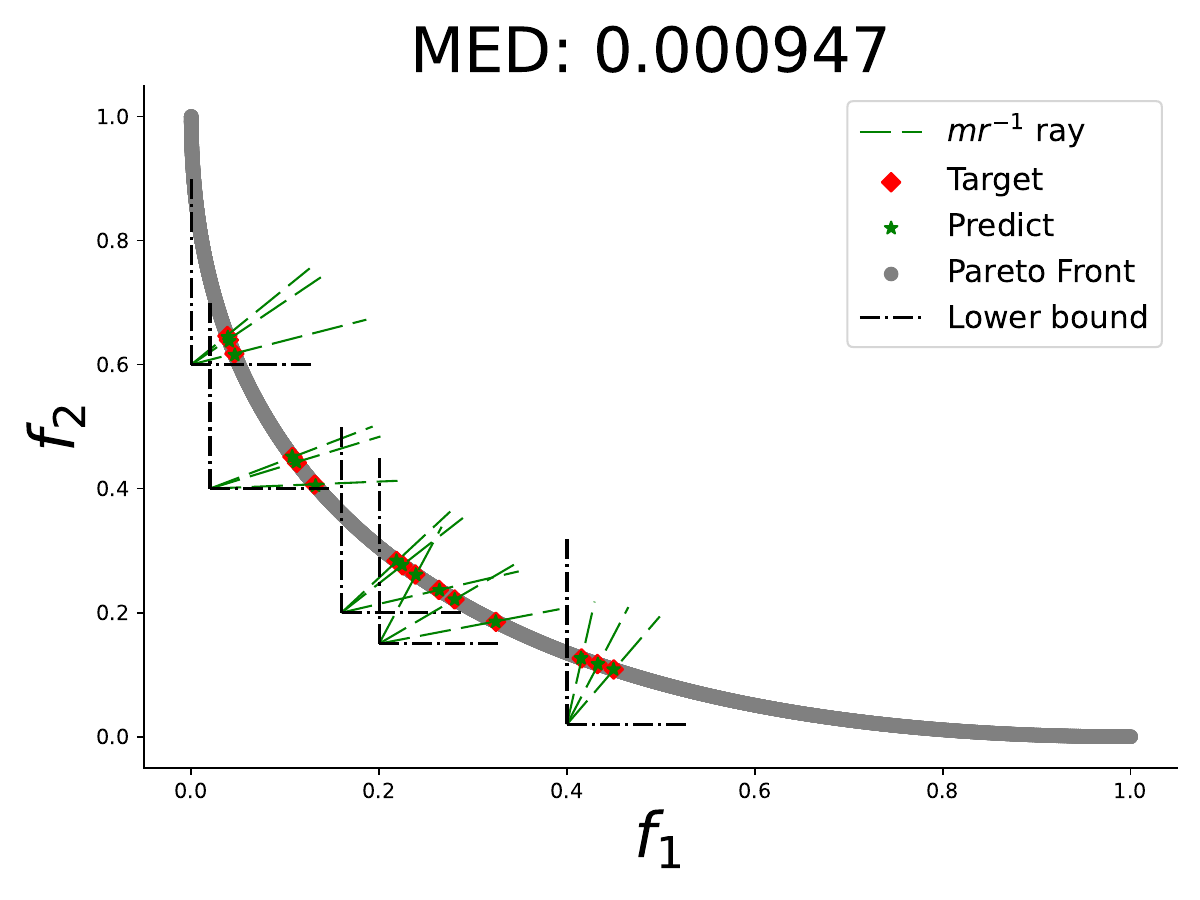}
         %\label{fig:three sin x}
     \end{subfigure}

    \begin{subfigure}[b]{0.35\textwidth}
         \centering
         \includegraphics[width=\textwidth]{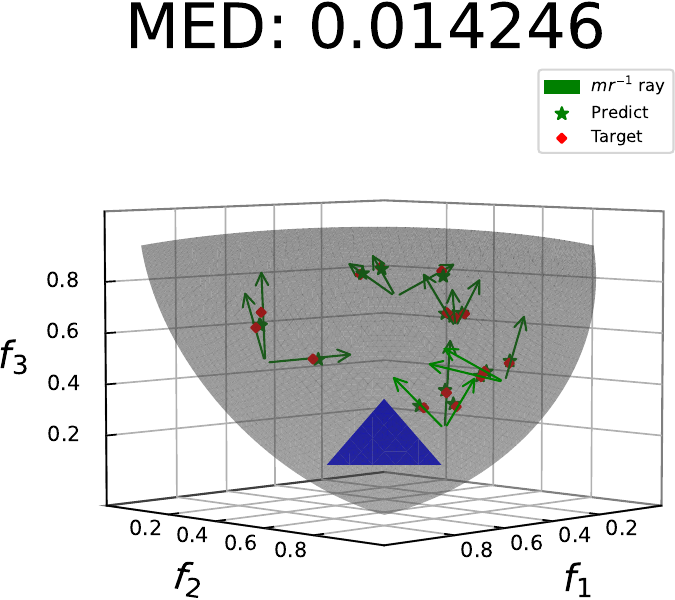}
         \caption{Hyper-MLP}
     \end{subfigure}
     \hfill
     \begin{subfigure}[b]{0.35\textwidth}
         \centering
         \includegraphics[width=\textwidth]{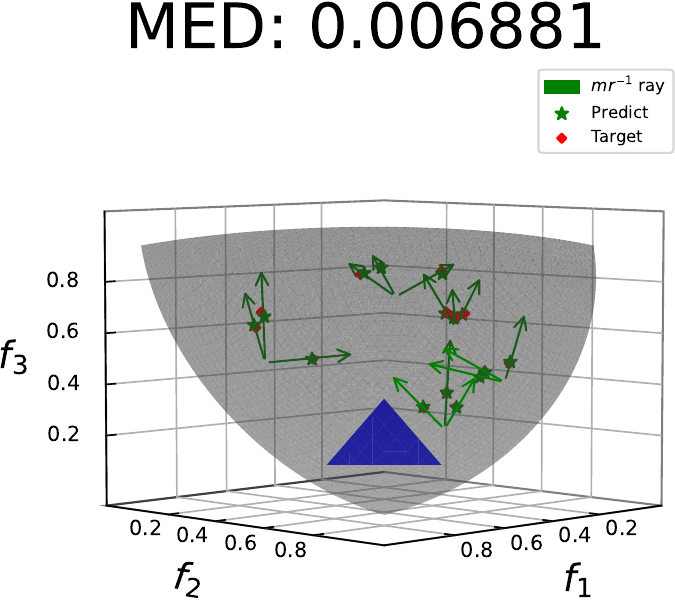}
         \caption{Hyper-Transformer}
     \end{subfigure}
      \caption{The Controllable Pareto Front Learning by Split Feasibility Constraints method achieves an exact mapping between the predicted solution of Hypernetwork and the truth solution, as illustrated in Examples \ref{CVX1} (top), \ref{CVX2} (middle), and \ref{CVX3} (bottom).}
      \label{fig:connect_1}
\end{figure*}

\begin{figure*}[ht]
    \begin{subfigure}[b]{0.45\textwidth}
         \centering
         \includegraphics[width=\textwidth]{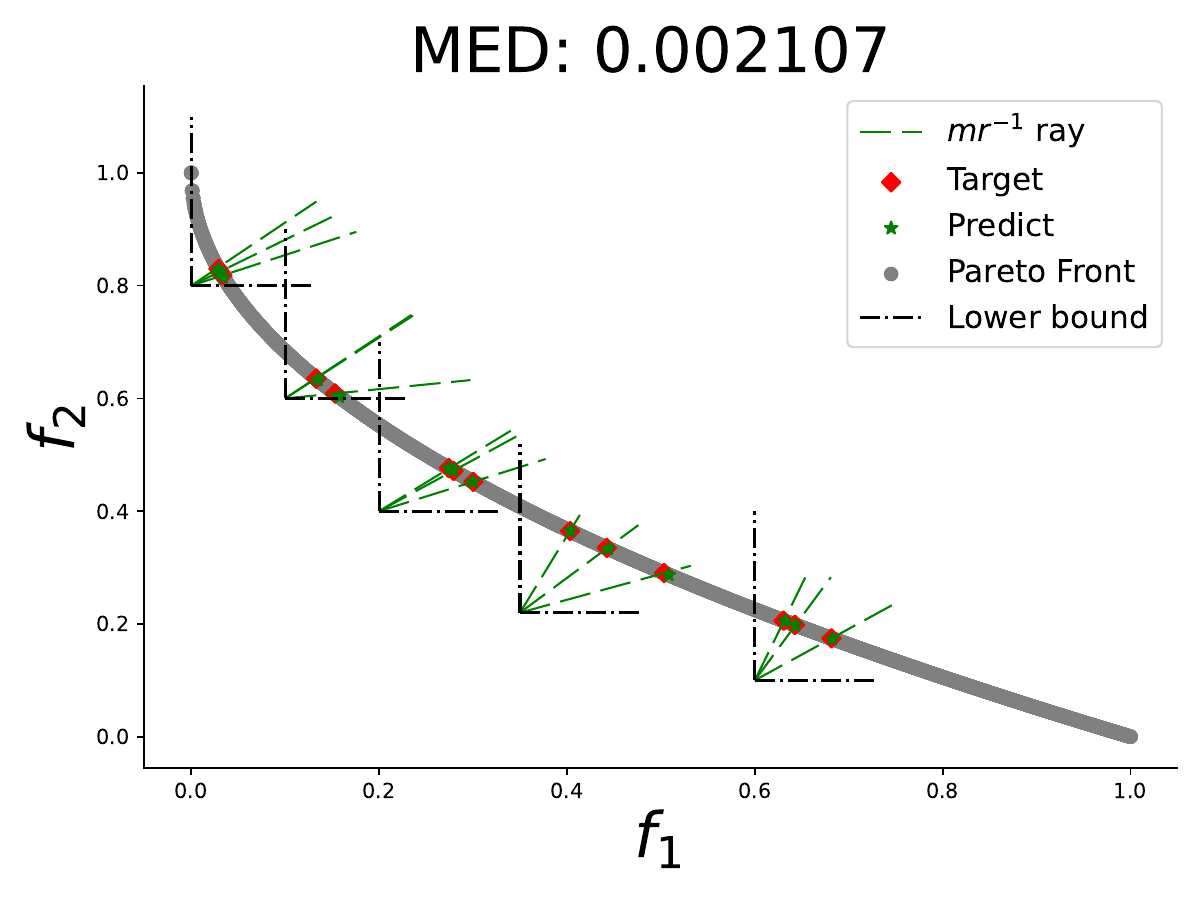}
     \end{subfigure}
     \hfill
     \begin{subfigure}[b]{0.45\textwidth}
         \centering
         \includegraphics[width=\textwidth]{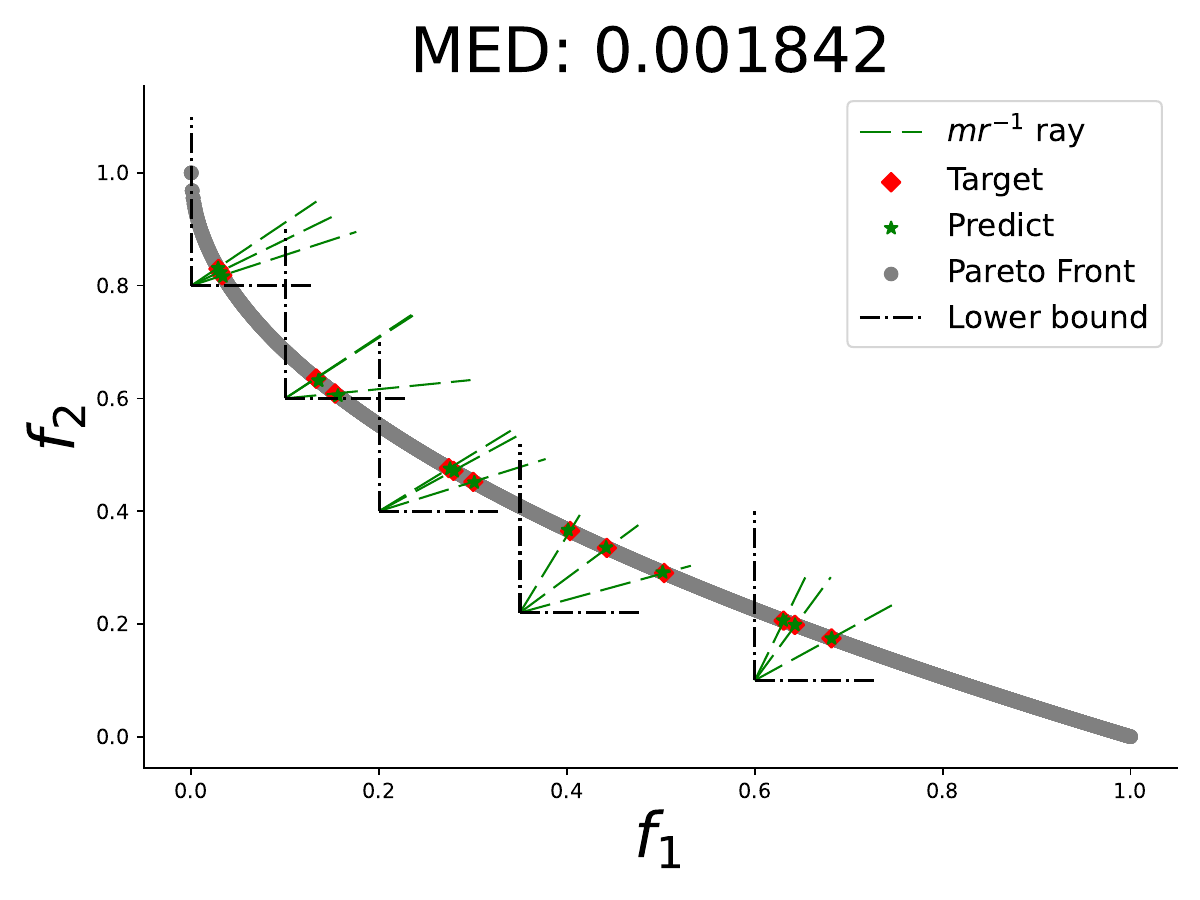}
     \end{subfigure}

    \begin{subfigure}[b]{0.45\textwidth}
         \centering
         \includegraphics[width=\textwidth]{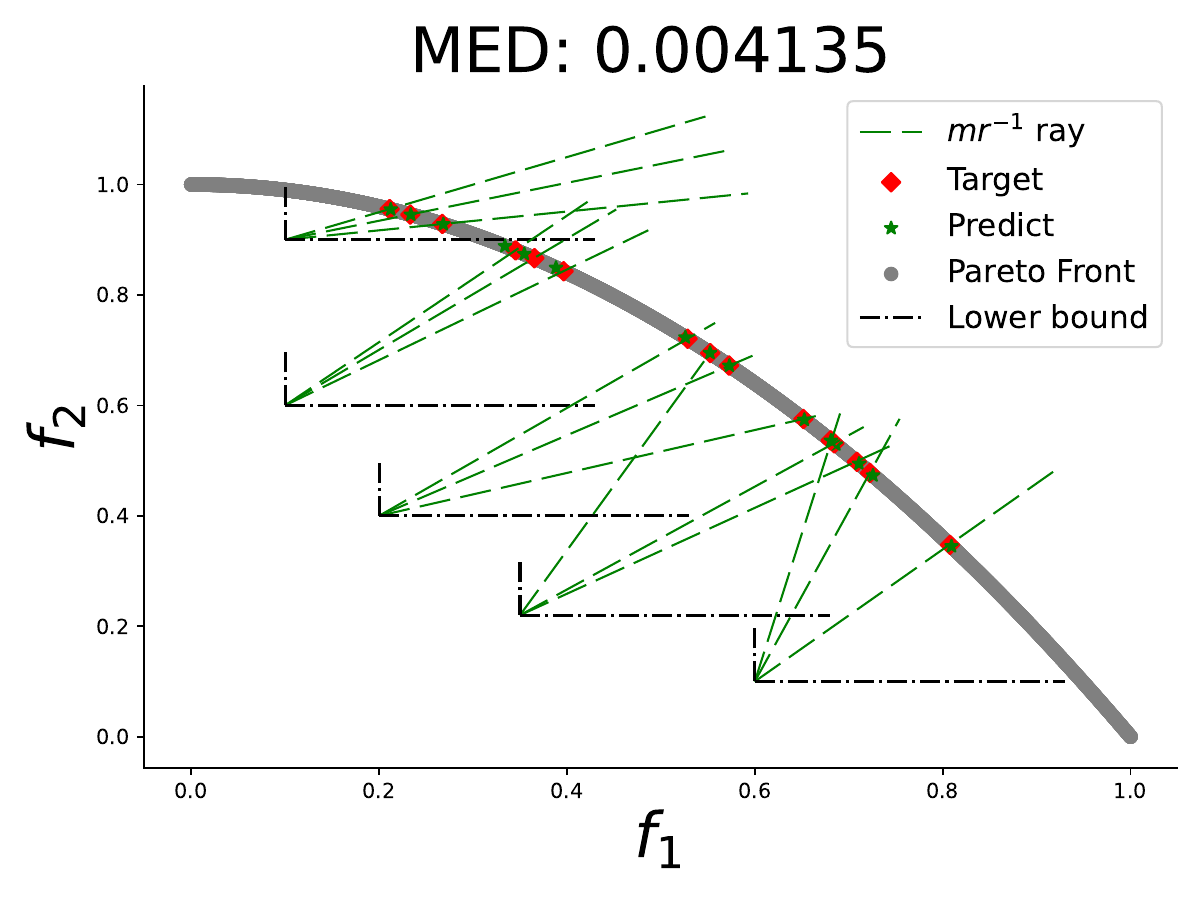}
     \end{subfigure}
     \hfill
     \begin{subfigure}[b]{0.45\textwidth}
         \centering
         \includegraphics[width=\textwidth]{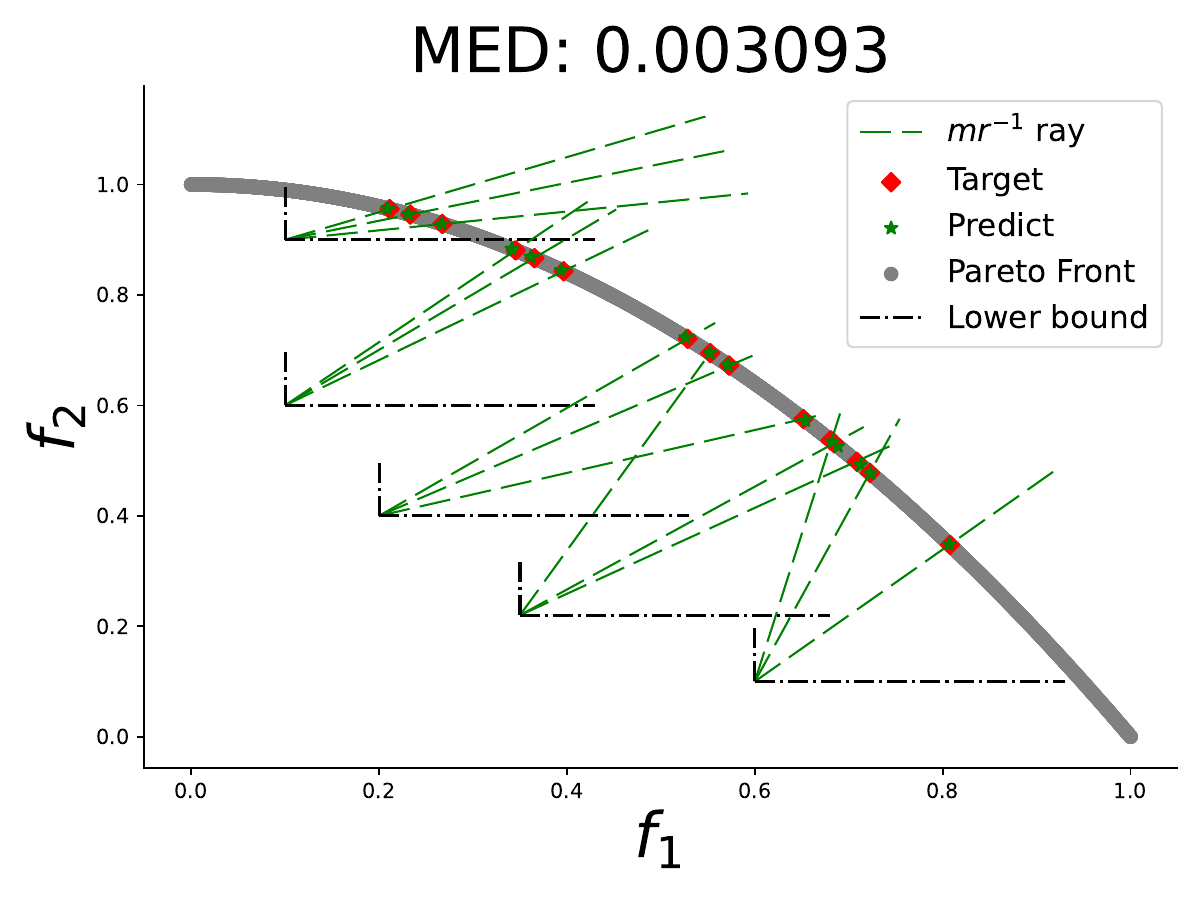}
     \end{subfigure}

    \begin{subfigure}[b]{0.35\textwidth}
         \centering
         \includegraphics[width=\textwidth]{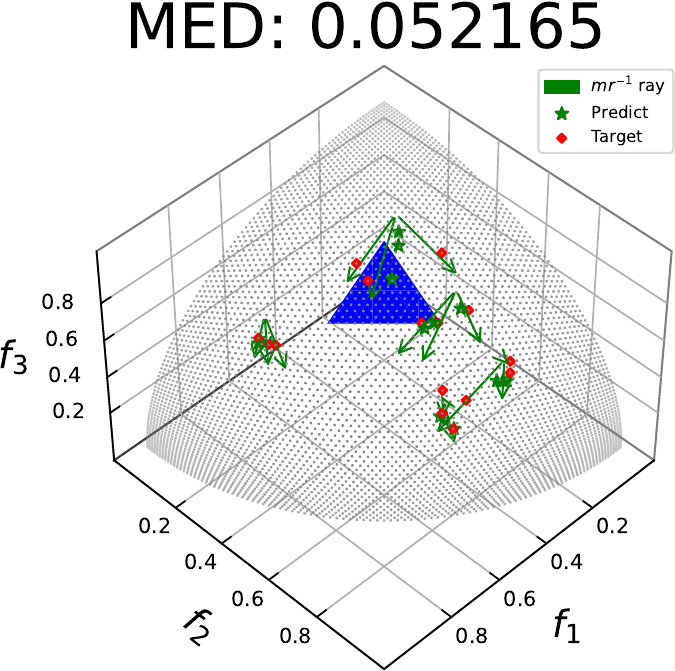}
         \caption{Hyper-MLP}
     \end{subfigure}
     \hfill
     \begin{subfigure}[b]{0.35\textwidth}
         \centering
         \includegraphics[width=\textwidth]{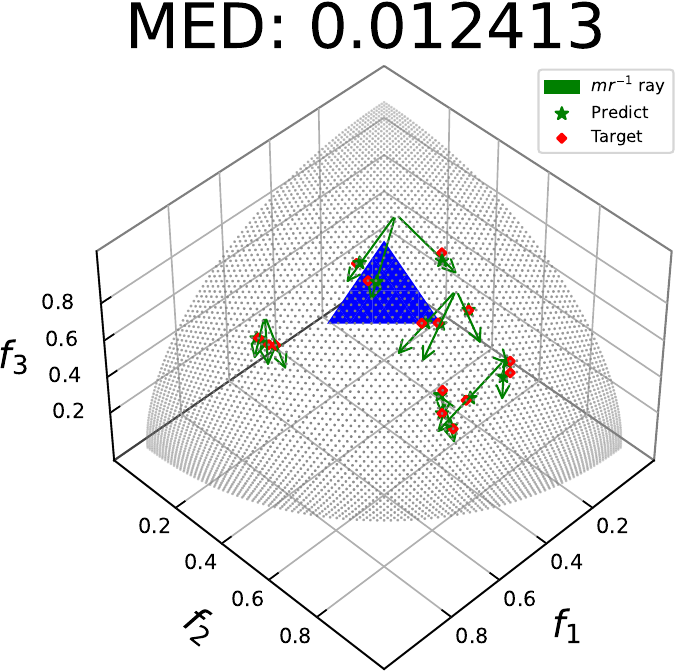}
         \caption{Hyper-Transformer}
     \end{subfigure}
      \caption{The Controllable Pareto Front Learning by Split Feasibility Constraints method achieves an exact mapping between the predicted solution of Hypernetwork and the truth solution, as illustrated in Examples \ref{ZDT} (top), \ref{ZDT2} (middle), and \ref{DTLZ2} (bottom).}
      \label{fig:connect_2}
\end{figure*}
\begin{figure*}[ht]
    \begin{subfigure}[b]{0.35\textwidth}
         \centering
         \includegraphics[width=\textwidth]{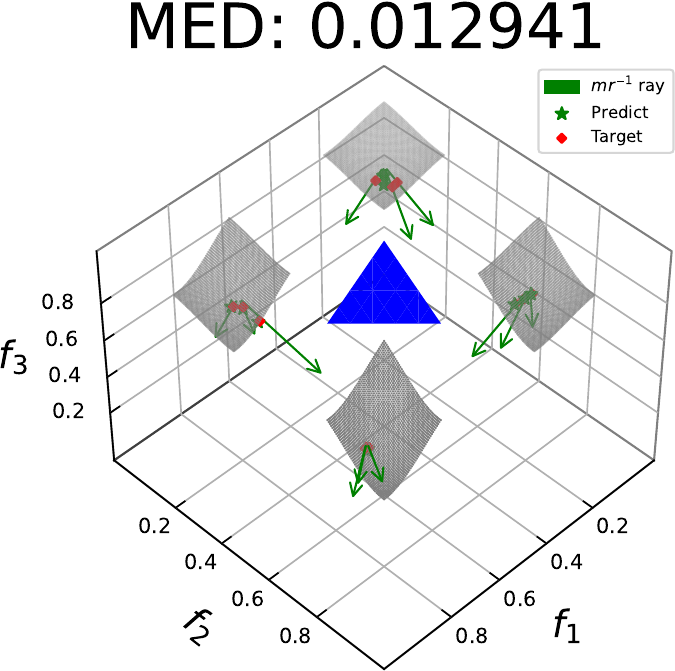}
     \end{subfigure}
     \hfill
     \begin{subfigure}[b]{0.35\textwidth}
         \centering
         \includegraphics[width=\textwidth]{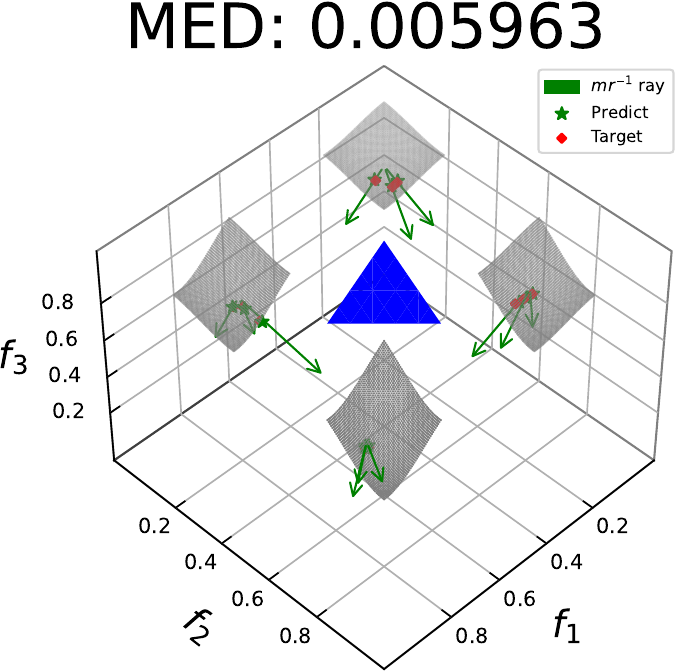}
     \end{subfigure}

    \begin{subfigure}[b]{0.45\textwidth}
         \centering
         \includegraphics[width=\textwidth]{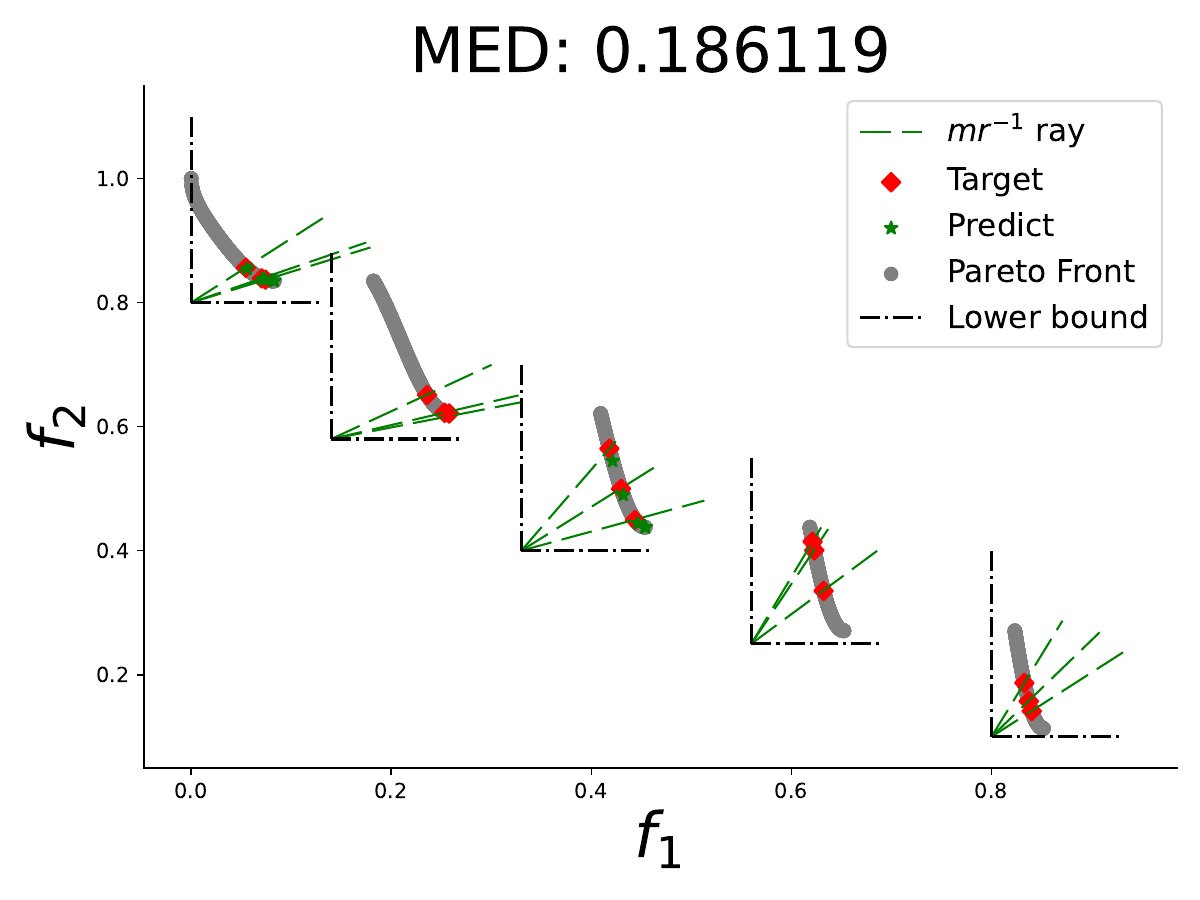}
     \end{subfigure}
     \hfill
     \begin{subfigure}[b]{0.45\textwidth}
         \centering
         \includegraphics[width=\textwidth]{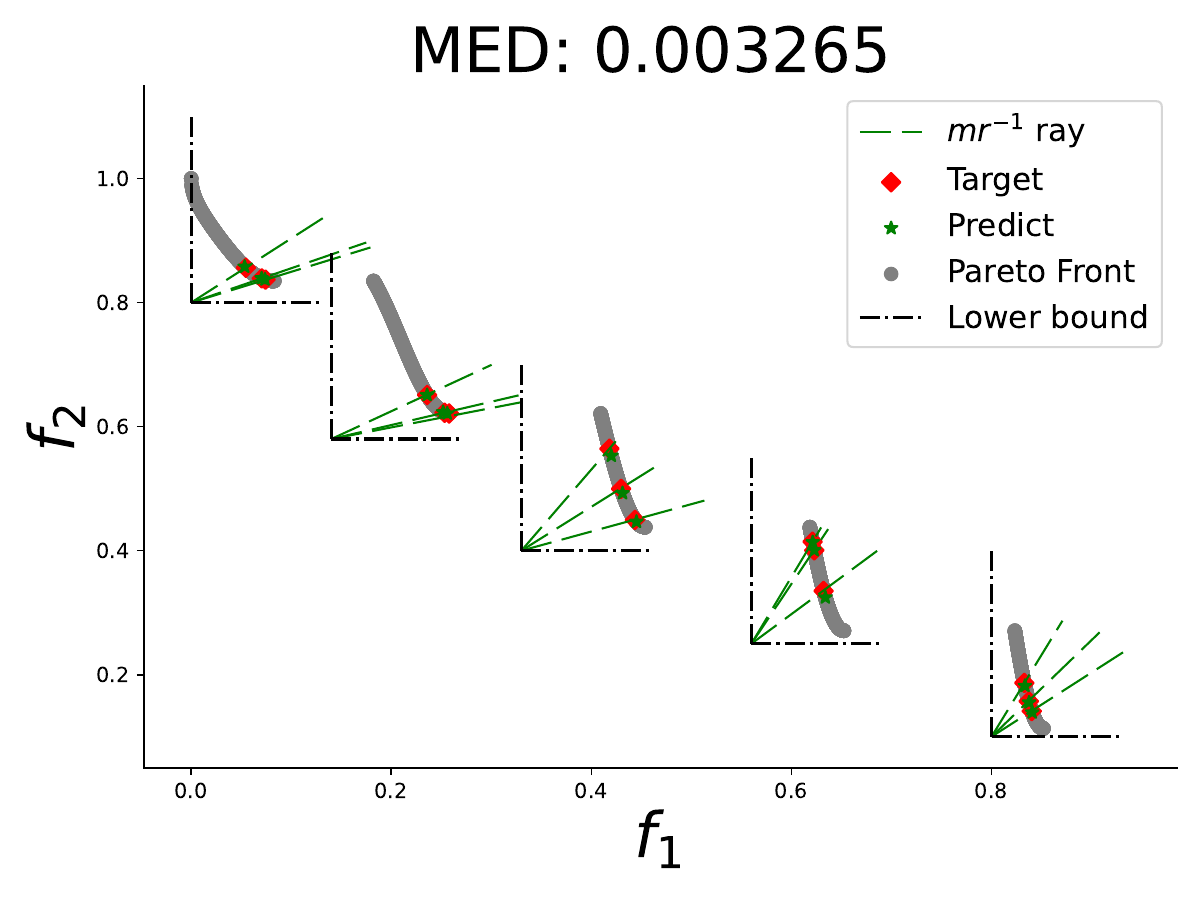}
     \end{subfigure}

    \begin{subfigure}[b]{0.45\textwidth}
         \centering
         \includegraphics[width=\textwidth]{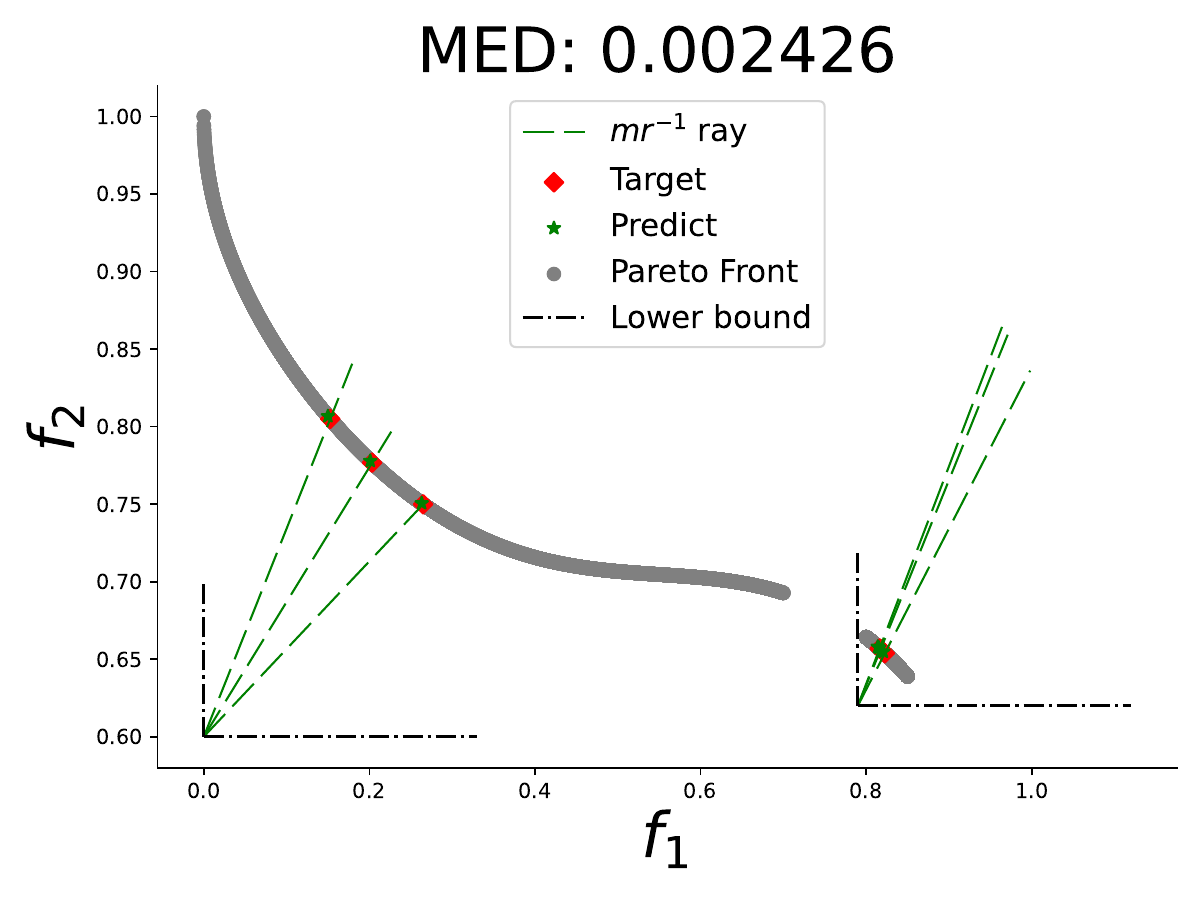}
         \caption{Join Input Hyper-Transformer}
     \end{subfigure}
     \hfill
     \begin{subfigure}[b]{0.45\textwidth}
         \centering
         \includegraphics[width=\textwidth]{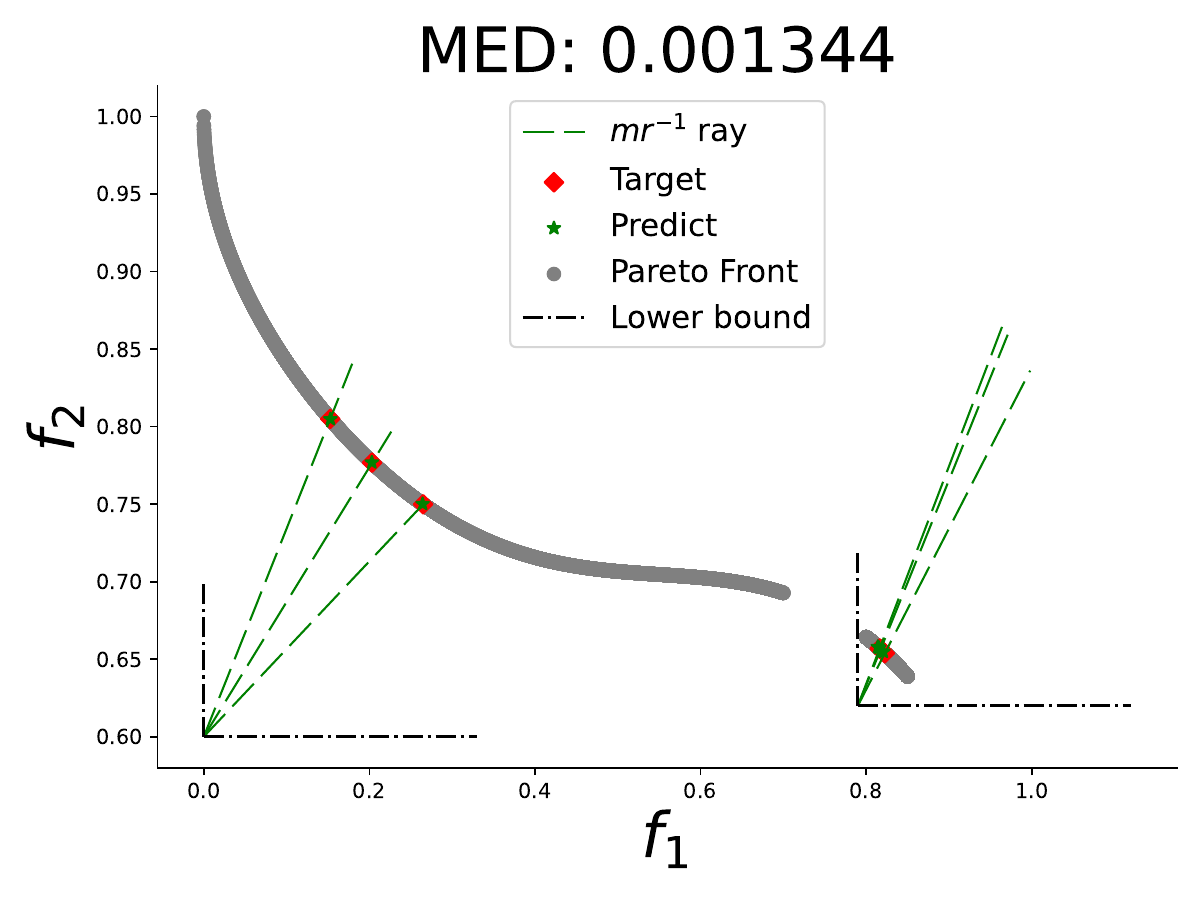}
         \caption{Mixture of Experts Hyper-Transformer}
     \end{subfigure}
      \caption{The Controllable Pareto Front Learning by Split Feasibility Constraints method achieves an exact mapping between the predicted solution of Hypernetwork and the truth solution, as illustrated in Examples \ref{DTLZ7} (top), \ref{ZDT3} (middle), and \ref{ZDT3_variant} (bottom).}
      \label{fig:disconnect}
\end{figure*}

\newpage
%% Loading bibliography style file
% \bibliographystyle{model1-num-names}
\bibliographystyle{cas-model2-names}

% Loading bibliography database
\bibliography{cas-refs}

\end{document}